\documentclass{article} 
\usepackage{configuration/iclr2026_conference,times}

\usepackage{hyperref}
\usepackage{url}
\usepackage{xcolor}

\definecolor{color1}{HTML}{406058}
\definecolor{color2}{HTML}{78A0C8}
\definecolor{color3}{HTML}{3850A0}
\definecolor{color4}{HTML}{6860A8}
\definecolor{color5}{HTML}{A03850}
\definecolor{color6}{HTML}{AE7694}
\definecolor{color7}{HTML}{8488B4}
\hypersetup{
    colorlinks=true,
    linkcolor=black,
    citecolor=color1,
    urlcolor=color2,
    pdfborder={0 0 0}
}
\usepackage[T1]{fontenc}

\title{Self-Adversarial One Step Generation via Condition Shifting}

\author{%
  Deyuan Liu$^{1*}$ \,
  Peng Sun$^{2,1}\thanks{Equal contribution}$ \,
  Yansen Han$^{2,1}$ \,
  Zhenglin Cheng$^{3,2,1}$ \,
  Chuyan Chen$^{4,1}$ \,
  \textbf{Tao Lin}$^{1,}\thanks{Corresponding author.}$ \\ [0.5ex] 
  $^{1}$Westlake University \quad
  $^{2}$Zhejiang University \quad
  $^{3}$Shanghai Innovation Institute \quad
  $^{4}$Peking University \\
}

\usepackage{amsmath,amssymb,amsthm}

\providecommand{\R}{\mathbb{R}} %

\providecommand{\E}{{\mathbb E}}
\providecommand{\E}[1]{{\mathbb E}\left.#1\right. }        %
\providecommand{\EEb}[2]{{\mathbb E}_{#1}\left[#2\right] } %

\providecommand{\dm}{\mathrm{d}}

\providecommand{\cc}{\mathbf{c}}

\renewcommand{\ss}{\mathbf{s}}

\providecommand{\vv}{\mathbf{v}}

\providecommand{\xx}{\mathbf{x}}
\providecommand{\yy}{\mathbf{y}}
\providecommand{\zz}{\mathbf{z}}

\providecommand{\mI}{\mathbf{I}}

\providecommand{\mf}{\boldsymbol{f}}

\providecommand{\mmF}{\boldsymbol{F}}

\providecommand{\mtheta}{\boldsymbol{\theta}}

\providecommand{\cC}{\mathcal{C}}

\providecommand{\cG}{\mathcal{G}}

\providecommand{\cL}{\mathcal{L}}

\providecommand{\cN}{\mathcal{N}}

\newenvironment{talign*}
{\csname align*\endcsname}
{\endalign}

\usepackage[utf8]{inputenc}         %
\usepackage[T1]{fontenc}            %
\usepackage{url}                    %
\usepackage{booktabs}               %
\usepackage{amsfonts}               %
\usepackage{nicefrac}               %
\usepackage{microtype}              %
\usepackage{xcolor}                 %
\usepackage{algorithm}
\usepackage{algorithmic}
\usepackage{graphicx}
\usepackage{subcaption}
\usepackage[flushleft]{threeparttable}
\usepackage{float}
\usepackage{multirow}
\usepackage{xspace}
\usepackage{natbib}
\usepackage{enumitem}
\usepackage[font=small]{caption}
\usepackage{autobreak}
\usepackage{sidecap}
\usepackage{wrapfig}
\usepackage{bbding}
\usepackage[toc, page, header]{appendix}
\usepackage{tikz}
\usepackage{xcolor}
\usepackage{pifont}
\usepackage{mdframed}
\usepackage{colortbl}
\usepackage{mathrsfs}
\usepackage{footmisc}

\definecolor{coral}{RGB}{255,127,80}
\definecolor{darkgreen}{RGB}{0,100,0}
\definecolor{darkyellow}{RGB}{204,153,0}
\definecolor{salmon}{RGB}{250,128,114}
\definecolor{champagne}{RGB}{247,231,206}
\definecolor{kleinblue}{RGB}{0,47,167}
\definecolor{bauhiniapurple}{RGB}{193,84,193}
\definecolor{darkred}{RGB}{150,0,0}

\newcommand{\darkredtext}[1]{{\color{darkred}#1}}

\newcommand{\thmref}[1]{\hyperref[#1]{\darkredtext{Thm.~\ref*{#1}}}}
\newcommand{\defref}[1]{\hyperref[#1]{\darkredtext{Def.~\ref*{#1}}}}
\newcommand{\propref}[1]{\hyperref[#1]{\darkredtext{Prop.~\ref*{#1}}}}
\newcommand{\assumpref}[1]{\hyperref[#1]{\darkredtext{Assump.~\ref*{#1}}}}
\newcommand{\remarkref}[1]{\hyperref[#1]{\darkredtext{Rem.~\ref*{#1}}}}
\newcommand{\hypref}[1]{\hyperref[#1]{\darkredtext{Hyp.~\ref*{#1}}}}
\newcommand{\conjref}[1]{\hyperref[#1]{\darkredtext{Conj.~\ref*{#1}}}}
\newcommand{\lemref}[1]{\hyperref[#1]{\darkredtext{Lem.~\ref*{#1}}}}
\newcommand{\corref}[1]{\hyperref[#1]{\darkredtext{Cor.~\ref*{#1}}}}
\newcommand{\noteref}[1]{\hyperref[#1]{\darkredtext{Nota.~\ref*{#1}}}}
\newcommand{\claimref}[1]{\hyperref[#1]{\darkredtext{Clm.~\ref*{#1}}}}
\newcommand{\obsref}[1]{\hyperref[#1]{\darkredtext{Obs.~\ref*{#1}}}}

\newcommand{\transparentteal}[1]{%
  \tikz[baseline=(X.base)] \node[fill=teal, fill opacity=0.1, text opacity=1, inner sep=2pt, outer sep=0pt] (X) {#1};%
}
\newcommand{\figref}[1]{\hyperref[#1]{\transparentteal{Figure~\ref*{#1}}}}

\newcommand{\transparentdarkgreen}[1]{%
  \tikz[baseline=(X.base)] \node[fill=darkgreen, fill opacity=0.1, text opacity=1, inner sep=2pt, outer sep=0pt] (X) {#1};%
}
\newcommand{\tabref}[1]{\hyperref[#1]{\transparentdarkgreen{Table~\ref*{#1}}}}

\newcommand{\transparentdarkyellow}[1]{%
  \tikz[baseline=(X.base)] \node[fill=darkyellow, fill opacity=0.1, text opacity=1, inner sep=2pt, outer sep=0pt] (X) {#1};%
}
\newcommand{\secref}[1]{\hyperref[#1]{\transparentdarkyellow{Section~\ref*{#1}}}}

\newcommand{\transparentcoral}[1]{%
  \tikz[baseline=(X.base)] \node[fill=coral, fill opacity=0.1, text opacity=1, inner sep=2pt, outer sep=0pt] (X) {#1};%
}
\newcommand{\appref}[1]{\hyperref[#1]{\transparentcoral{Appendix~\ref*{#1}}}}

\newcommand{\transparentkleinblue}[1]{%
  \tikz[baseline=(X.base)] \node[fill=kleinblue, fill opacity=0.1, text opacity=1, inner sep=2pt, outer sep=0pt] (X) {#1};%
}
\newcommand{\algref}[1]{\hyperref[#1]{\transparentkleinblue{Algorithm~\ref*{#1}}}}

\newcommand{\transparentbauhiniapurple}[1]{%
  \tikz[baseline=(X.base)] \node[fill=bauhiniapurple, fill opacity=0.1, text opacity=1, inner sep=2pt, outer sep=0pt] (X) {#1};%
}
\newcommand{\equref}[1]{\hyperref[#1]{\transparentbauhiniapurple{Equation~\ref*{#1}}}}

\newtheoremstyle{custom}
{1pt} 
{1pt} 
{\itshape} 
{} 
{\bfseries} 
{} 
{ } 
{\thmname{#1} \thmnumber{#2} \thmnote{(#3)} . } 

\theoremstyle{custom}

\newtheorem{innerdefinition}{Definition}
\newtheorem{innerproposition}{Proposition}
\newtheorem{innerassumption}{Assumption}
\newtheorem{innerremark}{Remark}
\newtheorem{innertheorem}{Theorem}
\newtheorem{innerhypothesis}{Hypothesis}
\newtheorem{innerconjecture}{Conjecture}
\newtheorem{innerlemma}{Lemma}
\newtheorem{innercorollary}{Corollary}
\newtheorem{innerexample}{Example}
\newtheorem{innernotation}{Notation}
\newtheorem{innerclaim}{Claim}
\newtheorem{innerproblem}{Problem}

\newtheorem{innerobservation}{Observation}

\newmdenv[
  backgroundcolor=gray!10,
  linecolor=gray!100,
  linewidth=0.01pt,
  skipabove=2pt,
  skipbelow=2pt,
  innertopmargin=10pt,
  innerbottommargin=5pt,
  innerleftmargin=5pt,
  innerrightmargin=5pt,
]{definitionframe}

\newmdenv[
  backgroundcolor=blue!10,
  linecolor=blue!100,
  linewidth=0.01pt,
  skipabove=2pt,
  skipbelow=2pt,
  innertopmargin=10pt,
  innerbottommargin=5pt,
  innerleftmargin=5pt,
  innerrightmargin=5pt,
]{propositionframe}

\newmdenv[
  backgroundcolor=green!10,
  linecolor=green!100,
  linewidth=0.01pt,
  skipabove=2pt,
  skipbelow=2pt,
  innertopmargin=10pt,
  innerbottommargin=5pt,
  innerleftmargin=5pt,
  innerrightmargin=5pt,
]{assumptionframe}

\newmdenv[
  backgroundcolor=yellow!10,
  linecolor=yellow!100,
  linewidth=0.01pt,
  skipabove=2pt,
  skipbelow=2pt,
  innertopmargin=10pt,
  innerbottommargin=5pt,
  innerleftmargin=5pt,
  innerrightmargin=5pt,
]{remarkframe}

\newmdenv[
  backgroundcolor=red!10,
  linecolor=red!100,
  linewidth=0.01pt,
  skipabove=2pt,
  skipbelow=2pt,
  innertopmargin=10pt,
  innerbottommargin=5pt,
  innerleftmargin=5pt,
  innerrightmargin=5pt,
]{theoremframe}

\newmdenv[
  backgroundcolor=purple!10,
  linecolor=purple!100,
  linewidth=0.01pt,
  skipabove=2pt,
  skipbelow=2pt,
  innertopmargin=10pt,
  innerbottommargin=5pt,
  innerleftmargin=5pt,
  innerrightmargin=5pt,
]{hypothesisframe}

\newmdenv[
  backgroundcolor=orange!10,
  linecolor=orange!100,
  linewidth=0.01pt,
  skipabove=2pt,
  skipbelow=2pt,
  innertopmargin=10pt,
  innerbottommargin=5pt,
  innerleftmargin=5pt,
  innerrightmargin=5pt,
]{conjectureframe}

\newmdenv[
  backgroundcolor=cyan!10,
  linecolor=cyan!100,
  linewidth=0.01pt,
  skipabove=2pt,
  skipbelow=2pt,
  innertopmargin=10pt,
  innerbottommargin=5pt,
  innerleftmargin=5pt,
  innerrightmargin=5pt,
]{lemmaframe}

\newmdenv[
  backgroundcolor=magenta!10,
  linecolor=magenta!100,
  linewidth=0.01pt,
  skipabove=2pt,
  skipbelow=2pt,
  innertopmargin=10pt,
  innerbottommargin=5pt,
  innerleftmargin=5pt,
  innerrightmargin=5pt,
]{corollaryframe}

\newmdenv[
  backgroundcolor=lime!10,
  linecolor=lime!100,
  linewidth=0.01pt,
  skipabove=2pt,
  skipbelow=2pt,
  innertopmargin=10pt,
  innerbottommargin=5pt,
  innerleftmargin=5pt,
  innerrightmargin=5pt,
]{exampleframe}

\newmdenv[
  backgroundcolor=pink!10,
  linecolor=pink!100,
  linewidth=0.01pt,
  skipabove=2pt,
  skipbelow=2pt,
  innertopmargin=10pt,
  innerbottommargin=5pt,
  innerleftmargin=5pt,
  innerrightmargin=5pt,
]{notationframe}

\newmdenv[
  backgroundcolor=violet!10,
  linecolor=violet!100,
  linewidth=0.01pt,
  skipabove=2pt,
  skipbelow=2pt,
  innertopmargin=10pt,
  innerbottommargin=5pt,
  innerleftmargin=5pt,
  innerrightmargin=5pt,
]{claimframe}

\newmdenv[
  backgroundcolor=salmon!10,
  linecolor=salmon!100,
  linewidth=0.01pt,
  skipabove=2pt,
  skipbelow=2pt,
  innertopmargin=10pt,
  innerbottommargin=5pt,
  innerleftmargin=5pt,
  innerrightmargin=5pt,
]{problemframe}

\newmdenv[
  backgroundcolor=lavender!10,
  linecolor=lavender!100,
  linewidth=0.01pt,
  skipabove=2pt,
  skipbelow=2pt,
  innertopmargin=10pt,
  innerbottommargin=5pt,
  innerleftmargin=5pt,
  innerrightmargin=5pt,
]{observationframe}

\newenvironment{proposition}
{\begin{propositionframe}\begin{innerproposition}}
{\end{innerproposition}\end{propositionframe}}

\newenvironment{theorem}
{\begin{theoremframe}\begin{innertheorem}}
{\end{innertheorem}\end{theoremframe}}

\newenvironment{lemma}
{\begin{lemmaframe}\begin{innerlemma}}
{\end{innerlemma}\end{lemmaframe}}

\newenvironment{corollary}
{\begin{corollaryframe}\begin{innercorollary}}
{\end{innercorollary}\end{corollaryframe}}

\usepackage[textwidth=3.5cm]{todonotes}

\newcommand{\method}{\textsc{APEX}\xspace}

\newcommand{\sg}{\mathop{{\color{red}\operatorname{sg}}}\nolimits}

\usepackage{multirow}
\usepackage{makecell}
\usepackage{colortbl}

\usepackage{bm}

\definecolor{ForestGreen}{RGB}{34,139,34}
\definecolor{OrangeRed}{RGB}{255,69,0}

\iclrfinalcopy 
\begin{document}

\maketitle

\begin{abstract}
    The push for efficient text to image synthesis has moved the field toward one step sampling, yet existing methods still face a three way tradeoff among fidelity, inference speed, and training efficiency.
    Approaches that rely on external discriminators can sharpen one step performance, but they often introduce training instability, high GPU memory overhead, and slow convergence, which complicates scaling and parameter efficient tuning.
    In contrast, regression based distillation and consistency objectives are easier to optimize, but they typically lose fine details when constrained to a single step.
    We present \textcolor{color3}{\textbf{\method}}, built on a key theoretical insight: adversarial correction signals can be extracted \emph{endogenously} from a flow model through \emph{condition shifting}.
    Using a transformation creates a shifted condition branch whose velocity field serves as an independent estimator of the model's current generation distribution, yielding a gradient that is provably GAN aligned, replacing the sample dependent discriminator terms that cause gradient vanishing.
    This discriminator free design is architecture preserving, making \method a plug and play framework compatible with both full parameter and LoRA based tuning.
    Empirically, our \textcolor{color3}{\textbf{0.6B}} model surpasses FLUX-Schnell \textcolor{color3}{\textbf{12B}} (20$\times$ more parameters) in one step quality.
    With LoRA tuning on Qwen-Image \textcolor{color3}{\textbf{20B}}, \method reaches a GenEval score of \textcolor{color3}{\textbf{0.89}} at \textcolor{color3}{\textbf{NFE\,{=}\,1}} in \textcolor{color3}{\textbf{6 hours}}, surpassing the original 50-step teacher (0.87) and providing a \textcolor{color3}{\textbf{15.33$\times$}} inference speedup.
    Code is available \href{https://github.com/LINs-lab/APEX}{here}.
\end{abstract}

\begin{figure*}[h]
    \centering
    \includegraphics[width=\linewidth]{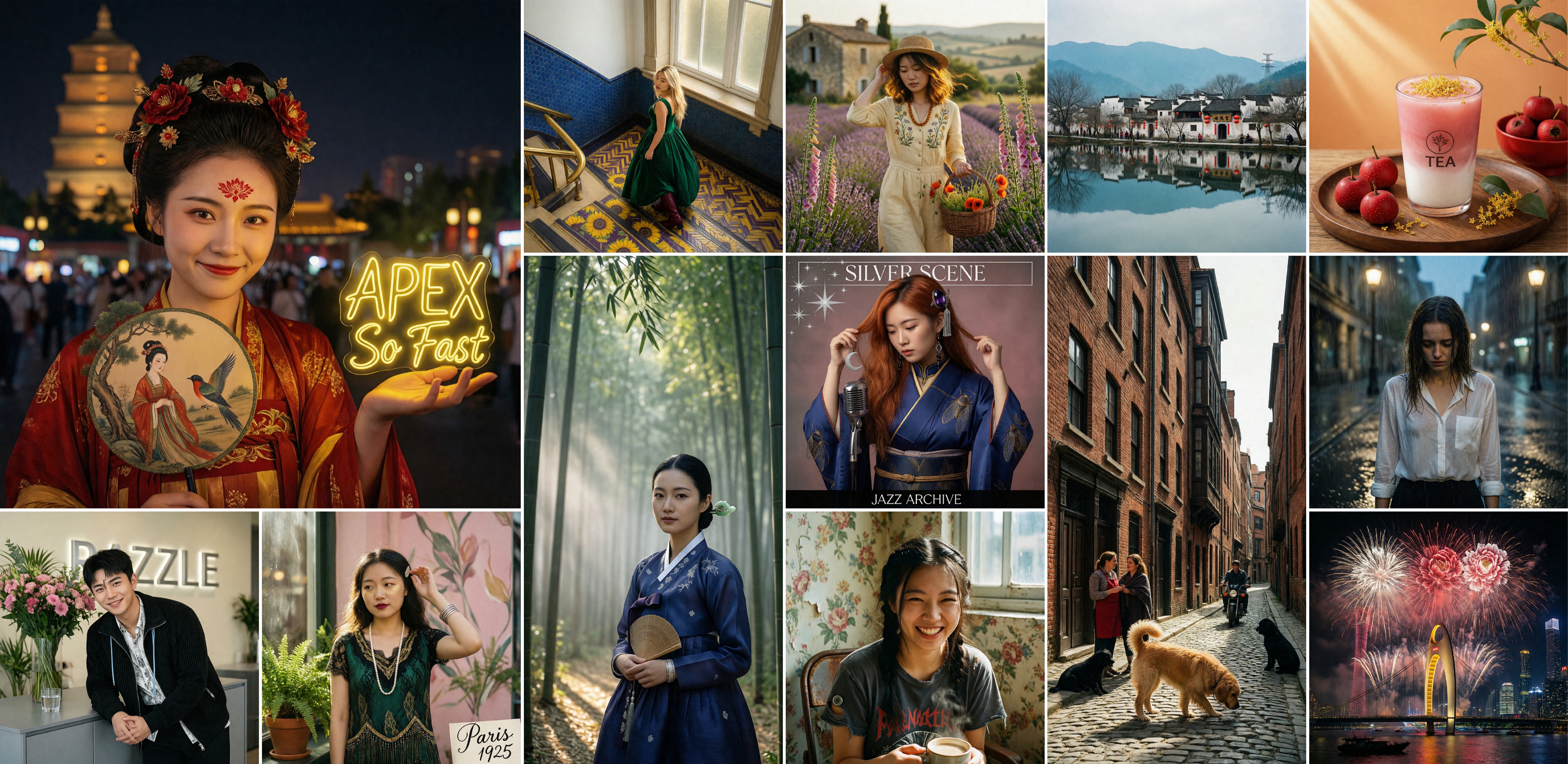}
    \caption{\textbf{An overview of generated images.}}
    \label{fig:overview}
\end{figure*}

\section{Introduction}
Continuous generative models now achieve strong fidelity across domains, from photorealistic image synthesis~\citep{dhariwal2021diffusion,karras2024analyzing} to video generation~\citep{ho2022video,chen2025sana}.
This progress is largely driven by diffusion models~\citep{ho2020denoising,dhariwal2021diffusion} and flow matching frameworks~\citep{lipman2022flow,ma2024sit}, which sample by integrating a Probability Flow Ordinary Differential Equation (PF-ODE), from noise to data~\citep{song2020score}.
The same iterative paradigm also dominates inference cost: multi step integration often requires tens of function evaluations and can be prohibitively expensive~\citep{karras2024analyzing,nichol2021improved}, motivating sustained interest in one step synthesis~\citep{song2023consistency,salimans2022progressive,yin2024improved}.
\looseness=-1

Achieving number of function evaluations \textcolor{color3}{(NFE) = 1} at high resolution exposes a persistent trilemma among generation quality, inference efficiency, and training efficiency~\citep{song2023consistency,lu2024simplifying,yin2024improved,sauer2024fast}.
External adversarial components like a discriminator or auxiliary critic can improve one step realism, but they often hurt scalability by introducing training instability and additional system overhead~\citep{yin2024improved,kim2023refining,zheng2025direct}.
This overhead becomes especially costly when scaling pretrained backbones or doing parameter efficient tuning.
In contrast, regression based distillation~\citep{yin2024one} and consistency style objectives~\citep{song2023consistency,sun2025anystep} are typically easier to optimize, yet they often struggle to match adversarial realism in one step, especially for high frequency textures and fine details~\citep{song2023consistency,lu2024simplifying,geng2025mean,sun2025unified}.
Complementary to these lines, a recent work, TwinFlow~\citep{cheng2025twinflow}, also explores \emph{self adversarial} methods that build adversarial signals by model itself.

\begin{mdframed}[
    hidealllines=true,
    backgroundcolor=color4!10,
    innerleftmargin=20pt,
    innerrightmargin=20pt,
    innertopmargin=1pt,
    innerbottommargin=8pt,
    roundcorner=1pt,
    linewidth=1pt,
    linecolor=color3
  ]
  \noindent
  \begin{center}
    \textbf{$\mathbb{Q}$: How can we obtain GAN level one step fidelity at \textcolor{color3}{\textbf{NFE=1}} \emph{without} an external discriminator, while remaining scalable to large pretrained backbones and parameter efficient tuning?}
  \end{center}
\end{mdframed}

\textbf{Our approach.} We introduce \textcolor{color3}{\textbf{\method}}, built on a key theoretical insight: the adversarial correction signal that GANs derive from an external discriminator can be generated \emph{endogenously} within a flow model by separating real and fake scores in \emph{condition space}.
Concretely, \method constructs a \emph{shifted condition} $\cc_{\text{fake}} = \mathbf{A}\cc + \mathbf{b}$ via an affine transformation and trains the model under $\cc_{\text{fake}}$ to fit trajectories toward its current one step outputs.
This shifted condition branch provides an independent estimator of the fake distribution's velocity field, enabling the main branch under the true condition $\cc$ to receive an adversarial correction signal.

We also show that \method admits a GAN aligned gradient interpretation.
Under the Optimal Transport path, the score velocity duality connects velocity regression to score matching, allowing us to express \method's update in the same canonical score difference form as GANs.
Crucially, while GANs weight the score difference using sample dependent discriminator terms such as $D^*$ or $1-D^*$, \method corresponds to a \emph{constant weight} with a \emph{target score} induced by condition shifting.
This yields stable, discriminator free signals while preserving an adversarial force toward photorealism.

\textbf{Our main contributions are:}
\begin{enumerate}[label=\alph*., nosep, leftmargin=16pt]
  \item \textbf{Theoretical Foundation --- GAN Aligned Gradient with Constant Weight:} We establish a formal gradient level equivalence between \method and GAN dynamics via score velocity duality (\secref{subsec:gan_connection}), proving that \method's training gradient takes the canonical score difference form $(\ss_\theta - \ss_{\text{mix}}) \cdot \partial\xx_t/\partial\theta$ with \emph{constant} weight $w \equiv 1$ and an implicit \emph{score interpolation} target $\ss_{\text{mix}} = (1{-}\lambda)\ss_{\text{data}} + \lambda\ss_{\text{fake}}$, connecting \method to Fisher divergence minimization and explaining why it avoids the gradient instability of sample dependent discriminator weights.
  \item \textbf{Methodology --- Self Adversarial Framework via Condition Shifting:} We propose \textcolor{color3}{\textbf{\method}}, a discriminator free framework using an affine condition shift $\cc_{\text{fake}}=\mathbf{A}\cc+\mathbf{b}$ to generate an endogenous adversarial signal for one step, high resolution text to image synthesis. This design makes \method a plug and play replacement fully compatible with LoRA and other parameter efficient fine tuning pipelines.
  \item \textbf{SOTA Performance and Scalability:} Our \textcolor{color3}{\textbf{0.6B}} model surpasses FLUX-Schnell \textcolor{color3}{\textbf{12B}}  in one step quality at \textcolor{color3}{\textbf{NFE=1}}.
        With LoRA tuning on Qwen-Image \textcolor{color3}{\textbf{20B}}, \method reaches GenEval \textcolor{color3}{\textbf{0.89}} in \textcolor{color3}{\textbf{6 hours}}, surpassing the original 50 step teacher model (0.87).
\end{enumerate}

\section{Preliminaries}
\label{sec:preliminaries}

\paragraph{Continuous Generative Models.}
\label{sec:continuous-path}

Diffusion generative models~\citep{ho2020denoising,song2020score} and flow matching models~\citep{lipman2022flow} both describe a continuous time evolution that transports a simple prior $p(\zz) = \cN(\mathbf{0}, \mI)$ toward a complex data distribution $p_{\text{data}}(\xx)$.
While classical diffusion is formulated as a stochastic forward noising process and a reverse time SDE, it admits an equivalent deterministic sampler given by the Probability Flow ODE (PF-ODE) associated with the same score field~\citep{song2020score}.
We define a time dependent random variable $\xx_t$, $t \in [0, 1]$, as a linear interpolant between noise $\zz$ and data $\xx$:
\begin{equation}
  \xx_t = \alpha(t)\zz + \gamma(t)\xx \,.
  \label{eq:xt_interpolant}
\end{equation}
Typically, we adopt the Optimal Transport (OT) path with $\alpha(t)=t,\ \gamma(t)=1-t$, which satisfies the boundary conditions $\xx_1=\zz$ for pure noise and $\xx_0=\xx$ for pure data.
This interpolation path induces a velocity field $\vv(\xx_t,t)$, defining the PF-ODE for sample generation:
\begin{equation}
  \frac{\mathrm{d}\xx_t}{\mathrm{d}t} = \vv(\xx_t,t) \,.
  \label{eq:pf_ode}
\end{equation}
Given an estimate of $\vv_t$, we can numerically integrate Eq.~\eqref{eq:pf_ode} from $t=1$ to $t=0$ using standard ODE solvers (e.g., Euler~\citep{karras2022edm}) to generate samples.
For conditional generation with condition $\cc$, flow matching trains a neural network $\mmF_{\mtheta}(\xx_t, t, \cc)$ to approximate a target velocity field.
Along the OT path, conditional velocity of a particular pair $(\xx,\zz)$ is defined as the time derivative:
\begin{equation}
  \frac{\mathrm{d}}{\mathrm{d}t}(t\zz + (1-t)\xx) = \zz - \xx \,.
\end{equation}
This quantity is an unbiased regression target; minimizing a squared error loss recovers the population optimal conditional mean $\vv^*(\xx_t, t)$.
The standard FM loss is:
\begin{equation}
  \cL_{\text{FM}}({\mtheta}) = \EEb{\xx_t,\, \zz,\, t}{ \| \mmF_{\mtheta}(\xx_t, t, \cc) - (\zz - \xx) \|^2 } \,,
  \label{eq:fm_loss}
\end{equation}
where the expectation is taken over the joint distribution of $(t, \xx, \zz)$, ensuring that $\mmF_{\mtheta}$ recovers the vector field $\vv^*$ as the conditional expectation of the per sample velocity targets $\zz - \xx$ given $\xx_t$.

\paragraph{Score Velocity Duality.}
\label{sec:score-velocity}

Under the OT path, the score function of any marginal density $p_t$ and its population optimal velocity field are related by (proof in~\appref{app:score_velocity}):
\begin{equation}
  \ss_t(\xx_t) = - \frac{\xx_t + (1-t)\vv^*(\xx_t, t)}{t} \,.
  \label{eq:score_velocity_duality}
\end{equation}
Here $\vv^*(\xx_t,t)$ denotes the OT induced \emph{conditional} velocity field.
This \emph{Score Velocity Duality} provides a bidirectional bridge between score functions and the velocity field parameterized by $\mmF_\theta$.
We will apply it in \secref{subsec:kl} to convert the KL divergence gradient into velocity space, and in \secref{subsec:gan_connection} to express \method's gradient in score space and connect it to GAN dynamics.

\paragraph{Few Step Generation.}
To overcome the inference latency caused by ODE numerical integration requiring tens of steps (NFE=50\textasciitilde250), a series of few step generation techniques have emerged~\citep{song2023consistency,lu2024simplifying,frans2024one,geng2025mean}.

\emph{\textbf{(i) Endpoint consistency methods}} like Consistency Models (CM)~\citep{song2023consistency} attempt to directly learn the mapping from ODE trajectory to origin.
A consistency function $\mf_{\mtheta}(\xx_t, t)$ is trained to satisfy the self consistency property: for any two points $t, t'$ on the same trajectory, $\mf_{\mtheta}(\xx_t, t) = \mf_{\mtheta}(\xx_{t'}, t') = \xx_0$.
This uses a first order Taylor expansion to approximate the trajectory integral.

\emph{\textbf{(ii) Higher order methods}} generalize this approach.
RCGM~\citep{sun2025anystep} shows that CM and MeanFlow~\citep{geng2025mean} are first order special cases ($N=1$) of a more general framework.
RCGM introduces $N$-th order recursive integral approximation, using future multi step trajectory information to more accurately estimate the current velocity field.

\emph{\textbf{(iii) Self adversarial methods.}}
TwinFlow~\citep{cheng2025twinflow} introduces twin trajectories by extending the time domain to $t\in[-1,1]$: the positive half maps noise to real data, while the negative half maps noise to the model's current fake data.
First, it trains the model on fake trajectories via:
\begin{equation}
  \cL_{\text{TF}} = \EEb{\xx_t,\, \zz,\, t}{ \| \mmF_{\mtheta}(\xx^{\text{fake}}_t, t) - (\zz - \xx^{\text{fake}}) \|^2 } \,.
  \label{eq:twinflow_adv_loss}
\end{equation}
Then minimizes the velocity discrepancy between the real score $+t$ and the fake score $-t$ via a rectification loss, steering generation toward higher fidelity without an external discriminator:
\begin{small}
  \begin{equation}
    \cL_{\text{TF-rect}} = \EEb{\xx_t,\, \zz,\, t}{ \left\| \mmF_{\mtheta}(\xx_t, t) - \sg\!\left( \mmF_{\mtheta}(\xx_t, -t) + \Delta \vv \right) \right\|^2 } \,,
    \label{eq:twinflow_rectification}
  \end{equation}
\end{small}%
where $\Delta \vv$ accounts for the gap between real and fake velocity targets.
The two branches are separated by the \emph{sign} of the time input $t$ vs.\ $-t$; \method achieves the same structure via a simpler separation in \emph{condition space} $\cc$ vs.\ $\cc_{\text{fake}}$, as developed in \secref{sec:apex}.

\paragraph{GAN Dynamics and Score Difference Gradients.}
GAN generator updates take the form of a score difference signal $(\ss_\theta(\xx) - \ss_{\text{data}}(\xx))$ modulated by a sample dependent weight from the discriminator; we review this structure, as \method's gradient admits the same form; see \secref{subsec:gan_connection}.
Let $p_{\mtheta}(\xx)$, $p_{\text{data}}(\xx)$, $D(\xx)$ be the generator, data, and discriminator distributions, with $\ss(\xx) := \nabla_{\xx} \log p(\xx)$.
In the analysis below, $\xx$ denotes clean samples; in \secref{sec:apex} we generalize to time marginal scores $\ss_t(\xx_t)$.
Under the optimal discriminator $D^*(\xx) = \nicefrac{p_{\text{data}}(\xx)}{p_{\text{data}}(\xx) + p_{\mtheta}(\xx)}$~\citep{mohamed2016learning,goodfellow2014generative}, both GAN variants yield a generator gradient of the unified form:
\begin{equation}
  \nabla_{\mtheta} \cL_{\text{GAN}} \propto \EEb{\xx \sim p_{\mtheta}}{ w(\xx) \cdot (\ss_{\mtheta}(\xx) - \ss_{\text{data}}(\xx)) \cdot \frac{\partial \xx}{\partial \mtheta} } \,,
  \label{eq:gan_grad}
\end{equation}
where $w(\xx) = D^*(\xx)$ or $1{-}D^*(\xx)$ for the saturating and non saturating variants respectively.
This \emph{sample dependent} weight encodes discriminator confidence: it vanishes when samples are highly realistic, causing gradient vanishing, and varies unpredictably across training, introducing instability.
In \secref{subsec:gan_connection} we show that \method's gradient takes exactly this score difference form but with a \emph{constant} weight $w\equiv1$, achieving adversarial level correction without a discriminator.

\section{\method}
\label{sec:apex}

\method achieves discriminator free, architecture preserving, self adversarial training by separating the real and fake scores in \emph{condition space} rather than time space: an affine transformation $\cc_{\text{fake}} = \mathbf{A}\cc + \mathbf{b}$ creates the fake score entirely within $t\in[0,1]$, requiring no modification to time embeddings or model architecture.
We develop the method in three stages:
\begin{enumerate}[label=(\roman*), nosep, leftmargin=18pt]
  \item \textbf{Building the fake reference}: define $\cc_{\text{fake}}$ and the fake sample $\xx^{\text{fake}}$; train the shifted condition via $\cL_{\text{fake}}$ so that $\vv_{\text{fake}}$ serves as an independent estimator of $p_{\text{fake}}$'s velocity field.
  \item \textbf{KL descent and practical loss}: show that the velocity discrepancy $\Delta\vv_{\text{\method}}$ is the exact descent direction on $D_{\text{KL}}(p_{\text{fake}}\|p_{\text{real}})$; convert it into the consistency loss $\cL_{\text{mix}}$ via endpoint equivalence.
  \item \textbf{GAN aligned gradient structure}: analyze the gradient in score space and show it is a GAN style score difference update with weight $w\equiv1$, connecting to Fisher divergence minimization.
\end{enumerate}
\looseness=-1

\subsection{Building the Adversarial Reference via Condition Shifting} \label{subsec:key_concepts}

\paragraph{Condition Space as the Separation Dimension.}
The two branch self adversarial structure requires a signal that distinguishes the real score from the fake score.
TwinFlow uses the sign of the time input $t$ vs.\ $-t$ for this purpose; \method instead uses the \emph{condition input} $\cc$ vs.\ $\cc_{\text{fake}}$.
Both achieve the same structure, but the condition space choice means the time domain, positional encodings, and time scheduling of any pretrained backbone remain completely unchanged, making \method a plug and play replacement that is fully compatible with LoRA and other parameter efficient fine tuning pipelines without any adaptation of time embedding.

\paragraph{Condition Space Shifting and the Fake Sample.}
In particular, we use the OT interpolant in Eq.~\eqref{eq:xt_interpolant} with $\alpha(t)=t$ and $\gamma(t)=1-t$, so that $\xx_1=\zz$ and $\xx_0=\xx$.
We denote the conditional velocity field by $\vv_{\mtheta}(\xx_t, t, \cc)$, parameterized by a neural network $\mmF_{\mtheta}(\xx_t, t, \cc)$.
We denote $\sg(\cdot)$ as the stop gradient operator.
Unless otherwise specified, all flows share the same interpolant family $\alpha(t),\gamma(t)$ and time weighting $\omega(t)$.
We introduce a fake condition $\cc_{\text{fake}}$, obtained through Self Condition Shifting of the original condition $\cc$:
\begin{equation}
  \cc_{\text{fake}} = \mathbf{A}\cc + \mathbf{b} \,,
  \label{eq:cond_shift}
\end{equation}
where $\mathbf{A}$ and $\mathbf{b}$ can be learnable parameter matrices/vectors or preset transformations.

\textbf{Why affine shifting?}
The self adversarial design requires two properties of $\cc_{\text{fake}}$: (i) it must be \emph{sufficiently distinct} from $\cc$ so that the network's internal representations under the two conditions decouple, allowing $\vv_{\text{fake}}$ to serve as an independent estimator of $p_{\text{fake}}$'s velocity; and (ii) it must remain \emph{within the pretrained condition embedding space} so that the network can produce semantically coherent outputs.
An affine map $\cc_{\text{fake}} = \mathbf{A}\cc + \mathbf{b}$ is the most general linear class of transformations satisfying both: it preserves the algebraic structure of the embedding space while enabling strong representational decoupling when $\mathbf{A}$ reverses or attenuates the condition's semantic direction.
In particular, negative scaling $\mathbf{A} = -a\mI$, $a > 0$ approximately inverts the condition embedding, creating a maximally contrastive branch that is consistent with our ablation finding that $a \in \{-1.0, -0.5\}$ yields the most robust performance in \tabref{tab:condshift}.

\paragraph{Self Adversarial Objective.}
\method's first stage trains the shifted condition branch to become an independent velocity estimator of the model's current generation distribution $p_{\text{fake}}$.
We require the model to reconstruct its currently generated outputs when receiving the shifted condition $\cc_{\text{fake}}$.
Under the OT path, we define an endpoint predictor that maps a velocity estimate at $(\xx_t,t)$ to its implied clean sample:
\begin{equation}
  \mf^{\xx}(\mmF, \xx_t, t) := \xx_t - t \cdot \mmF \,.
  \label{eq:endpoint_predictor}
\end{equation}
Given a noisy sample $\xx_t$ at time $t$ along the OT path in Eq.~\eqref{eq:xt_interpolant}, the model's implied clean data estimate under the real condition $\cc$ is:
\begin{equation}
  \xx^{\text{fake}} = \mf^{\xx}\!\left(\mmF_{\mtheta}(\xx_t, t, \cc),\, \xx_t,\, t\right) = \xx_t - t \cdot \mmF_{\mtheta}(\xx_t, t, \cc) \,.
  \label{eq:one_step_fake}
\end{equation}
When the model is imperfect, $\xx^{\text{fake}}$ deviates from the true $\xx$, capturing the model's current generation error.
We train the network under the shifted condition $\cc_{\text{fake}}$ to fit the trajectory toward $\xx^{\text{fake}}$.
Construct fake trajectory: $\xx^{\text{fake}}_t = \alpha(t)\zz + \gamma(t)\xx^{\text{fake}}$.
The fake flow loss is defined as:
\begin{small}
  \begin{equation}
    \cL_{\text{fake}}(\mtheta) = \EEb{\xx_t,\, \zz,\, t}{ \| \mmF_{\mtheta}(\xx^{\text{fake}}_t, t, \cc_{\text{fake}}) - (\zz - \xx^{\text{fake}}) \|^2 } \,.
    \label{eq:apex_fake_loss}
  \end{equation}
\end{small}%
Concretely, ${\partial \xx^{\text{fake}}}/{\partial \mtheta} = -t\cdot{\partial \mmF_{\mtheta}(\xx_t, t, \cc)}/{\partial \mtheta}$, so $\cL_{\text{fake}}$ simultaneously trains the $\cc_{\text{fake}}$ branch and injects a direct adversarial gradient into $\mmF_{\mtheta}(\cdot,\cdot,\cc)$.
The stop gradient in \method is applied separately in $\cL_{\text{cons}}$, where $\vv_{\text{fake}} := \sg\!\left(\mmF_{\mtheta}(\xx_t, t, \cc_{\text{fake}})\right)$ serves as a correction reference.
When $\cL_{\text{fake}}$ is minimized, $\vv_{\text{fake}}(\cdot,\cdot,\cc_{\text{fake}})$ approximates the velocity field of the fake distribution $p_{\text{fake}}$.
By training $\vv_{\text{fake}}$ on \emph{fake sample trajectories} $\xx^{\text{fake}}_t$, we obtain an estimator of $p_{\text{fake}}$'s velocity.
Second, we show how this independence is exploited to construct a KL descent signal.

\subsection{From Velocity Discrepancy to KL Descent and Practical Loss}

\label{subsec:kl}
\paragraph{KL Gradient in Velocity Space.}
Let $p_{\text{fake}}(\xx|\cc) := p_{\mtheta}(\xx|\cc)$ denote the model's current generation distribution and $p_{\text{real}}(\xx|\cc) := p_{\text{data}}(\xx|\cc)$ the true data distribution.
Our ultimate goal is to close the gap between $p_{\text{fake}}$ and $p_{\text{real}}$ by minimizing KL divergence $\min_{\mtheta} D_{\text{KL}}(p_{\text{fake}} \| p_{\text{real}})$.
The gradient of the KL divergence between $p_{\text{fake}}$ and $p_{\text{real}}$ admits a score difference form:
\looseness=-1
\begin{small}
  \begin{equation}
    \footnotesize
    \nabla_{\mtheta} D_{\text{KL}} \!=\! \EEb{\xx_t,\, \zz,\, t}{ (\nabla_{\xx_t} \log p_{\text{fake}}(\xx_t) \!-\! \nabla_{\xx_t} \log p_{\text{real}}(\xx_t)) \!\cdot \frac{\partial \xx_t}{\partial {\mtheta}} } \,.
    \label{eq:apex_kl_grad}
  \end{equation}
\end{small}%
Here, $\ss_t(\xx_t) := \nabla_{\xx_t} \log p_t(\xx_t)$ is the score function of the marginal density $p_t$ at time $t$.
We use the shorthand $\vv_{\text{data}}(\xx_t) := (\zz-\xx)$ for the supervised FM target velocity, and distinguish the two velocity fields by their gradient status:
\begin{equation}
  \begin{split}
    \vv_{\text{fake}}(\xx_t,t,\cc_{\text{fake}}) & := \sg\big(\mmF_{\mtheta}(\xx_t, t, \cc_{\text{fake}})\big) \,, \quad \vv_{\mtheta}(\xx_t, t, \cc) := \mmF_{\mtheta}(\xx_t, t, \cc) \,.
  \end{split}
  \label{eq:apex_v_def}
\end{equation}
By invoking the Score Velocity Duality defined in Eq.~\eqref{eq:score_velocity_duality}, we can analytically map the aforementioned velocity fields into the score space.
This transformation yields the following induced score for both the original and fake signal:
\begin{equation}
  \ss_{\mtheta}(\xx_t) := -\frac{\xx_t + (1-t)\vv_{\mtheta}(\xx_t,t,\cc)}{t},\quad
  \ss_{\text{fake}}(\xx_t) := -\frac{\xx_t + (1-t)\vv_{\text{fake}}(\xx_t,t,\cc_{\text{fake}})}{t} \,.
  \label{eq:score_approx_def}
\end{equation}
Substituting into Eq.~\eqref{eq:apex_kl_grad} (see~\appref{app:kl_velocity}), the KL gradient in velocity space is:
\begin{equation}
  \nabla_{\mtheta} D_{\text{KL}} = -\frac{1}{\omega(t)}\,\EEb{\xx_t,\, \zz,\, t}{ \bigl( \vv_{\mtheta}(\xx_t, t, \cc) - \vv_{\text{data}}(\xx_t) \bigr) \cdot \frac{\partial \xx_t}{\partial {\mtheta}} } \,,
  \label{eq:apex_kl_grad_velocity}
\end{equation}
where $\omega(t) = \frac{t}{1-t} > 0$.
The apparent equivalence dissolves once we recognize that the derivation treats $\vv_\theta$ itself as a proxy for the score of $p_{\text{fake}}$, its descent signal degenerates into self regression.
We replace this proxy with $\vv_{\text{fake}}$ the independent estimator of $p_{\text{fake}}$'s velocity field constructed in \secref{subsec:key_concepts}.
Because $\vv_{\text{fake}}$ was trained on \emph{fake sample trajectories}, it carries information about where $p_{\text{fake}}$ currently lies, providing a correction signal that goes beyond pure regression.
Substituting $\vv_{\text{fake}}$ for the fake score proxy in Eq.~\eqref{eq:apex_kl_grad_velocity}, we define the \method velocity correction signal:
\begin{equation}
  \Delta \vv_{\text{\method}}(\xx_t) := \vv_{\text{fake}}(\xx_t,t,\cc_{\text{fake}}) - \vv_{\mtheta}(\xx_t, t, \cc) \,.
  \label{eq:delta_v_apex}
\end{equation}

This difference measures the velocity discrepancy between $\vv_\theta$ under $\cc$ and $\vv_{\text{fake}}$ under $\cc_{\text{fake}}$, evaluated at the same $(\xx_t,t)$.
Because $\vv_{\text{fake}}$ is trained to track $p_{\text{fake}}$, $\Delta\vv_{\text{\method}}$ encodes the current deviation of the model's generation from the data.
We next construct a practical loss that combines this correction signal with data supervision, where the supervised component drives $\vv_\theta \to \vv_{\text{data}}$ and the fake correction component drives $\vv_\theta \to \vv_{\text{fake}}$; together they form an objective that steers $p_\theta$ toward $p_{\text{real}}$.

\begin{figure*}[!t]
  \centering
  \begin{subfigure}{\textwidth}
    \centering
    \includegraphics[width=\linewidth]{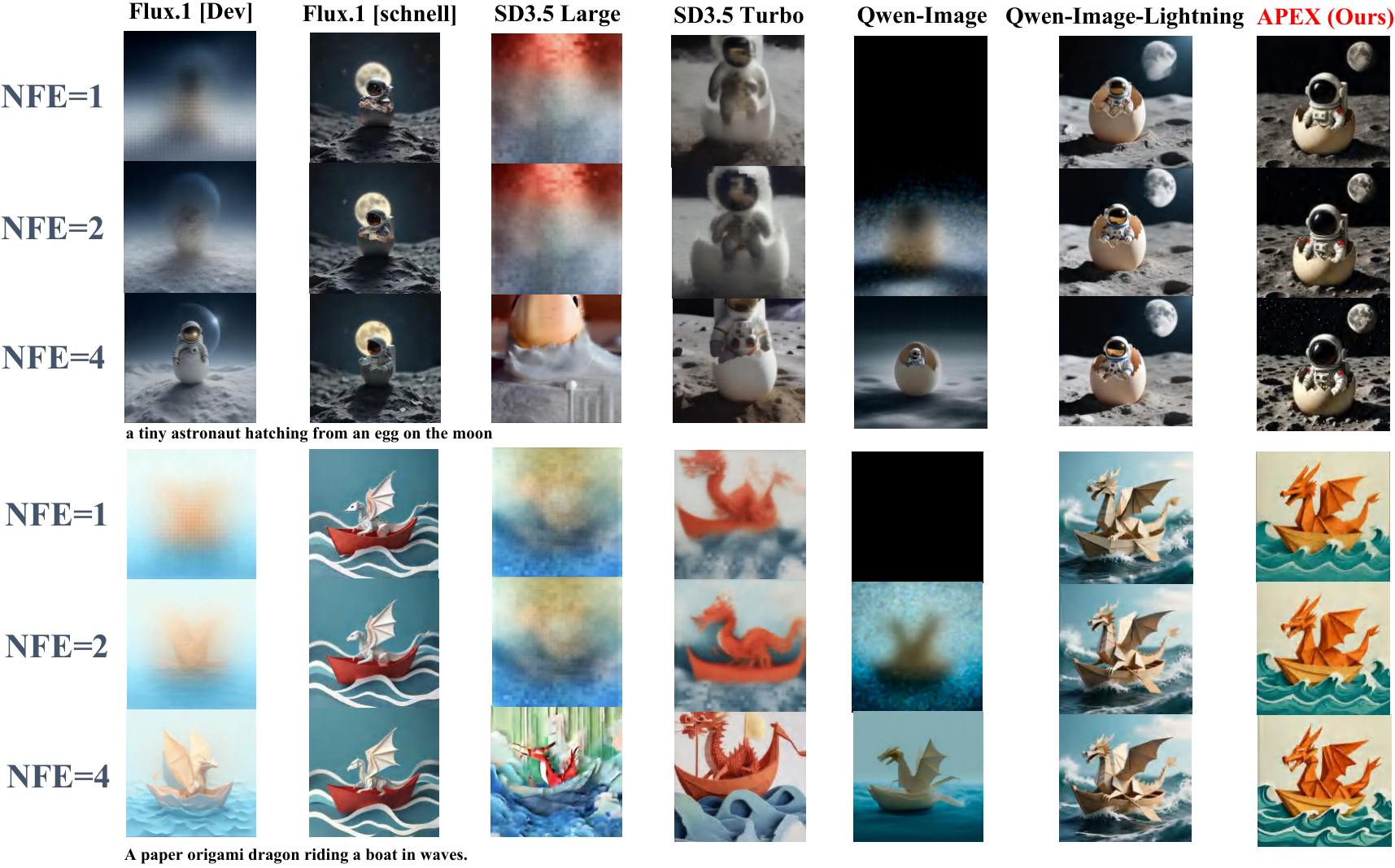}
  \end{subfigure}
  \caption{\small
    \textbf{Qualitative Analysis between \method and existing methods under different NFEs.}
  }
  \label{fig:nfe}
\end{figure*}

\paragraph{From Velocity Correction to Mixed Consistency Loss.}
$\Delta\vv_{\text{\method}}(\xx_t)$ is the KL descent direction: driving $\Delta\vv_{\text{\method}}\to\mathbf{0}$ minimizes $D_{\text{KL}}(p_{\text{fake}}\|p_{\text{real}})$.
$\vv_{\text{fake}}$ is trained on fake trajectories but queried at real trajectory points $\xx_t$; this deliberate asymmetry encodes $p_{\text{fake}}$'s current structure at real trajectory locations, providing a correction signal that breaks the self referential loop.
We convert the velocity objective to endpoint space: one can verify in~\appref{app:endpoint-proof} that velocity matching and endpoint matching are exactly interchangeable:
\begin{small}
  \begin{equation}
    \bigl\| \mf^{\xx}\!\left(\mmF_{\mtheta}, \xx_t, t\right) - \xx \bigr\|_2^2
    = t^2\bigl\| \mmF_{\mtheta} - \vv_{\text{data}}(\xx_t) \bigr\|_2^2,
    \label{eq:apex_endpoint_to_fm}
  \end{equation}
\end{small}
\begin{equation}
  \begin{split}
    \bigl\| \mf^{\xx}\!\left(\mmF_{\mtheta}, \xx_t, t\right) - \mf^{\xx}\!\left(\vv_{\text{fake}}, \xx_t, t\right) \bigr\|_2^2 = t^2\bigl\| \mmF_{\mtheta} - \vv_{\text{fake}}(\xx_t,t,\cc_{\text{fake}}) \bigr\|_2^2 \,.
  \end{split}
  \label{eq:apex_endpoint_to_align}
\end{equation}
Thus matching velocities or matching their induced endpoints are exactly interchangeable up to the scalar factor $t^2$.
We therefore define two endpoint space objectives corresponding to the supervised FM branch and the fake branch, respectively:
\begin{small}
  \begin{equation}
    \textstyle
    \cL_{\text{sup}}(\mtheta)
    \!=\!
    \EEb{\xx_t,\,\zz,\,t}{
      \frac{1}{\omega(t)}
      \bigl\| \mf^{\xx}\!\left(\mmF_{\mtheta}, \xx_t, t\right) - \xx \bigr\|_2^2
    },
    \label{eq:l_sup}
  \end{equation}
\end{small}
\begin{small}
  \begin{equation}
    \textstyle
    \cL_{\text{cons}}(\mtheta)
    \!=\!
    \EEb{\xx_t, \zz, t}{
      \frac{1}{\omega(t)}
      \bigl\| \mf^{\xx}\!\left(\mmF_{\mtheta}, \xx_t, t\right) \!-\! \mf^{\xx}\!\left(\vv_{\text{fake}}, \xx_t, t\right) \bigr\|_2^2
    },
    \label{eq:l_cons}
  \end{equation}
\end{small}
and combine them into the alternative loss:
\begin{equation}
  \cG_{\text{\method}}(\mtheta)
  =
  (1-\lambda)\,\cL_{\text{sup}}(\mtheta)
  +
  \lambda\,\cL_{\text{cons}}(\mtheta), \quad \lambda\in[0,1].
  \label{eq:g_apex}
\end{equation}
Here $\lambda \in [0,1]$ controls the balance between data supervision and self adversarial correction: $\lambda{=}0$ recovers the standard FM objective, $\lambda{=}1$ yields purely adversarial consistency training, and intermediate values blend both signals.
For later convenience we introduce the mixed endpoint target
\begin{equation}
  \mathbf{T}_{\text{mix}}(\xx_t, t)
  := (1-\lambda)\,\xx + \lambda\,\mf^{\xx}\!\left(\vv_{\text{fake}}, \xx_t, t\right),
  \label{eq:tmix}
\end{equation}
where $\vv_{\text{fake}} := \vv_{\text{fake}}(\xx_t,t,\cc_{\text{fake}})$.
Its score space counterpart the \emph{score interpolation} $\ss_{\text{mix}}$ defined in \secref{subsec:gan_connection} will reveal that $\mathbf{T}_{\text{mix}}$ corresponds to an implicit training target.
The corresponding mixed consistency loss is:
\begin{small}
  \begin{equation}
    \textstyle
    \cL_{\text{mix}}(\mtheta)
    \!=\!
    \EEb{\xx_t,\, \zz,\, t}{
      \frac{1}{\omega(t)}
      \bigl\| \mf^{\xx}\!\left(\mmF_{\mtheta}, \xx_t, t\right) \!-\! \mathbf{T}_{\text{mix}}(\xx_t, t) \bigr\|_2^2
    }.
    \label{eq:l_mix}
  \end{equation}
\end{small}%
A direct gradient calculation with detailed steps in~\appref{app:alt-loss-proof} shows that for any ${\mtheta}$ we have
$\nabla_{\mtheta}\cL_{\text{mix}}(\mtheta) = \nabla_{\mtheta}\cG_{\text{\method}}(\mtheta)$,
so optimizing the mixed endpoint regression in Eq.~\eqref{eq:l_mix} is exactly equivalent, in parameter space, to following the KL inspired alternative loss in Eq.~\eqref{eq:g_apex}.

\begin{table*}[t]
  \centering
  \caption{\textbf{System level comparison of efficiency and quality.}
    Speeds are on a single A100 (\textbf{BF16}).
    Throughput is samples/s (batch=10); latency is seconds (batch=1).
    \textbf{GenEval} is the \emph{primary} quality metric; FID/CLIP are reported for completeness.
    The \textbf{best} and \underline{second best} entries are highlighted.
    \dag~indicates methods requiring distinct models per NFE. \textbf{Notation:} \textcolor{color3}{Blue}=full tuning; \textcolor{color5}{Red}=LoRA; \textcolor{color3}{X.B}=trainable params (B); $r$=LoRA rank.
  }
  \small
  \vspace{-1em}
  \label{tab:main_comparison}{
    \scalebox{0.77}{
      \begin{tabular}{l|l|ccc|cccc}
        \toprule
        \multicolumn{2}{c|}{\multirow{2}{*}{\textbf{Methods}}} & \multirow{2}{*}{\textbf{NFEs}}                    & \textbf{Throughput}   & \textbf{Latency}         & \textbf{Params}          & \multirow{2}{*}{\textbf{FID~$\downarrow$}}                                                                      & \multirow{2}{*}{\textbf{CLIP~$\uparrow$}} & \multirow{2}{*}{\textbf{GenEval~$\uparrow$}}                                     \\
        \multicolumn{2}{c|}{}                                  &                                                   & \textbf{(samples/s)}  & \textbf{(s)}             & \textbf{(B)}             &                                                                                                                 &                                           &                                                                                  \\
        \midrule
        \multicolumn{1}{c|}{\multirow{32}{*}{\rotatebox{90}{\textbf{Few Step Distillation Models}}}}
                                                               & SDXL-LCM~\cite{luo2023latent}                     & 2                     & 2.89                     & 0.40                     & \textcolor{color3}{\textbf{0.9}}                                                                                & 18.11                                     & 27.51                                        & 0.44                              \\
                                                               & PixArt-LCM~\cite{chen2024pixartdelta}             & 2                     & 3.52                     & 0.31                     & \textcolor{color3}{\textbf{0.6}}                                                                                & 10.33                                     & 27.24                                        & 0.42                              \\
                                                               & SD3.5-Turbo~\cite{esser2024scaling}               & 2                     & 1.61                     & 0.68                     & \textcolor{color3}{\textbf{8.0}}                                                                                & 51.47                                     & 25.59                                        & 0.53                              \\
                                                               & PCM~\cite{wang2024phased}\dag                     & 2                     & 2.62                     & 0.56                     & \textcolor{color3}{\textbf{0.9}}                                                                                & 14.70                                     & 27.66                                        & 0.55                              \\
                                                               & SDXL-DMD2~\cite{yin2024improved}\dag              & 2                     & 2.89                     & 0.40                     & \textcolor{color3}{\textbf{0.9}}                                                                                & \textit{7.61}                             & 28.87                                        & 0.58                              \\
                                                               & FLUX-schnell~\citep{FLUX}                         & 2                     & 0.92                     & 1.15                     & \textcolor{color3}{\textbf{12.0}}                                                                               & 7.75                                      & 28.25                                        & \textit{0.71}                     \\
                                                               & Sana-Sprint~\citep{chen2025sana}                  & 2                     & 6.46                     & 0.25                     & \textcolor{color3}{\textbf{0.6}}                                                                                & \underline{6.54}                          & \textit{28.40}                               & \underline{0.76}                  \\
                                                               & Sana-Sprint~\citep{chen2025sana}                  & 2                     & 5.68                     & 0.24                     & \textcolor{color3}{\textbf{1.6}}                                                                                & 6.50                                      & \underline{28.45}                            & 0.77                              \\
                                                               & Qwen-Image-Lightning~\citep{Qwen-Image-Lightning} & 2                     & 3.15                     & 0.48                     & \textcolor{color5}{\textbf{20}} (\textcolor{color2}{r=64},\textcolor{color3}{\textbf{0.4}})                     & 6.76                                      & 28.37                                        & 0.85                              \\
                                                               & RCGM~\citep{sun2025anystep}                       & 2                     & 3.15                     & 0.48                     & \textcolor{color5}{\textbf{20}} (\textcolor{color2}{r=64},\textcolor{color3}{\textbf{0.4}})                     & 6.80                                      & 28.63                                        & 0.82                              \\
                                                               & TwinFlow~\citep{cheng2025twinflow}                & 2                     & 3.15                     & 0.48                     & \textcolor{color5}{\textbf{20}} (\textcolor{color2}{r=64},\textcolor{color3}{\textbf{0.4}})                     & 6.73                                      & 28.57                                        & 0.87                              \\
        \cmidrule{2-9}
                                                               & \bf \cellcolor{gray!20}\method                    & \cellcolor{gray!20} 2 & \cellcolor{gray!20} 6.50 & \cellcolor{gray!20} 0.25 & \cellcolor{gray!20} \textcolor{color3}{\textbf{0.6}}                                                            & \cellcolor{gray!20} 6.75                  & \cellcolor{gray!20} \textit{28.33}           & \cellcolor{gray!20} 0.84          \\
                                                               & \bf \cellcolor{gray!20}\method                    & \cellcolor{gray!20} 2 & \cellcolor{gray!20} 5.72 & \cellcolor{gray!20} 0.23 & \cellcolor{gray!20} \textcolor{color3}{\textbf{1.6}}                                                            & \cellcolor{gray!20} 6.42                  & \cellcolor{gray!20} \textit{28.24}           & \cellcolor{gray!20} 0.85          \\
        \cmidrule{2-9}
                                                               & \bf \cellcolor{gray!20}\method                    & \cellcolor{gray!20} 2 & \cellcolor{gray!20} 3.21 & \cellcolor{gray!20} 0.49 & \cellcolor{gray!20} \textcolor{color5}{\textbf{20}} (\textcolor{color2}{r=32},\textcolor{color3}{\textbf{0.2}}) & \cellcolor{gray!20} 6.72                  & \cellcolor{gray!20} \textit{28.71}           & \cellcolor{gray!20} 0.87          \\
                                                               & \bf \cellcolor{gray!20}\method                    & \cellcolor{gray!20} 2 & \cellcolor{gray!20} 3.17 & \cellcolor{gray!20} 0.47 & \cellcolor{gray!20} \textcolor{color5}{\textbf{20}} (\textcolor{color2}{r=64},\textcolor{color3}{\textbf{0.4}}) & \cellcolor{gray!20} 6.51                  & \cellcolor{gray!20} \textit{28.42}           & \cellcolor{gray!20} 0.89          \\
        \cmidrule{2-9}
                                                               & \bf \cellcolor{gray!20}\method                    & \cellcolor{gray!20} 2 & \cellcolor{gray!20} 3.30 & \cellcolor{gray!20} 0.45 & \cellcolor{gray!20} \textcolor{color3}{\textbf{20}}                                                             & \cellcolor{gray!20} 6.44                  & \cellcolor{gray!20} 28.51                    & \cellcolor{gray!20} \textbf{0.90} \\
        \cmidrule{2-9}
                                                               & SDXL-LCM~\cite{luo2023latent}                     & 1                     & 3.36                     & 0.32                     & \textcolor{color3}{\textbf{0.9}}                                                                                & 50.51                                     & 24.45                                        & 0.28                              \\
                                                               & PixArt-LCM~\cite{chen2024pixartdelta}             & 1                     & 4.26                     & 0.25                     & \textcolor{color3}{\textbf{0.6}}                                                                                & 73.35                                     & 23.99                                        & 0.41                              \\
                                                               & PixArt-DMD~\cite{chenpixartsigma}\dag             & 1                     & 4.26                     & 0.25                     & \textcolor{color3}{\textbf{0.6}}                                                                                & 9.59                                      & 26.98                                        & 0.45                              \\
                                                               & SD3.5-Turbo~\cite{esser2024scaling}               & 1                     & 2.48                     & 0.45                     & \textcolor{color3}{\textbf{8.0}}                                                                                & 52.40                                     & 25.40                                        & 0.51                              \\
                                                               & PCM~\cite{wang2024phased}\dag                     & 1                     & 3.16                     & 0.40                     & \textcolor{color3}{\textbf{0.9}}                                                                                & 30.11                                     & 26.47                                        & 0.42                              \\
                                                               & SDXL-DMD2~\cite{yin2024improved}\dag              & 1                     & 3.36                     & 0.32                     & \textcolor{color3}{\textbf{0.9}}                                                                                & \underline{7.10}                          & 28.93                                        & 0.59                              \\
                                                               & FLUX-schnell~\citep{FLUX}                         & 1                     & 1.58                     & 0.68                     & \textcolor{color3}{\textbf{12.0}}                                                                               & \textit{7.26}                             & \underline{28.49}                            & \textit{0.69}                     \\
                                                               & Sana-Sprint~\citep{chen2025sana}                  & 1                     & 7.22                     & 0.21                     & \textcolor{color3}{\textbf{0.6}}                                                                                & 7.04                                      & 28.04                                        & 0.72                              \\
                                                               & Sana-Sprint~\citep{chen2025sana}                  & 1                     & 6.71                     & 0.21                     & \textcolor{color3}{\textbf{1.6}}                                                                                & 7.69                                      & \textit{28.27}                               & 0.76                              \\
                                                               & Qwen-Image-Lightning~\citep{Qwen-Image-Lightning} & 1                     & 3.29                     & 0.40                     & \textcolor{color5}{\textbf{20}} (\textcolor{color2}{r=64},\textcolor{color3}{\textbf{0.4}})                     & 7.06                                      & 28.35                                        & 0.85                              \\
                                                               & RCGM~\citep{sun2025anystep}                       & 1                     & 3.29                     & 0.40                     & \textcolor{color5}{\textbf{20}} (\textcolor{color2}{r=64},\textcolor{color3}{\textbf{0.4}})                     & 11.38                                     & 27.69                                        & 0.52                              \\
                                                               & TwinFlow~\citep{cheng2025twinflow}                & 1                     & 3.29                     & 0.40                     & \textcolor{color5}{\textbf{20}} (\textcolor{color2}{r=64},\textcolor{color3}{\textbf{0.4}})                     & 7.32                                      & 28.29                                        & 0.86                              \\
        \cmidrule{2-9}
                                                               & \bf \cellcolor{gray!20}\method                    & \cellcolor{gray!20} 1 & \cellcolor{gray!20} 7.30 & \cellcolor{gray!20} 0.20 & \cellcolor{gray!20} \textcolor{color3}{\textbf{0.6}}                                                            & \cellcolor{gray!20} 6.99                  & \cellcolor{gray!20} \textit{28.36}           & \cellcolor{gray!20} 0.84          \\
                                                               & \bf \cellcolor{gray!20}\method                    & \cellcolor{gray!20} 1 & \cellcolor{gray!20} 6.84 & \cellcolor{gray!20} 0.20 & \cellcolor{gray!20} \textcolor{color3}{\textbf{1.6}}                                                            & \cellcolor{gray!20} 6.78                  & \cellcolor{gray!20} \textit{28.12}           & \cellcolor{gray!20} 0.84          \\
        \cmidrule{2-9}
                                                               & \bf \cellcolor{gray!20}\method                    & \cellcolor{gray!20} 1 & \cellcolor{gray!20} 3.29 & \cellcolor{gray!20} 0.39 & \cellcolor{gray!20} \textcolor{color5}{\textbf{20}} (\textcolor{color2}{r=32},\textcolor{color3}{\textbf{0.2}}) & \cellcolor{gray!20} 7.22                  & \cellcolor{gray!20} \textit{28.62}           & \cellcolor{gray!20} 0.88          \\
                                                               & \bf \cellcolor{gray!20}\method                    & \cellcolor{gray!20} 1 & \cellcolor{gray!20} 3.27 & \cellcolor{gray!20} 0.39 & \cellcolor{gray!20} \textcolor{color5}{\textbf{20}} (\textcolor{color2}{r=64},\textcolor{color3}{\textbf{0.4}}) & \cellcolor{gray!20} 7.14                  & \cellcolor{gray!20} \textit{28.45}           & \cellcolor{gray!20} \textbf{0.89} \\
        \cmidrule{2-9}
                                                               & \bf \cellcolor{gray!20}\method                    & \cellcolor{gray!20} 1 & \cellcolor{gray!20} 3.50 & \cellcolor{gray!20} 0.34 & \cellcolor{gray!20} \textcolor{color3}{\textbf{20}}                                                             & \cellcolor{gray!20} 6.87                  & \cellcolor{gray!20} 28.66                    & \cellcolor{gray!20} \textbf{0.89} \\
        \bottomrule
      \end{tabular}}
  }
\end{table*}

\subsection{Complete Objective and GAN Gradient Structure}
\label{subsec:gan_connection}

\paragraph{Complete Training Objective.}
The full \method objective combines the fake flow fitting $\cL_{\text{fake}}$ with the mixed consistency loss $\cL_{\text{mix}}$:
\begin{equation}
  \cL_{\text{\method}}(\mtheta)
  =
  \lambda_p\,\cL_{\text{fake}}(\mtheta) + \lambda_e\,\cL_{\text{mix}}(\mtheta), \quad \lambda_p,\lambda_e \geq 0 \,.
  \label{eq:l_apex_total}
\end{equation}
$\cL_{\text{fake}}$ is a prerequisite: it trains the shifted condition branch as an independent estimator of $p_{\text{fake}}$'s velocity field so that $\vv_{\text{fake}}$ can serve as a valid correction reference.
The KL descent interpretation of \secref{subsec:kl} applies to $\cL_{\text{mix}}$, which uses $\vv_{\text{fake}}$ to form the mixed target.
We now analyze the gradient of $\cL_{\text{mix}}$ in score space to reveal its formal connection to GAN dynamics.

\paragraph{GAN Aligned Gradient Structure.}
Via Score Velocity Duality Eq.~\eqref{eq:score_velocity_duality}, velocity differences translate to score differences by the time dependent factor $-\frac{t}{1-t}$.
Applying this to $\cG_{\text{\method}}$, we define:
\begin{equation}
  \ss_{\text{mix}}(\xx_t) := (1{-}\lambda)\,\ss_{\text{data}}(\xx_t) + \lambda\,\ss_{\text{fake}}(\xx_t) \,,
  \label{eq:score_mix}
\end{equation}
where $\ss_{\text{data}}(\xx_t) = \nabla_{\xx_t}\log p_{\text{data},t}(\xx_t)$ and $\ss_{\text{fake}}(\xx_t) = \nabla_{\xx_t}\log p_{\text{fake},t}(\xx_t)$.
This yields:
\begin{mdframed}[
    hidealllines=true,
    backgroundcolor=color4!5,
    innerleftmargin=12pt,
    innerrightmargin=12pt,
    innertopmargin=6pt,
    innerbottommargin=6pt,
    roundcorner=1pt,
  ]
  \noindent\textbf{Proposition (GAN-Aligned Gradient).}
  \emph{The gradient of $\cG_{\text{\method}}$ takes the GAN canonical score difference form:}
  \begin{small}
    \begin{equation}
      \nabla_{\mtheta}\cG_{\text{\method}}(\mtheta)
      \propto
      \EEb{\xx_t\sim p_{\mtheta,t}}{\underbrace{\vphantom{D^*(\xx_t)}1}_{w\equiv 1} \cdot \left(\ss_{\mtheta}(\xx_t) - \ss_{\text{mix}}(\xx_t)\right) \cdot \frac{\partial \xx_t}{\partial\mtheta}} \,,
      \label{eq:apex_grad_score}
    \end{equation}
  \end{small}%
  \emph{with constant weight $w \equiv 1$, corresponding to minimizing the Fisher divergence $D_F(p_{\mtheta}\|p_{\text{mix}})$.}
\end{mdframed}

\noindent The Fisher divergence is:
\begin{equation}
  D_F(p_{\mtheta}\|p_{\text{mix}})
  := \int \bigl\|\ss_{\mtheta}(\xx_t) - \ss_{\text{mix}}(\xx_t)\bigr\|_2^2\,p_{\mtheta}(\xx_t)\,\dm\xx_t \,.
  \label{eq:fisher_div}
\end{equation}
Here $\ss_{\text{mix}}$ is a convex combination of score functions and need not correspond to a proper probability distribution; we interpret $p_{\text{mix}}$ as an implicit training target, analogous to the implicit distribution induced by score interpolation in classifier free guidance~\citep{ho2022classifier}.
Eq.~\eqref{eq:apex_grad_score} reveals that \method follows a GAN-aligned gradient with a constant weight $w{\equiv}1$: the time factor $-\frac{2t^3}{1-t}$ is absorbed into $\omega(t)$ and is uniform across all samples at each $t$.

\section{Experiments}
\label{sec:exp}

\subsection{Experimental Setup}
\label{sec:exp_set}

\textbf{\emph{$\bullet$ Backbones and tuning.}}
We consider three capacities: \method 0.6B and \method 1.6B (full parameter tuning), and \method 20B using LoRA on Qwen-Image~\citep{wu2025qwen}.

\textbf{\emph{$\bullet$ Datasets.}}
Our training data comprises both open source and newly synthesized datasets. We utilize ShareGPT-4o~\citep{chen2025sharegpt} and BLIP-3o~\citep{chen2025blip3} as our part of open source resources.
Additionally, we construct two synthetic datasets using the Qwen-Image-20B model.
Part of the data includes 600K samples generated from prompts in the Flux-Reasoning-6M dataset~\citep{fang2025flux}, and another 200K samples synthesized from poster prompts.

\textbf{\emph{$\bullet$ Training and hardware.}}
Training uses BF16 precision.
For LoRA, we vary the rank $r \in \{32, 64\}$ and keep all other settings identical across ranks.
We use 16$\times$NVIDIA H800 80GB, 8$\times$A100 80GB GPUs for training and evaluation.

\textbf{\emph{$\bullet$ Evaluation metrics.}}
Our \emph{primary} metric is GenEval Overall~\citep{ghosh2023geneval}.
We also report FID and CLIP on MJHQ-30K~\citep{li2024playgroundv2.5}, DPGBench~\citep{hu2024ella} and WISE~\citep{niu2025wise} for completeness.
Unless noted, results are with NFE=1.

\begin{table*}[!t]\centering
  \caption{\textbf{Quantitative Evaluation results on GenEval.}}
  \scalebox{0.78}{
    \begin{tabular}{l|cccccc|c}
      \toprule
      \multirow{2}{*}{\textbf{Model}}                          & \textbf{Single} & \textbf{Two} & \multirow{2}{*}{\textbf{Counting}} & \multirow{2}{*}{\textbf{Colors}} & \multirow{2}{*}{\textbf{Position}} & \textbf{Attribute} & \multirow{2}{*}{\textbf{Overall$\uparrow$}} \\
                                                               & \bf Object      & \bf Object   &                                    &                                  &                                    & \bf Binding        &                                             \\
      \midrule
      Show-o~\citep{xie2024show}                               & 0.95            & 0.52         & 0.49                               & 0.82                             & 0.11                               & 0.28               & 0.53                                        \\
      Emu3-Gen~\citep{wang2024emu3}                            & 0.98            & 0.71         & 0.34                               & 0.81                             & 0.17                               & 0.21               & 0.54                                        \\
      PixArt-$\alpha$~\citep{chen2024pixartalpha}              & 0.98            & 0.50         & 0.44                               & 0.80                             & 0.08                               & 0.07               & 0.48                                        \\
      SD3 Medium~\citep{esser2024scaling}                      & 0.98            & 0.74         & 0.63                               & 0.67                             & 0.34                               & 0.36               & 0.62                                        \\
      FLUX.1 [Dev]~\citep{flux2024}                            & 0.98            & 0.81         & 0.74                               & 0.79                             & 0.22                               & 0.45               & 0.66                                        \\
      SD3.5 Large~\citep{esser2024scaling}                     & 0.98            & 0.89         & 0.73                               & 0.83                             & 0.34                               & 0.47               & 0.71                                        \\
      JanusFlow~\citep{ma2025janusflow}                        & 0.97            & 0.59         & 0.45                               & 0.83                             & 0.53                               & 0.42               & 0.63                                        \\
      Lumina-Image 2.0~\citep{qin2025lumina}                   & -               & 0.87         & 0.67                               & -                                & -                                  & 0.62               & 0.73                                        \\
      Janus-Pro-7B~\citep{chen2025janus}                       & 0.99            & 0.89         & 0.59                               & 0.90                             & 0.79                               & 0.66               & 0.80                                        \\
      HiDream-I1-Full~\citep{cai2025hidream}                   & \bf 1.00        & 0.98         & 0.79                               & 0.91                             & 0.60                               & 0.72               & 0.83                                        \\
      GPT Image 1 [High]~\citep{gptimage}                      & 0.99            & 0.92         & 0.85                               & 0.92                             & 0.75                               & 0.61               & 0.84                                        \\
      Seedream 3.0~\citep{gao2025seedream}                     & 0.99            & \bf 0.96     & 0.91                               & \bf 0.93                         & 0.47                               & 0.80               & 0.84                                        \\
      BAGEL~\citep{deng2025bagel}                              & 0.98            & 0.95         & 0.84                               & 0.95                             & 0.78                               & 0.77               & 0.88                                        \\
      Qwen-Image~\citep{wu2025qwen}                            & 0.99            & 0.92         & 0.89                               & 0.88                             & 0.76                               & 0.77               & \underline{0.87}                            \\
      Hyper-BAGEL~\citep{lu2025hyperbagel}                     & 0.97            & 0.86         & 0.75                               & 0.90                             & 0.67                               & 0.62               & 0.80                                        \\
      Qwen-Image-Lightning~\citep{Qwen-Image-Lightning}        & 0.99            & 0.89         & 0.85                               & 0.87                             & 0.75                               & 0.76               & 0.85                                        \\
      TwinFlow~\citep{cheng2025twinflow}  (1-NFE)              & 1.00            & 0.91         & 0.84                               & 0.90                             & 0.75                               & 0.74               & 0.86                                        \\
      \midrule
      \rowcolor{gray!20}
      \bf \method 0.6B (1-NFE)                                 & 0.99            & 0.91         & 0.75                               & \bf 0.93                         & 0.76                               & 0.69               & 0.84                                        \\
      \rowcolor{gray!20}
      \bf \method 1.6B (1-NFE)                                 & 0.99            & 0.91         & 0.75                               & \bf 0.93                         & 0.76                               & 0.68               & 0.84                                        \\
      \midrule
      \rowcolor{gray!20}
      \bf \method 20B (LoRA\&\textcolor{color2}{r=32}) (1-NFE) & 0.99            & 0.95         & 0.85                               & 0.90                             & 0.79                               & 0.78               & 0.88                                        \\
      \rowcolor{gray!20}
      \bf \method 20B (LoRA\&\textcolor{color2}{r=64}) (1-NFE) & 0.99            & 0.94         & 0.88                               & 0.90                             & \bf 0.85                           & \bf 0.78           & \bf 0.89                                    \\
      \midrule
      \rowcolor{gray!20}
      \bf \method 20B (\textcolor{color3}{SFT}) (1-NFE)        & 0.99            & 0.92         & 0.83                               & 0.91                             & 0.86                               & 0.81               & \textbf{0.89}                               \\
      \bottomrule
    \end{tabular}\label{tab:geneval}
  }
  \vspace{-1em}
\end{table*}

\begin{table*}[!t]
  \centering
  \caption{\textbf{Quantitative evaluation results on DPGBench.}}
  \vspace{-1em}
  \scalebox{0.78}{
    \begin{tabular}{l|ccccc|c}
      \toprule
      \textbf{Model}                                             & \bf Global     & \bf Entity     & \bf Attribute  & \bf Relation   & \bf Other      & \bf Overall$\uparrow$ \\
      \midrule
      SD v1.5~\citep{rombach2022high}                            & 74.63          & 74.23          & 75.39          & 73.49          & 67.81          & 63.18                 \\
      PixArt-$\alpha$~\citep{chen2024pixartalpha}                & 74.97          & 79.32          & 78.60          & 82.57          & 76.96          & 71.11                 \\
      Lumina-Next~\citep{zhuo2024lumina}                         & 82.82          & 88.65          & 86.44          & 80.53          & 81.82          & 74.63                 \\
      SDXL~\citep{podell2023sdxl}                                & 83.27          & 82.43          & 80.91          & 86.76          & 80.41          & 74.65                 \\
      Hunyuan-DiT~\citep{li2024hunyuan-dit}                      & 84.59          & 80.59          & 88.01          & 74.36          & 86.41          & 78.87                 \\
      Janus~\citep{wu2025janus}                                  & 82.33          & 87.38          & 87.70          & 85.46          & 86.41          & 79.68                 \\
      PixArt-$\Sigma$~\citep{chen2024pixartsigma}                & 86.89          & 82.89          & 88.94          & 86.59          & 87.68          & 80.54                 \\
      Emu3-Gen~\citep{wang2024emu3}                              & 85.21          & 86.68          & 86.84          & 90.22          & 83.15          & 80.60                 \\
      Janus-Pro-1B~\citep{chen2025janus}                         & 87.58          & 88.63          & 88.17          & 88.98          & 88.30          & 82.63                 \\
      DALL-E 3~\citep{Dalle-3}                                   & 90.97          & 89.61          & 88.39          & 90.58          & 89.83          & 83.50                 \\
      FLUX.1 [Dev]~\citep{flux2024}                              & 74.35          & 90.00          & 88.96          & 90.87          & 88.33          & 83.84                 \\
      SD3.5-Medium~\cite{esser2024scaling}                       & 84.08          & 87.90          & 91.01          & 88.83          & 80.70          & 88.68                 \\
      SD3.5-Turbo~\cite{sauer2024sd3turbo}                       & 79.03          & 80.12          & 86.13          & 84.73          & 91.86          & 78.29                 \\
      SD3.5-Large~\cite{esser2024scaling}                        & 83.21          & 84.27          & 88.99          & 87.35          & 93.28          & 80.35                 \\
      FLUX.1-schnell~\cite{FLUX}                                 & 84.94          & 86.62          & 90.82          & 88.35          & 93.45          & 82.00                 \\
      Janus-Pro-7B~\citep{chen2025janus}                         & 86.90          & 88.90          & 89.40          & 89.32          & 89.48          & 84.19                 \\
      HiDream-I1-Full~\citep{cai2025hidream}                     & 76.44          & 90.22          & 89.48          & 93.74          & 91.83          & 85.89                 \\
      Lumina-Image 2.0~\citep{qin2025lumina}                     & -              & 91.97          & 90.20          & \textbf{94.85} & -              & 87.20                 \\
      Seedream 3.0~\citep{gao2025seedream}                       & \textbf{94.31} & \textbf{92.65} & 91.36          & 92.78          & 88.24          & 88.27                 \\
      GPT Image 1 [High]~\citep{gptimage}                        & 88.89          & 88.94          & 89.84          & 92.63          & 90.96          & 85.15                 \\
      Qwen-Image~\citep{wu2025qwen}                              & 91.32          & 91.56          & \textbf{92.02} & 94.31          & \textbf{92.73} & 88.32                 \\
      Playground v3~\citep{liu2024playground}                    & 87.04          & 91.94          & 85.71          & 90.90          & 90.00          & \textbf{92.72}        \\
      TwinFlow~\citep{cheng2025twinflow}  (1-NFE)                & 92.34          & 92.12          & 92.45          & 92.86          & 92.63          & 86.52                 \\
      \midrule
      \rowcolor{gray!20}
      \bf \method 0.6B (1-NFE)                                   & 90.58          & 90.36          & 90.44          & 90.77          & 90.73          & 82.66                 \\
      \rowcolor{gray!20}
      \bf \method 1.6B (1-NFE)                                   & 90.77          & 90.56          & 90.63          & 90.98          & 90.94          & 83.22                 \\
      \midrule
      \rowcolor{gray!20}
      \bf \method 20B (LoRA\&\textcolor{color2}{r=32}) (1-NFE)   & 93.12          & 90.95          & 91.38          & 90.65          & 91.73          & 86.17                 \\
      \rowcolor{gray!20}
      \bf \method 20B (LoRA\&\textcolor{color2}{r=64}) (1-NFE)   & 92.46          & 91.14          & 90.71          & 91.30          & 91.98          & 85.77                 \\
      \midrule
      \rowcolor{gray!20}
      \bf \method 20B (\textcolor{color3}{\textbf{SFT}}) (1-NFE) & 93.25          & 89.76          & 90.65          & 91.17          & 90.75          & 84.59                 \\
      \bottomrule
    \end{tabular}\label{tab:dpg}
  }
  \vspace{-1em}
\end{table*}

\subsection{Efficiency and Performance Comparison}
We profile \method under \textbf{NFE=1/2} and contrast it with the strongest prior distilled models at each setting, summarized in \tabref{tab:main_comparison}.
\textbf{GenEval Overall} is our headline metric, with throughput and latency reported to highlight practical applicability.

At \textbf{NFE=1}, \method 0.6B sustains \textbf{7.3} samples/s at \textbf{0.20}s latency while achieving \textbf{0.84} GenEval a \(\approx\!0.15\) absolute improvement over FLUX-Schnell 12B (GenEval 0.69), a model with \textbf{20$\times$ more parameters}.
This result suggests that the endogenous adversarial signal from condition shifting is more parameter efficient than scaling model capacity under standard distillation.
Scaling to \method 1.6B keeps latency flat with similar throughput.
Our LoRA-tuned \method 20B further lifts GenEval to \textbf{0.89} (\textcolor{color2}{r=64}) at only \textbf{0.39}s latency state of the art at NFE=1.
Notably, this quality level is reached after only \textbf{6 hours} of LoRA training (2K steps, global batch size 64), while the original Qwen-Image 20B requires 50 integration steps to achieve GenEval 0.87.
\method thus simultaneously improves quality and reduces both training and inference cost.

Moving to \textbf{NFE=2}, \method 1.6B rises to \textbf{0.85} GenEval, an \(\sim\!8\)-point margin over the strongest two-step baseline (Sana-Sprint 1.6B at 0.77) while running more than twice as fast.
The 20B LoRA variant sustains \textbf{0.89} GenEval with a modest latency bump to \textbf{0.47}s.
Taken together, these results demonstrate that \method closes the quality gap to multi-step generators without sacrificing the latency advantage that makes distilled models practical in production pipelines.

\begin{table*}[!t]
  \centering
  \caption{\textbf{Quantitative evaluation results on WISE.}}
  \vspace{-1em}
  \scalebox{0.78}{
    \begin{tabular}{l|cccccc|c}
      \toprule
      \textbf{Model}                                             & \bf Cultural & \bf Time & \bf Space & \bf Biology & \bf Physics & \bf Chemistry & \bf Overall$\uparrow$ \\
      \midrule
      SD v1.5~\citep{rombach2022high}                            & 0.34         & 0.35     & 0.32      & 0.28        & 0.29        & 0.21          & 0.32                  \\
      SDXL~\citep{podell2023sdxl}                                & 0.43         & 0.48     & 0.47      & 0.44        & 0.45        & 0.27          & 0.43                  \\
      SD3.5-Large~\cite{esser2024scaling}                        & 0.44         & 0.50     & 0.58      & 0.44        & 0.52        & 0.31          & 0.46                  \\
      PixArt-$\alpha$~\citep{chen2024pixartalpha}                & 0.45         & 0.50     & 0.48      & 0.49        & 0.56        & 0.34          & 0.47                  \\
      Playground-v2.5~\citep{li2024playgroundv2.5}               & 0.49         & 0.58     & 0.55      & 0.43        & 0.48        & 0.33          & 0.49                  \\
      FLUX.1 [Dev]~\citep{flux2024}                              & 0.48         & 0.58     & 0.62      & 0.42        & 0.51        & 0.35          & 0.50                  \\
      Janus~\citep{wu2025janus}                                  & 0.16         & 0.26     & 0.35      & 0.28        & 0.30        & 0.14          & 0.23                  \\
      VILA-U~\citep{wu2024vila}                                  & 0.51         & 0.51     & 0.51      & 0.49        & 0.51        & 0.49          & 0.50                  \\
      Show-o~\citep{xie2024show}                                 & 0.95         & 0.52     & 0.49      & 0.82        & 0.11        & 0.28          & 0.53                  \\
      Janus-Pro-7B~\citep{chen2025janus}                         & 0.30         & 0.37     & 0.49      & 0.36        & 0.42        & 0.26          & 0.35                  \\
      Emu3-Gen~\citep{wang2024emu3}                              & 0.34         & 0.45     & 0.48      & 0.41        & 0.45        & 0.47          & 0.39                  \\
      MetaQuery-XL~\citep{pan2025transfer}                       & 0.56         & 0.55     & 0.62      & 0.49        & 0.63        & 0.41          & 0.55                  \\
      BAGEL~\citep{deng2025bagel}                                & 0.44         & 0.55     & 0.68      & 0.44        & 0.60        & 0.39          & 0.52                  \\
      GPT-4o                                                     & 0.81         & 0.71     & 0.89      & 0.83        & 0.79        & 0.74          & \textbf{0.80}         \\
      Qwen-Image~\citep{wu2025qwen}                              & -            & -        & -         & -           & -           & -             & 0.62                  \\
      Qwen-Image-Lightning~\citep{Qwen-Image-Lightning}          & -            & -        & -         & -           & -           & -             & 0.51                  \\
      TwinFlow~\citep{cheng2025twinflow}                         & 0.52         & 0.51     & 0.67      & 0.48        & 0.61        & 0.40          & 0.54                  \\
      \midrule
      \rowcolor{gray!20}
      \bf \method 20B (\textcolor{color3}{\textbf{SFT}}) (1-NFE) & 0.53         & 0.54     & 0.66      & 0.48        & 0.61        & 0.41          & 0.54                  \\
      \bottomrule
    \end{tabular}\label{tab:wise}
  }
  \vspace{-1em}
\end{table*}

\begin{table}[t]
  \caption{\textbf{Effect of training data and steps on GenEval Overall (NFE=1).}
    We compare \textbf{ShareGPT-4o} and \textbf{BLIP-3o} across training steps for \method 0.6B/1.6B, and LoRA tuned Qwen-Image 20B with ranks \textcolor{color2}{r=32}/\textcolor{color2}{r=64}. All runs use global batch size \textbf{64}.}
  \centering
  \scalebox{0.62}{
    \begin{tabular}{l|ccc|ccc}
      \toprule
      \multirow{2}*{Model}                   & \multicolumn{3}{c|}{ShareGPT-4o} & \multicolumn{3}{c}{Blip-3o}                                                     \\
                                             & {2Ksteps}                        & {8Ksteps}                   & {10Ksteps} & {2Ksteps}   & {8Ksteps} & {10Ksteps} \\
      \midrule
      \method 0.6B                           & 0.37                             & 0.67                        & 0.73       & 0.71        & 0.77      & 0.81       \\
      \method 1.6B                           & 0.36                             & 0.70                        & 0.73       & 0.27        & 0.78      & 0.83       \\
      \midrule
                                             & {0.4Ksteps}                      & {1Ksteps}                   & {2Ksteps}  & {0.4Ksteps} & {1Ksteps} & {2Ksteps}  \\
      \midrule
      \method 20B (\textcolor{color2}{r=32}) & 0.19                             & 0.33                        & 0.62       & 0.83        & 0.84      & 0.83       \\
      \method 20B (\textcolor{color2}{r=64}) & 0.21                             & 0.35                        & 0.61       & 0.73        & 0.85      & 0.84       \\
      \bottomrule
    \end{tabular}}
  \label{tab:dataset}
  \vspace{-1em}
\end{table}

\subsection{Ablations}
\label{sec:ablation}

We present controlled ablations to isolate the effects of key design choices in \method.
Unless otherwise stated, all results are reported with \textbf{NFE=1} and the \textbf{GenEval Overall} metric, using identical prompts, seeds, and resolution.

\paragraph{Balancing $\cL_{\text{fake}}$ and $\cL_{\text{mix}}$.}
We dissect the contribution of the fake flow fitting objective $\cL_{\text{fake}}$ (Eq.~\eqref{eq:apex_fake_loss}) and the mixed consistency objective $\cL_{\text{mix}}$ (Eq.~\eqref{eq:l_mix}) by ablating their outer relative weights $\lambda_p{:}\lambda_e$ in $\cL_{\text{\method}} = \lambda_{p}\,\cL_{\text{fake}} + \lambda_{e}\,\cL_{\text{mix}}$ on three models: \method 0.6B, 1.6B, and 20B (LoRA).
Here $\lambda_p, \lambda_e \geq 0$ are the \emph{outer} loss weights (distinct from the inner mixing ratio $\lambda \in [0,1]$ in Eq.~\eqref{eq:g_apex}); the default setting Eq.~\eqref{eq:l_apex_total} corresponds to $\lambda_p{=}\lambda_e{=}1$.
As shown in \tabref{tab:path_endpoint_ablation}, either component alone underperforms the balanced settings.
A mild endpoint emphasis (e.g., $1.0{:}0.5$) or equal weighting ($1.0{:}1.0$) yields the highest GenEval, whereas excessive endpoint emphasis ($1.0{:}2.0$) slightly harms path integrability and overall score.
This validates our design: the fake flow fitting $\cL_{\text{fake}}$ is necessary to retain one step stability, whereas $\cL_{\text{mix}}$ is critical to reach high fidelity endpoints.

\begin{table}[!t]
  \centering
  \caption{\textbf{Ablation on the weights of $\cL_{\text{fake}}$ vs.\ $\cL_{\text{mix}}$.}
    We report GenEval Overall (NFE=1) for different weighting ratios ($\lambda_p{:}\lambda_e$). The dataset is BLIP-3o.
    Training steps are 8K for 0.6B/1.6B models and 0.4K for the 20B (LoRA) model.
    Best per model in \textbf{bold}.
  }
  \label{tab:path_endpoint_ablation}
  \vspace{-0.5em}
  \scalebox{0.7}{
    \begin{tabular}{l|c|c|c}
      \toprule
      \textbf{Weighting Ratio ($\lambda_p:\lambda_e$)} & \textbf{\method 0.6B} & \textbf{\method 1.6B} & \textbf{\method 20B ($r{=}32$)} \\
      \midrule
      1.0 : 0.0 ($\cL_{\text{fake}}$ Only)             & 0.32                  & 0.35                  & 0.42                            \\
      0.0 : 1.0 ($\cL_{\text{mix}}$ Only)              & 0.63                  & 0.66                  & 0.69                            \\
      \midrule
      1.0 : 0.5                                        & 0.72                  & 0.71                  & 0.81                            \\
      \bf 1.0 : 1.0 (Ours)                             & \bf 0.77              & \bf 0.76              & \bf 0.83                        \\
      1.0 : 2.0                                        & 0.74                  & 0.75                  & 0.82                            \\
      \bottomrule
    \end{tabular}
  }
  \vspace{-1em}
\end{table}

\textbf{\emph{$\bullet$ Condition shifting hyperparameters $a$ and $b$.}}
To probe the self conditioned contrast, we vary the scale $a$ and bias $b$ in $\cc_{\text{fake}}\;=\;\mathbf{A}\,\cc + \mathbf{b}$ (setting $\mathbf{A}=a\mathbf{I}$ and $\mathbf{b}=b\mathbf{1}$, i.e.\ scalar multiples of the identity and all ones vector) and report GenEval on a $(a,b)$ grid in \tabref{tab:condshift}.
Results show a broad optimum around $a\in\{-1.0,-0.5\}$ with small positive biases ($b\in[0.1,1.0]$), consistent with the principled justification in \secref{subsec:key_concepts}: negative scaling inverts the condition embedding direction, creating maximal representational contrast between the real and shifted branches, which enables $\vv_{\text{fake}}$ to function as a more independent estimator of $p_{\text{fake}}$'s velocity.
Positive scaling ($a{=}0.5$) is generally suboptimal unless paired with a larger bias ($b{=}10.0$) to compensate for the reduced decoupling.

\begin{wraptable}{r}{0.45\textwidth}
  \centering
  \small
  \vspace{-1em}
  \caption{\textbf{Effect of condition-shifting hyperparameters on GenEval Overall (NFE=1). Moderate negative scaling ($a\in\{-1.0,-0.5\}$) yields the most robust gains.}}
  \label{tab:condshift}
  \vspace{-6pt}
  \scalebox{0.8}{
    \begin{tabular}{l|cccc}
      \toprule
      \textbf{$a$} $\backslash$ \textbf{$b$} & 0.0  & 0.1  & 1.0  & 10.0 \\
      \midrule
      $-1.0$                                 & 0.76 & 0.73 & 0.74 & 0.74 \\
      $-0.5$                                 & 0.75 & 0.79 & 0.81 & 0.70 \\
      $\phantom{-}0.5$                       & 0.29 & 0.37 & 0.30 & 0.73 \\
      \bottomrule
    \end{tabular}
  }
  \vspace{-1em}
\end{wraptable}

\textbf{\emph{$\bullet$ Datasets vs. training steps.}}
We first study data and compute scaling by varying one factor at a time.
The dataset ablation \tabref{tab:dataset} compares ShareGPT-4o and BLIP-3o across fixed steps, evaluated on \method 0.6B and 1.6B, and extends to Qwen-Image 20B (LoRA) at shorter step budgets.
BLIP-3o consistently yields higher GenEval at larger step counts for both 0.6B and 1.6B (e.g., \textbf{0.81}/\textbf{0.83} vs \textbf{0.73} at 10K). For the 20B LoRA model, BLIP-3o reaches \textbf{0.84}--\textbf{0.85} by 1--2K steps, whereas ShareGPT-4o improves steadily with more steps (\textbf{0.19}\,$\to$\,\textbf{0.62}).

\section{Conclusion}
We presented \textcolor{color3}{\textbf{\method}}, a discriminator free one step generative framework built on self condition shifting.
\method introduces a fake condition $\cc_{\text{fake}} = \mathbf{A}\cc + \mathbf{b}$ and uses the model itself to generate a fake signal under $\cc_{\text{fake}}$, replacing the need for an external discriminator or a frozen teacher network.
The fake flow fitting loss $\cL_{\text{fake}}$ (Eq.~\eqref{eq:apex_fake_loss}) trains the fake condition branch to track the model's current generation so that $\vv_{\text{fake}}$ serves as an independent estimator of $p_{\text{fake}}$'s velocity.
The mixed consistency loss $\cL_{\text{mix}}$ then uses $\vv_{\text{fake}}$ as a correction reference, with the supervised component driving $\vv_\theta \to \vv_{\text{data}}$ and the fake correction component providing an adaptive signal that evolves as $p_\theta$ improves.
We showed that the resulting gradient takes the same score difference form as GAN objectives but with a constant weight $w \equiv 1$, connecting \method to Fisher divergence minimization without sample dependent discriminator terms.
\method attains state of the art one step quality with low latency.
At \textbf{NFE=1}, the 0.6B/1.6B models reach \textbf{0.84} GenEval at \textbf{0.20}s latency (\textbf{7.3}/\textbf{6.84} samples/s), and the 20B LoRA variant achieves \textbf{0.89} GenEval at \textbf{0.39}s latency.
At \textbf{NFE=2}, the 20B LoRA model sustains \textbf{0.89} GenEval at \textbf{0.47}s latency.
These results confirm that endogenous adversarial training via condition shifting closes the quality gap to multi-step generators while preserving the throughput advantage of one step synthesis.

\newpage
\bibliography{resources/reference}
\bibliographystyle{configuration/iclr2026_conference}

\clearpage
\appendix

\onecolumn
{
    \hypersetup{linkcolor=black}
    \parskip=0em
    \renewcommand{\contentsname}{Contents}
    \setcounter{tocdepth}{3} 
    \tableofcontents
}

\section{Related Work}

\subsection{From Macro level to Local Control}
\label{sec:rw_path}
The foundational paradigm in continuous generative modeling, including diffusion~\citep{ho2020denoising,song2020score,karras2022edm} and flow matching~\citep{lipman2022flow, liu2025efficient}, involves learning an \emph{instantaneous} velocity field. 
While effective for multi step integration, this first order approach is brittle under coarse discretization, as high path curvature causes truncation errors that degrade few step generation quality~\citep{karras2022edm}.
To address this, a significant body of work has shifted focus from instantaneous dynamics to supervising the model's behavior over a \emph{time interval}. 
These methods attempt to ensure path integrability at a macro level. 
For instance, Consistency Models (CMs)~\citep{song2023consistency,lu2024simplifying} enforce a \emph{relative} constraint, requiring that endpoint predictions remain consistent across different points on the same trajectory. While effective, this does not directly address the geometric properties of the path that cause discretization errors. 
More recent approaches such as MeanFlow~\citep{geng2025mean} and Transition Models (TiM)~\citep{wang2025transition} go a step further by directly modeling the average velocity or state transition over an interval. They learn the \emph{result} of a large step, but the constraint remains on the interval's endpoints rather than its internal geometry.
UCGM~\citep{sun2025unified} unifies different paradigms by interpolating between their respective training objectives with a hyperparameter. 
APEX takes a different approach. Rather than enforcing consistency constraints between trajectory endpoints, the fake flow fitting loss $\cL_{\text{fake}}$ (Eq.~\ref{eq:apex_fake_loss}) trains the shifted condition branch to track the model's current generation errors, providing an adaptive self adversarial signal without requiring an external network.
This internal adversarial signal, combined with data supervision in $\cL_{\text{mix}}$, drives $p_\theta$ toward $p_{\text{real}}$ in a self contained, architecture preserving manner.

\subsection{From External Discriminators to Self Adversarial Conditioning}
\label{sec:rw_endpoint}
Achieving high one step fidelity requires strong, absolute anchoring of the endpoint prediction to the data manifold, a property that relative consistency constraints alone do not guarantee.
A primary approach involves incorporating external adversarial signals. Distillation methods like DMD/DMD2~\citep{yin2024improved} and other GAN based refiners~\citep{kim2023refining,sauer2024fast,zheng2025direct} use an auxiliary discriminator to sharpen outputs, even allowing a student to surpass its teacher.
However, this reliance is a double edged sword: it introduces training instability, computational overhead, and, critically, often depends on a costly precomputed dataset for regularization. For large scale models, generating this dataset of teacher student pairs can be prohibitively expensive, exceeding the cost of training itself~\citep{yin2024improved}.
A distinct line of work generates adversarial signals internally. Direct Discriminative Optimization (DDO)~\citep{zheng2025direct} reparameterizes the GAN discriminator using the likelihood ratio between a target model and a fixed reference, operating in \emph{probability space}.
TwinFlow~\citep{cheng2025twinflow} constructs a self adversarial signal by extending the time domain to $t\in[-1,1]$, but requires modifying time embeddings and positional encodings, limiting compatibility with pretrained backbones and parameter efficient tuning.
APEX advances this line by replacing external discriminators with an endogenous adversarial signal derived from \emph{condition shifting}. The shifted condition branch $\vv_{\text{fake}}$ is trained on fake sample trajectories using the same network weights --- requiring no modification to time embeddings or model architecture --- eliminating both discriminator overhead and precomputed teacher datasets while retaining the adversarial correction signal that drives $p_\theta$ toward $p_{\text{real}}$. We further prove that this yields a gradient identical in structure to the GAN update but with constant weight $w\equiv 1$, corresponding to Fisher divergence minimization (see main paper, Section~\ref{subsec:gan_connection}).

\subsection{Scalable Training}
\label{sec:rw_scalable}
The practical implementation of  generative models, including APEX, hinges on scalable system design. 
A key challenge is the need to compute time derivatives to enforce interval consistency. 
Methods like MeanFlow~\citep{geng2025mean} relied on Jacobian-Vector Products (JVP), creating a significant scalability bottleneck. 
JVP is computationally intensive and, more importantly, incompatible with critical training optimizations like FlashAttention~\citep{dao2023flashattention} and FSDP based distributed training~\citep{zhao2023pytorch}, limiting its use in billion parameter models.
To overcome this, the field has converged on finite difference estimators, often termed Differential Derivation Equations (DDE), as a scalable alternative~\citep{lu2024simplifying,wang2025transition}. These estimators rely only on forward passes and are natively compatible with modern training infrastructure. 
APEX's path integrability objective fully embraces this scalable approach. 
This design choice, combined with our efficient endogenous adversarial mechanism and established best practices for large scale training---ensures that APEX maintains 1-NFE fidelity and any-step scaling on large backbones like SDXL, SANA, and Qwen-Image~\citep{podell2023sdxl,xie2024sana,wu2025qwen}, while remaining fully compatible with parameter efficient tuning.

\newpage

\section{Theoretical Analysis and Proofs}
\label{app:theory}

We first establish notation and basic assumptions, then prove the Score--Velocity Duality under the Optimal Transport path, the exact equivalence between endpoint space and velocity space objectives, the gradient equivalence between the mixed consistency loss and the alternative loss, and finally interpret APEX's alternative loss through the lens of Fisher divergence.

\subsection{Setup}
\label{app:notation}

We use bold lowercase letters for vectors like $\xx, \zz, \vv$ and bold uppercase letters for matrices and operators like $\mmF$. 
The identity matrix is denoted by $\mI$, and $\mathbf{0}$ represents the zero vector.
Let $p_{\text{data}}(\xx)$ denote the data distribution over $\xx \in \R^d$, and let $p(\zz) = \cN(\mathbf{0}, \mI)$ be the standard Gaussian prior over $\zz \in \R^d$. 
For conditional generation, we write $p_{\text{data}}(\xx|\cc)$ where $\cc$ is a conditioning variable like text prompt.
Throughout this appendix, we work with the Optimal Transport (OT) interpolation path defined by:
\begin{equation}
\xx_t = \alpha(t)\zz + \gamma(t)\xx, \quad t \in [0,1],
\label{eq:ot_path_app}
\end{equation}
where $\alpha(t) = t$ and $\gamma(t) = 1-t$. 
This satisfies the boundary conditions: $\xx_0 = \xx$ (pure data) and $\xx_1 = \zz$ (pure noise).
Given a time dependent random variable $\xx_t$ following Eq.~\eqref{eq:ot_path_app}, we define the conditional mean velocity.
Throughout the theory section, $\vv(\xx_t,t)$ refers to the \emph{conditional mean velocity} induced by the OT noising construction, i.e.,
\begin{equation}
\vv(\xx_t,t) := \EEb{\zz-\xx\mid \xx_t}{\zz-\xx}.
\end{equation}
The score function is $\ss_t(\xx_t) := \nabla_{\xx_t} \log p_t(\xx_t)$, where $p_t(\xx_t)$ is the marginal density of $\xx_t$ at time $t$.
The target velocity under the OT path is $\vv_{\text{data}}(\xx_t) = \zz - \xx$.
We parameterize a velocity field estimator by a neural network $\mmF_{\mtheta}: \R^d \times [0,1] \times \cC \to \R^d$, where $\mtheta$ denotes the model parameters and $\cC$ is the conditioning space. 
We use the shorthand $\mmF_{\mtheta}(\xx_t, t, \cc) \equiv \mmF_{\mtheta}$ when the arguments are clear from context.
The operator $\sg(\cdot)$ denotes stop gradient, meaning gradients do not flow through the argument. 
The fake velocity $\vv_{\text{fake}}$ is evaluated by querying the same online network $\mmF_{\mtheta}$ under the shifted condition $\cc_{\text{fake}}$ with stop gradient applied (in $\cL_{\text{cons}}$), so no separate teacher parameters are maintained.
We define the endpoint predictor that maps a velocity estimate to its implied clean sample:
\begin{equation}
\mf^{\xx}(\mmF, \xx_t, t) := \xx_t - t \cdot \mmF.
\label{eq:endpoint_pred_app}
\end{equation}
This is motivated by the OT path: if $\xx_t = t\zz + (1-t)\xx$ and $\mmF \approx \zz - \xx$, then $\mf^{\xx}(\mmF, \xx_t, t) \approx \xx$.

\subsection{Score--Velocity Duality under OT Path}
\label{app:score_velocity}

We establish the fundamental relationship between the score function and the optimal velocity field under the OT path.

\begin{proposition}[Score--Velocity Duality]
\label{prop:score_velocity}
Let $\xx_t = t\zz + (1-t)\xx$ for $t\in(0,1)$, where $\zz\sim\cN(\mathbf{0},\mI)$ and $\xx\sim p_{\text{data}}(\xx)$. 
Denote by $p_t(\xx_t)$ the marginal density of $\xx_t$, and define the OT induced conditional mean (least squares optimal) velocity field
\begin{equation}
\vv^*(\xx_t,t) := \EEb{\zz-\xx\mid \xx_t}{\zz-\xx}.
\label{eq:vstar_def}
\end{equation}
Then the score function $\ss_t(\xx_t) := \nabla_{\xx_t}\log p_t(\xx_t)$ satisfies
\begin{equation}
\ss_t(\xx_t) = -\frac{\xx_t + (1-t)\vv^*(\xx_t,t)}{t}.
\label{eq:score_velocity_relation}
\end{equation}
\end{proposition}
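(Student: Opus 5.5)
The plan is to prove the identity via Tweedie's formula for the Gaussian channel $\xx_t = (1-t)\xx + t\zz$, and then convert the posterior mean of the noise into the conditional mean velocity using the linear constraint that ties $\zz$, $\xx$ and $\xx_t$ together. Fix $t\in(0,1)$. Conditionally on $\xx$, the variable $\xx_t$ is Gaussian with mean $(1-t)\xx$ and covariance $t^2\mI$. Hence the marginal is the mixture
\begin{equation*}
p_t(\xx_t)=\int \cN\bigl(\xx_t;(1-t)\xx,\,t^2\mI\bigr)\,p_{\text{data}}(\xx)\,\dm\xx,
\end{equation*}
which is smooth and strictly positive for every $t\in(0,1)$.

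\emph{Step 1 (Tweedie).} Differentiate under the integral. This is justified because the Gaussian kernel and its $\xx_t$-gradient are dominated locally uniformly in $\xx_t$, and we assume $p_{\text{data}}$ has finite first moment. Since $\nabla_{\xx_t}\cN(\xx_t;(1-t)\xx,t^2\mI) = -\frac{\xx_t-(1-t)\xx}{t^2}\,\cN(\cdot)$, dividing by $p_t(\xx_t)$ and applying Bayes' rule gives
\begin{equation*}
\ss_t(\xx_t) = -\frac{\xx_t-(1-t)\,\mathbb{E}[\xx\mid\xx_t]}{t^2} = -\frac{\mathbb{E}[\zz\mid\xx_t]}{t}.
\end{equation*}
The second equality uses $\xx_t-(1-t)\xx = t\zz$ pathwise, and the fact that conditional expectation is linear and preserves the measurable quantity $\xx_t$.

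\emph{Step 2 (express $\mathbb{E}[\zz\mid\xx_t]$ through $\vv^*$).} Start from the pathwise identities $\xx_t = t\zz+(1-t)\xx$ and $\zz-\xx = \vv$. Eliminating $\xx$ gives $\xx_t = t\zz + (1-t)(\zz-\vv) = \zz-(1-t)\vv$, so $\zz = \xx_t + (1-t)(\zz-\xx)$. Conditioning on $\xx_t$ yields
\begin{equation*}
\mathbb{E}[\zz\mid\xx_t] = \xx_t + (1-t)\vv^*(\xx_t,t).
\end{equation*}
Substituting this into Step 1 gives exactly Eq.~\eqref{eq:score_velocity_relation}.

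\emph{Main obstacle.} The only nontrivial point is the rigorous justification of Step 1: interchanging $\nabla_{\xx_t}$ with the integral, and ensuring $p_t>0$ so that $\log p_t$ is differentiable. I would handle this with a dominated-convergence bound. On a ball around any fixed $\xx_t$, the quantity $\|\nabla\cN\|$ is bounded by $C(1+\|\xx\|)$ times a constant, which is integrable against $p_{\text{data}}$ under a first-moment assumption. Positivity follows because the Gaussian kernel is strictly positive. The endpoints $t=0$ and $t=1$ are excluded, consistent with the statement. The rest is the algebra above.
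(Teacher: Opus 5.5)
Your proof is correct and follows essentially the same route as the paper. Tweedie's formula for the channel $\xx_t=(1-t)\xx+t\zz$ yields $\mathbb{E}[\zz\mid\xx_t]=-t\,\ss_t(\xx_t)$, and linear bookkeeping among $\zz,\xx,\xx_t$ then gives the identity. The differences are minor:
\begin{itemize}
\item You derive Tweedie directly rather than citing it. For that interchange the first-moment assumption is not actually needed, since $\|\yy\|e^{-\|\yy\|^2/(2t^2)}\le t\,e^{-1/2}$ bounds the kernel gradient uniformly in $\xx$ and $\xx_t$.
\item Your identity $\zz=\xx_t+(1-t)(\zz-\xx)$ shortcuts the paper's explicit computation of $\vv^*=\mathbb{E}[\zz\mid\xx_t]-\mathbb{E}[\xx\mid\xx_t]$ in terms of the score and the final rearrangement.
\end{itemize}
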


\begin{proof}
\textbf{Step 1: Rewrite as an additive Gaussian observation model.}
Define $\xx':=(1-t)\xx$. Then the OT path can be written as
\[
\xx_t=\xx' + t\zz,\qquad \zz\sim\cN(\mathbf{0},\mI).
\]
Conditioned on $\xx'$, the likelihood is $\xx_t\mid \xx' \sim \cN(\xx',t^2\mI)$, since $\xx_t - \xx' = t\zz$ and $\zz \sim \cN(\mathbf{0}, \mI)$ implies $t\zz \sim \cN(\mathbf{0}, t^2\mI)$.

\textbf{Step 2: Apply Tweedie's formula to recover the posterior mean.}
For an additive Gaussian model $\xx_t=\xx' + t\zz$ where $\zz \sim \cN(\mathbf{0}, \mI)$, Tweedie's formula states that the posterior mean can be recovered from the score function:
\begin{equation}
\EEb{\xx' \mid \xx_t}{\xx'} = \xx_t + t^2\,\nabla_{\xx_t}\log p_t(\xx_t) = \xx_t + t^2\,\ss_t(\xx_t).
\label{eq:tweedie_xprime}
\end{equation}
\emph{Justification of Tweedie's formula:} For a Gaussian perturbation model $\yy = \xx' + \sigma \boldsymbol{\epsilon}$ with $\boldsymbol{\epsilon} \sim \cN(\mathbf{0}, \mI)$, we have
\[
\EEb{\xx' \mid \yy}{\xx'} = \yy + \sigma^2 \nabla_{\yy} \log p(\yy).
\]
In our case, $\yy = \xx_t$, $\xx' = (1-t)\xx$, and $\sigma = t$, so Eq.~\eqref{eq:tweedie_xprime} follows directly.

Since $\xx'=(1-t)\xx$, we can recover the conditional expectation of $\xx$:
\begin{align}
\EEb{\xx \mid \xx_t}{\xx} 
&= \frac{1}{1-t}\EEb{\xx'\mid \xx_t}{\xx'} \nonumber\\
&= \frac{1}{1-t}\left[\xx_t + t^2\,\ss_t(\xx_t)\right] \nonumber\\
&= \frac{\xx_t + t^2\,\ss_t(\xx_t)}{1-t}.
\label{eq:cond_x}
\end{align}

\textbf{Step 3: Express the conditional mean of $\zz$.}
From the OT path $\xx_t=t\zz+(1-t)\xx$, we can solve for $\zz$:
\[
\zz=\frac{\xx_t-(1-t)\xx}{t}.
\]
Taking conditional expectations on both sides given $\xx_t$:
\begin{align}
\EEb{\zz\mid \xx_t}{\zz}
&= \EEb{}{\frac{\xx_t-(1-t)\xx}{t} \,\Big|\, \xx_t} \nonumber\\
&= \frac{1}{t}\Bigl[\xx_t - (1-t)\EEb{\xx\mid \xx_t}{\xx}\Bigr] \nonumber\\
&= \frac{1}{t}\Bigl[\xx_t - (1-t)\cdot \frac{\xx_t + t^2\,\ss_t(\xx_t)}{1-t}\Bigr] \quad \text{(substituting Eq.~\eqref{eq:cond_x})} \nonumber\\
&= \frac{1}{t}\Bigl[\xx_t - \bigl(\xx_t + t^2\,\ss_t(\xx_t)\bigr)\Bigr] \quad \text{(simplifying the fraction)} \nonumber\\
&= \frac{1}{t}\bigl[-t^2\,\ss_t(\xx_t)\bigr] \nonumber\\
&= -t\,\ss_t(\xx_t).
\label{eq:cond_z_clean}
\end{align}

\textbf{Step 4: Form the optimal velocity and rearrange.}
By definition, the (least squares) optimal velocity field along the OT path is the conditional expectation of the target velocity $\zz - \xx$:
\[
\vv^*(\xx_t,t):=\EEb{\zz-\xx\mid \xx_t}{\zz-\xx}=\EEb{\zz\mid \xx_t}{\zz}-\EEb{\xx\mid \xx_t}{\xx}.
\]
Substituting Eq.~\eqref{eq:cond_x} and Eq.~\eqref{eq:cond_z_clean}:
\begin{align}
\vv^*(\xx_t,t)
&= -t\,\ss_t(\xx_t) - \frac{\xx_t + t^2\,\ss_t(\xx_t)}{1-t} \nonumber\\
&= \frac{-t(1-t)\,\ss_t(\xx_t) - (\xx_t + t^2\,\ss_t(\xx_t))}{1-t} \quad \text{(common denominator)} \nonumber\\
&= \frac{-t\,\ss_t(\xx_t) + t^2\,\ss_t(\xx_t) - \xx_t - t^2\,\ss_t(\xx_t)}{1-t} \nonumber\\
&= \frac{-t\,\ss_t(\xx_t) - \xx_t}{1-t} \nonumber\\
&= -\frac{\xx_t + t\,\ss_t(\xx_t)}{1-t}.
\label{eq:vstar_in_terms_of_score}
\end{align}

\textbf{Step 5: Rearrange to obtain the score-velocity duality.}
Multiplying both sides of Eq.~\eqref{eq:vstar_in_terms_of_score} by $(1-t)$:
\[
(1-t)\vv^*(\xx_t,t) = -\xx_t - t\,\ss_t(\xx_t).
\]
Rearranging:
\[
\xx_t + (1-t)\vv^*(\xx_t,t) = -t\,\ss_t(\xx_t),
\]
which, upon dividing both sides by $-t$, gives exactly Eq.~\eqref{eq:score_velocity_relation}:
\[
\ss_t(\xx_t) = -\frac{\xx_t + (1-t)\vv^*(\xx_t,t)}{t}.
\]
\end{proof}

\begin{corollary}[Velocity Difference as Score Difference]
\label{cor:velocity_score}
For any two OT noising constructions that induce marginals $p_{1,t},p_{2,t}$ and corresponding conditional mean velocities $\vv_i(\xx_t,t):=\EEb{\zz-\xx\mid \xx_t}{\zz-\xx}$ ($i\in\{1,2\}$) at the same $(\xx_t,t)$, their velocity difference and score difference satisfy
\begin{equation}
\vv_1(\xx_t,t) - \vv_2(\xx_t,t) = -\frac{t}{1-t}\left[\ss_1(\xx_t) - \ss_2(\xx_t)\right].
\label{eq:velocity_score_diff}
\end{equation}
\end{corollary}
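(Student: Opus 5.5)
The corollary follows from two applications of the Score--Velocity Duality (\propref{prop:score_velocity}), one per noising construction, followed by subtracting the results at the same point $(\xx_t,t)$. For $t\in(0,1)$, the proposition gives
\[
\ss_i(\xx_t) = -\frac{\xx_t + (1-t)\vv_i(\xx_t,t)}{t},\qquad i\in\{1,2\}.
\]
The proof of \propref{prop:score_velocity} only used three facts: the additive Gaussian structure $\xx_t=(1-t)\xx+t\zz$ with $\zz\sim\cN(\mathbf{0},\mI)$ independent of $\xx$, Tweedie's formula, and the definition of $\vv^*$ as a conditional mean. None of these depend on which clean distribution is used. So I would first note that the proposition holds verbatim with $p_{\text{data}}$ replaced by the clean distribution $p_i$ of construction $i$. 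Its marginal is then $p_{i,t}$, its score is $\ss_i$, and its conditional mean velocity is $\vv_i$.

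Next I would solve each identity for the velocity, as in Eq.~\eqref{eq:vstar_in_terms_of_score}:
\[
\vv_i(\xx_t,t) = -\frac{\xx_t + t\,\ss_i(\xx_t)}{1-t}.
\]
Subtracting the $i=2$ expression from the $i=1$ expression at the same $(\xx_t,t)$, the $\xx_t/(1-t)$ terms cancel. This leaves
\[
\vv_1-\vv_2=-\frac{t}{1-t}(\ss_1-\ss_2),
\]
which is exactly Eq.~\eqref{eq:velocity_score_diff}. The key point is that the affine relation has the same intercept and the same slope $-t/(1-t)$ for both constructions, because these depend only on $t$ and not on the data distribution.

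The algebra is trivial; the only real issue is well-posedness. Both scores must be evaluated at a common point $\xx_t$, so that point must lie where both $p_{1,t}$ and $p_{2,t}$ are positive. For $t>0$ each marginal is a Gaussian convolution, hence smooth and everywhere positive, so both scores and both conditional means are defined at every $\xx_t\in\R^d$. I would also state the restriction $t\in(0,1)$ explicitly, since the factor $1/(1-t)$ and Tweedie's variance $t^2$ degenerate at the endpoints. Finally, I would add a remark that applying the corollary to $\vv_{\mtheta}$ and $\vv_{\text{fake}}$ requires them to be exact conditional means of their respective constructions; otherwise the identity should be read as a definition of the induced scores, as in Eq.~\eqref{eq:score_approx_def}.
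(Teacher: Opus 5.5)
Your proof is correct and is essentially the paper's argument: apply the Score--Velocity Duality to each construction at the same $(\xx_t,t)$ and subtract. The only cosmetic difference is that the paper subtracts the two score expressions and then rearranges, whereas you first solve each for $\vv_i$ and then subtract; your extra remarks (restricting to $t\in(0,1)$, positivity of the Gaussian-smoothed marginals, and the caveat that $\vv_{\mtheta}$ and $\vv_{\text{fake}}$ must be exact conditional means for the corollary to apply to them) are sound additions the paper does not make.
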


\begin{proof}
Applying Proposition~\ref{prop:score_velocity} to both velocity fields:
\begin{align}
\ss_1(\xx_t) &= -\frac{\xx_t + (1-t)\vv_1(\xx_t,t)}{t}, \label{eq:score1} \\
\ss_2(\xx_t) &= -\frac{\xx_t + (1-t)\vv_2(\xx_t,t)}{t}. \label{eq:score2}
\end{align}
Subtracting Eq.~\eqref{eq:score2} from Eq.~\eqref{eq:score1}:
\begin{align}
\ss_1(\xx_t) - \ss_2(\xx_t) 
&= -\frac{1}{t}\left[(\xx_t + (1-t)\vv_1) - (\xx_t + (1-t)\vv_2)\right] \nonumber \\
&= -\frac{1-t}{t}\left[\vv_1(\xx_t,t) - \vv_2(\xx_t,t)\right].
\end{align}
Rearranging yields Eq.~\eqref{eq:velocity_score_diff}.
\end{proof}

\subsection{KL Gradient in Velocity Space}
\label{app:kl_velocity}

We now show how the KL divergence gradient between two flow-induced distributions can be expressed purely in terms of their velocity fields. This result is fundamental to understanding how APEX's training objective connects to distribution matching.
\begin{lemma}[Gradient of KL Divergence via Reparameterization]
    \label{lemma:kl_grad_reparam}
    Let $p_{\mtheta}(\xx)$ be a probability density parameterized by $\mtheta$, defined by the push-forward of a fixed base distribution $p(\zz)$ through a differentiable mapping $\xx = T_{\mtheta}(\zz)$ (the reparameterization trick). Let $q(\xx)$ be a target distribution independent of $\mtheta$. The gradient of the KL divergence $D_{\mathrm{KL}}(p_{\mtheta} \| q)$ with respect to $\mtheta$ satisfies:
    \begin{equation}
    \nabla_{\mtheta} D_{\mathrm{KL}}(p_{\mtheta} \| q) = \E_{\zz \sim p(\zz)}{ (\ss_{\mtheta}(T_{\mtheta}(\zz)) - \ss_{q}(T_{\mtheta}(\zz))) \cdot \nabla_{\mtheta} T_{\mtheta}(\zz) }.
    \label{eq:kl_grad_reparam}
    \end{equation}
    where $\ss_{\mtheta}(\xx) = \nabla_{\xx} \log p_{\mtheta}(\xx)$ and $\ss_{q}(\xx) = \nabla_{\xx} \log q(\xx)$ are the score functions of the model and target distributions, respectively.
\end{lemma}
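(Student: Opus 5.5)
We prove the lemma by writing the KL divergence as an expectation over the base variable $\zz$ and then differentiating under the integral sign. The one subtlety is that $\log p_{\mtheta}$ depends on $\mtheta$ both through its argument and explicitly; the explicit part has zero mean.

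\textbf{Step 1: reparameterize.} By the push-forward definition $\xx = T_{\mtheta}(\zz)$, we have
\begin{equation*}
D_{\mathrm{KL}}(p_{\mtheta}\|q) = \E_{\zz\sim p(\zz)}\bigl[\log p_{\mtheta}(T_{\mtheta}(\zz)) - \log q(T_{\mtheta}(\zz))\bigr].
\end{equation*}
The base distribution $p(\zz)$ does not depend on $\mtheta$. Assuming enough regularity (smooth densities, $T_{\mtheta}$ differentiable, and dominated convergence for the derivatives), the gradient can be moved inside the expectation.

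\textbf{Step 2: chain rule on each term.} The target term is direct:
\begin{equation*}
\nabla_{\mtheta}\log q(T_{\mtheta}(\zz)) = \ss_q(T_{\mtheta}(\zz))\cdot\nabla_{\mtheta}T_{\mtheta}(\zz).
\end{equation*}
For the model term, $\mtheta$ enters twice, so the total derivative splits as
\begin{equation*}
\nabla_{\mtheta}\log p_{\mtheta}(T_{\mtheta}(\zz)) = \ss_{\mtheta}(T_{\mtheta}(\zz))\cdot\nabla_{\mtheta}T_{\mtheta}(\zz) + \bigl(\nabla_{\mtheta}\log p_{\mtheta}\bigr)(\xx)\big|_{\xx=T_{\mtheta}(\zz)}.
\end{equation*}
The second summand is the partial derivative with $\xx$ held fixed.

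\textbf{Step 3: kill the explicit term.} Its expectation under $\zz\sim p$ is an expectation of the score-function term under $\xx\sim p_{\mtheta}$:
\begin{equation*}
\int p_{\mtheta}(\xx)\,\nabla_{\mtheta}\log p_{\mtheta}(\xx)\,\dm\xx = \nabla_{\mtheta}\int p_{\mtheta}(\xx)\,\dm\xx = \nabla_{\mtheta}1 = \mathbf{0}.
\end{equation*}
Collecting the remaining two terms gives Eq.~\eqref{eq:kl_grad_reparam}.

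\textbf{Main obstacle.} The delicate step is Step 3. It needs $T_{\mtheta}$ to induce a genuine density $p_{\mtheta}$ that is differentiable in $\mtheta$ (for example, $T_{\mtheta}$ a diffeomorphism, or the marginal density of a noised $\xx_t$ as used in the main text). It also needs the interchange of $\nabla_{\mtheta}$ and $\int$ to be justified. We will state these as standing regularity assumptions rather than prove them.

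If one prefers to avoid the density-derivative identity, an alternative is to use the change-of-variables formula
\begin{equation*}
\log p_{\mtheta}(T_{\mtheta}(\zz)) = \log p(\zz) - \log\lvert\det \nabla_{\zz}T_{\mtheta}\rvert.
\end{equation*}
This route, however, reaches the same conclusion only after a longer calculation.
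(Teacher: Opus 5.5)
Your proposal is correct and follows essentially the same route as the paper's proof. Both rewrite the KL as an expectation over $\zz$, move the gradient inside, and split the total derivative of $\log p_{\mtheta}(T_{\mtheta}(\zz))$. Both then kill the explicit term using $\int \nabla_{\mtheta} p_{\mtheta}(\xx)\,\dm\xx = \nabla_{\mtheta} 1 = \mathbf{0}$. The regularity conditions you make explicit, namely a $\mtheta$-differentiable density and a justified derivative–integral interchange, are ones the paper uses without stating.
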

    
\begin{proof}
    Consider the KL divergence defined as an expectation over the reparameterized variable $\zz$:
    \begin{equation}
    \mathcal{L}(\mtheta) = D_{\mathrm{KL}}(p_{\mtheta} \| q) = \E{\zz \sim p(\zz)}{ \log p_{\mtheta}(T_{\mtheta}(\zz)) - \log q(T_{\mtheta}(\zz)) }.
    \end{equation}
    Since the base distribution $p(\zz)$ does not depend on $\mtheta$, we can move the gradient operator $\nabla_{\mtheta}$ inside the expectation. Applying the total derivative (chain rule) to the terms inside the expectation yields:
    \begin{align}
    \nabla_{\mtheta} \mathcal{L}(\mtheta) 
    &= \E_{\zz}{\left[ \nabla_{\mtheta} \big( \log p_{\mtheta}(\xx) \big)\big|_{\xx=T_{\mtheta}(\zz)} - \nabla_{\mtheta} \big( \log q(\xx) \big)\big|_{\xx=T_{\mtheta}(\zz)} \right] } \nonumber \\
    &= \E_{\zz}{ \left[ \nabla_{\mtheta} \log p_{\mtheta}(\xx) \Big|_{\text{fixed } \xx} + \nabla_{\xx} \log p_{\mtheta}(\xx) \cdot \frac{\partial \xx}{\partial \mtheta} - \nabla_{\xx} \log q(\xx) \cdot \frac{\partial \xx}{\partial \mtheta} \right] }.
    \label{eq:chain_rule_expansion}
    \end{align}
    Note that the first term corresponds to the standard score function estimator identity, which vanishes under expectation:
    \begin{equation}
    \E_{\xx \sim p_{\mtheta}}{ \nabla_{\mtheta} \log p_{\mtheta}(\xx) \Big|_{\text{fixed } \xx} } = \int \nabla_{\mtheta} p_{\mtheta}(\xx) \, \dm\xx = \nabla_{\mtheta} \int p_{\mtheta}(\xx) \, \dm\xx = \nabla_{\mtheta}(1) = 0.
    \end{equation}
    Substituting the definitions of the score functions $\ss_{\mtheta} = \nabla_{\xx} \log p_{\mtheta}$ and $\ss_{q} = \nabla_{\xx} \log q$ into Eq.~\eqref{eq:chain_rule_expansion}, and removing the zero-mean term, we obtain:
    \begin{align}
    \nabla_{\mtheta} \mathcal{L}(\mtheta) 
    &= \E_{\zz}{ \ss_{\mtheta}(\xx) \cdot \frac{\partial \xx}{\partial \mtheta} - \ss_{q}(\xx) \cdot \frac{\partial \xx}{\partial \mtheta} } \nonumber \\
    &= \E_{\xx \sim p_{\mtheta}}{ (\ss_{\mtheta}(\xx) - \ss_{q}(\xx)) \cdot \frac{\partial \xx}{\partial \mtheta} }.
    \end{align}
\end{proof}

\begin{proposition}[KL Gradient via Velocity Difference]
\label{prop:kl_velocity}
Let $p_{\text{fake}}(\xx|\cc)$ be the distribution induced by a flow with velocity field $\vv_{\mtheta}(\xx_t,t,\cc)$, and $p_{\text{real}}(\xx|\cc)$ the data distribution with velocity $\vv_{\text{data}}(\xx_t) = \zz - \xx$ under the OT path. 
Then the gradient of the KL divergence with respect to model parameters $\mtheta$ satisfies
\begin{equation}
\nabla_{\mtheta} D_{\text{KL}}(p_{\text{fake}} \| p_{\text{real}}) 
= -\frac{1}{\omega(t)}\,\EEb{\xx_t,\zz,t}{\bigl(\vv_{\mtheta}(\xx_t,t,\cc) - \vv_{\text{data}}(\xx_t)\bigr) \cdot \frac{\partial\xx_t}{\partial\mtheta}},
\label{eq:kl_velocity_grad}
\end{equation}
where $\omega(t) = \frac{t}{1-t} > 0$ is a positive time dependent weight.
Since $\omega(t) > 0$, this gradient drives $\vv_{\mtheta} \to \vv_{\text{data}}$ under gradient descent,
confirming that minimizing $D_{\mathrm{KL}}$ is equivalent to regressing $\vv_{\mtheta}$ toward the real data velocity.
\end{proposition}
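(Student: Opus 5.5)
The plan is to combine Lemma~\ref{lemma:kl_grad_reparam} with Corollary~\ref{cor:velocity_score}, applied at the level of the time marginals. Fix $t\in(0,1)$ and regard $\xx_t$ as a reparameterized sample: $\xx_t = t\zz + (1-t)\xx$, where $\xx$ is produced by the generator and hence depends on $\mtheta$. This gives a push-forward $T_{\mtheta}$ of the fixed base noise, and the KL gradient in Eq.~\eqref{eq:apex_kl_grad} is simply Lemma~\ref{lemma:kl_grad_reparam} applied to the pair $(p_{\text{fake},t}, p_{\text{real},t})$. The steps are then:
\begin{enumerate}
\item Averaging over $t$ with the time weighting used throughout gives the expectation over $(\xx_t,\zz,t)$ in Eq.~\eqref{eq:apex_kl_grad}.
\item Next, rewrite the score difference $\ss_{\text{fake}}-\ss_{\text{real}}$ in terms of velocities. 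Corollary~\ref{cor:velocity_score} gives $\ss_1-\ss_2 = -\frac{1-t}{t}(\vv_1-\vv_2) = -\frac{1}{\omega(t)}(\vv_1-\vv_2)$.
\item Take $\vv_1=\vv_{\mtheta}(\xx_t,t,\cc)$, the model's velocity, which is treated as the conditional mean velocity of $p_{\text{fake}}$ (the proxy discussed in \secref{subsec:kl}). Take $\vv_2$ to be the conditional mean velocity of the real construction.
\item Substitute into the lemma. Since $-\frac{1}{\omega(t)}$ is a scalar independent of $\xx_t$ and $\mtheta$, it factors out of the inner product with $\partial\xx_t/\partial\mtheta$.
\end{enumerate}

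The remaining step is to replace the real conditional mean velocity $\EEb{}{\zz-\xx\mid\xx_t}$ by the per-sample target $\vv_{\text{data}}(\xx_t)=\zz-\xx$. I would argue this by the tower property: $\partial\xx_t/\partial\mtheta$ is a function of $(\zz,\xx)$, so conditioning on $\xx_t$ alone does not make it measurable. Rigorously, then, the replacement is only justified in the regression sense. That is, $\vv_{\text{data}}$ is an unbiased estimator of $\vv_2$ given $\xx_t$, so the two expressions agree after taking the full expectation, up to a term that vanishes under the usual flow-matching argument. I would state this explicitly, in the same way the FM loss in Eq.~\eqref{eq:fm_loss} recovers $\vv^*$.

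I expect this replacement, together with the identification $\ss_{\text{fake}}\leftrightarrow\vv_{\mtheta}$, to be the main obstacle. The latter is exact only when $\vv_{\mtheta}$ equals the true conditional mean velocity of the induced marginals, which is why the paper calls it a proxy. I would therefore present the proposition under that idealization: $\vv_{\mtheta}$ is population optimal for $p_{\text{fake}}$, and the real velocity is represented by its unbiased sample target.

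The final sentence of the proposition follows from the sign of the prefactor. Because $\omega(t)>0$, the direction of the gradient is determined by $\vv_{\mtheta}-\vv_{\text{data}}$, so descent pushes $\vv_{\mtheta}$ toward $\vv_{\text{data}}$.
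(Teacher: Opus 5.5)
Your route is the same as the paper's. The reparameterization lemma gives the score-difference form, the velocity--score corollary turns $\ss_{\text{fake}}-\ss_{\text{real}}$ into $-\frac{1}{\omega(t)}(\vv_1-\vv_2)$, and the scalar is pulled out. The paper never justifies replacing the real conditional-mean velocity by $\zz-\xx$; it just plugs $\vv_{\text{data}}$ into the corollary as if it were the real marginal's velocity field.

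\textbf{The tower-property step does not work.} The tower property is only guaranteed to kill the fluctuation $(\zz-\xx)-\mathbb{E}[\zz-\xx\mid\xx_t]$ against multipliers that are functions of $\xx_t$; in flow matching that multiplier is $\partial\mmF_{\mtheta}(\xx_t,t,\cc)/\partial\mtheta$. Here the multiplier is $\partial\xx_t/\partial\mtheta$, which, as you note yourself, depends on the full base noise. So the cross term $\mathbb{E}\bigl[\bigl((\zz-\xx)-\vv_2(\xx_t)\bigr)\cdot\partial\xx_t/\partial\mtheta\bigr]$ has no reason to vanish, and ``the usual flow-matching argument'' does not apply. There is also a more basic problem. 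The lemma's expectation is over $\xx_t\sim p_{\text{fake},t}$, whose own pair is $(\zz,\xx^{\text{fake}})$. The per-sample target $\zz-\xx^{\text{fake}}$ is conditionally unbiased for the \emph{fake} velocity, and no real $\xx$ is coupled to that $\xx_t$. So there is no per-sample target with conditional mean $\vv_2$ at all.

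To fix this, do what the paper does implicitly: define $\vv_{\text{data}}(\xx_t)$ as the real conditional-mean field $\mathbb{E}_{\text{real}}[\zz-\xx\mid\xx_t]$. Then the corollary applies exactly and no tower argument is needed. Alternatively, state the per-sample substitution as an explicit idealization next to the $\vv_{\mtheta}$ proxy.

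\textbf{Your argument for the final sentence does not follow.} Positivity of $\omega(t)$ does not determine the descent direction. The velocity difference is contracted with $\partial\xx_t/\partial\mtheta$, not with $\partial\vv_{\mtheta}/\partial\mtheta$, and it carries an explicit minus sign. If those two Jacobians were positively aligned, gradient descent would push $\vv_{\mtheta}$ \emph{away} from $\vv_{\text{data}}$.

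The claim holds because $\xx_t$ depends on $\mtheta$ through a generated endpoint. Take $\xx^{\text{fake}}=\xx_t-t\,\mmF_{\mtheta}(\xx_t,t,\cc)$ and $\xx^{\text{fake}}_t=t\zz+(1-t)\xx^{\text{fake}}$. Then $\partial\xx^{\text{fake}}_t/\partial\mtheta=-t(1-t)\,\partial\mmF_{\mtheta}/\partial\mtheta$, and the gradient becomes $(1-t)^2\,\mathbb{E}\bigl[(\vv_{\mtheta}-\vv_{\text{data}})\cdot\partial\mmF_{\mtheta}/\partial\mtheta\bigr]$. The two minus signs cancel, so the update has the sign of a flow-matching regression step. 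You need to include this chain-rule step; the paper omits it too.

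Finally, the lemma at fixed $t$ gives the gradient of $D_{\mathrm{KL}}(p_{\text{fake},t}\|p_{\text{real},t})$. Averaging over $t$ therefore yields a time-averaged marginal KL, not the clean-data KL. Also, $-1/\omega(t)$ has to stay inside the $t$-expectation.
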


\begin{proof}
    We derive the gradient by directly applying \lemref{lemma:kl_grad_reparam}. Let the model distribution be $p_{\text{fake}}$ (parameterized by $\mtheta$) and the target distribution be $p_{\text{real}}$. By identifying the reparameterization mapping as the flow trajectory $\xx_t$, \lemref{lemma:kl_grad_reparam} implies that the gradient of the KL divergence is the expectation of the dot product between the score difference and the path gradient:
    \begin{equation}
        \nabla_{\mtheta} D_{\text{KL}}(p_{\text{fake}} \| p_{\text{real}}) 
        = \E_{\xx_t \sim p_{\text{fake}}} \left[ (\ss_{\text{fake}}(\xx_t) - \ss_{\text{real}}(\xx_t)) \cdot \frac{\partial \xx_t}{\partial \mtheta} \right],
        \label{eq:kl_grad_lemma_app}
    \end{equation}
    where $\ss_{\text{fake}}(\xx_t) = \nabla_{\xx_t} \log p_{\text{fake}}(\xx_t)$ and $\ss_{\text{real}}(\xx_t) = \nabla_{\xx_t} \log p_{\text{real}}(\xx_t)$.

    Next, we invoke the duality between score and velocity fields for Optimal Transport paths (\corref{cor:velocity_score}). The difference between the model score and the target score is proportional to the difference between their respective velocity fields:
    \begin{equation}
        \ss_{\text{fake}}(\xx_t) - \ss_{\text{real}}(\xx_t) 
        = -\frac{1-t}{t} \bigl( \vv_{\mtheta}(\xx_t, t, \cc) - \vv_{\text{data}}(\xx_t) \bigr).
        \label{eq:score_velocity_relation_app}
    \end{equation}
    Substituting Eq.~\eqref{eq:score_velocity_relation_app} into Eq.~\eqref{eq:kl_grad_lemma_app}, we obtain:
    \begin{equation}
        \nabla_{\mtheta} D_{\text{KL}}(p_{\text{fake}} \| p_{\text{real}}) 
        = \E_{\xx_t, \zz, t} \left[ -\frac{1-t}{t} \bigl(\vv_{\mtheta}(\xx_t, t, \cc) - \vv_{\text{data}}(\xx_t)\bigr) \cdot \frac{\partial \xx_t}{\partial \mtheta} \right].
    \end{equation}
    Defining $\omega(t) := \frac{t}{1-t} > 0$, we identify $-\frac{1-t}{t} = -\frac{1}{\omega(t)}$, giving exactly:
    \begin{equation}
        \nabla_{\mtheta} D_{\text{KL}}(p_{\text{fake}} \| p_{\text{real}}) 
        = -\frac{1}{\omega(t)}\,\E_{\xx_t, \zz, t} \left[ \bigl(\vv_{\mtheta}(\xx_t, t, \cc) - \vv_{\text{data}}(\xx_t)\bigr) \cdot \frac{\partial \xx_t}{\partial \mtheta} \right],
    \end{equation}
    which establishes Eq.~\eqref{eq:kl_velocity_grad}.
\end{proof}

\subsection{Endpoint--Velocity Equivalence}
\label{app:endpoint-proof}

We prove that the endpoint space MSE and velocity space MSE are exactly equivalent up to a scalar factor $t^2$. 
This result establishes that training objectives formulated in either space are mathematically interchangeable.

\begin{proposition}[Endpoint--Velocity Equivalence for Supervised FM]
\label{prop:endpoint_velocity}
Let $\mf^{\xx}(\mmF, \xx_t, t) := \xx_t - t\,\mmF$ be the endpoint predictor defined in Eq.~\ref{eq:endpoint_predictor}, and let $\vv_{\text{data}}(\xx_t) = \zz - \xx$ be the target velocity under the OT path. 
Then for any velocity estimate $\mmF_{\mtheta}$, we have
\begin{equation}
\bigl\|\mf^{\xx}(\mmF_{\mtheta}, \xx_t, t) - \xx\bigr\|_2^2 
= t^2\,\bigl\|\mmF_{\mtheta} - \vv_{\text{data}}(\xx_t)\bigr\|_2^2.
\label{eq:endpoint_velocity_equiv1}
\end{equation}
\end{proposition}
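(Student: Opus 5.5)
The identity is purely algebraic and holds pointwise for every realization of $(\xx,\zz,t)$, so no expectation or distributional argument is needed. I will substitute the definition of the endpoint predictor from Eq.~\eqref{eq:endpoint_predictor} together with the OT interpolant $\xx_t = t\zz + (1-t)\xx$ from Eq.~\eqref{eq:ot_path_app}, and rewrite the residual $\mf^{\xx}(\mmF_{\mtheta},\xx_t,t)-\xx$ as a scalar multiple of the velocity residual $\mmF_{\mtheta}-\vv_{\text{data}}(\xx_t)$.

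Concretely, the steps are as follows. First, write
\[
\mf^{\xx}(\mmF_{\mtheta},\xx_t,t)-\xx = t\zz + (1-t)\xx - t\,\mmF_{\mtheta} - \xx .
\]
Second, collect the $\xx$ terms: $(1-t)\xx - \xx = -t\xx$. The residual then becomes
\[
t\zz - t\xx - t\,\mmF_{\mtheta} = -t\bigl(\mmF_{\mtheta} - (\zz-\xx)\bigr) = -t\bigl(\mmF_{\mtheta}-\vv_{\text{data}}(\xx_t)\bigr).
\]
Third, take squared Euclidean norms and use homogeneity, $\|{-t}\,\mathbf{u}\|_2^2 = t^2\|\mathbf{u}\|_2^2$. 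This gives Eq.~\eqref{eq:endpoint_velocity_equiv1}.

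There is no real obstacle here. The only point to watch is that $\vv_{\text{data}}(\xx_t)$ must be read as the per-sample target $\zz-\xx$ for the specific pair $(\xx,\zz)$ that generated $\xx_t$, not as the conditional mean $\vv^*$. With the conditional mean, the cancellation in the second step fails, and the identity would hold only up to an additive variance term.

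The same computation, with $\xx$ replaced by $\mf^{\xx}(\vv_{\text{fake}},\xx_t,t)$, gives Eq.~\eqref{eq:apex_endpoint_to_align} immediately. In that case the $\xx_t$ terms cancel directly, leaving $-t(\mmF_{\mtheta}-\vv_{\text{fake}})$. I would note this as a remark after the main argument.
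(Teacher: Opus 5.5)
Your proof is correct and follows essentially the same route as the paper: both rewrite the residual via the OT identity $\xx_t - \xx = t(\zz - \xx) = t\,\vv_{\text{data}}(\xx_t)$ and then pull out the scalar $t$ using homogeneity of the squared norm. Your remark that $\vv_{\text{data}}$ must be read as the per-sample target $\zz - \xx$, not the conditional mean $\vv^*$, is a useful clarification that the paper leaves implicit.
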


\begin{proof}
\textbf{Step 1: Expand the endpoint predictor.}
By definition of the endpoint predictor in Eq.~\eqref{eq:endpoint_pred_app}, we have
\begin{equation}
\mf^{\xx}(\mmF_{\mtheta}, \xx_t, t) = \xx_t - t\,\mmF_{\mtheta}.
\label{eq:endpoint_expand}
\end{equation}

\textbf{Step 2: Compute the squared error.}
The LHS of Eq.~\eqref{eq:endpoint_velocity_equiv1} is
\begin{align}
\bigl\|\mf^{\xx}(\mmF_{\mtheta}, \xx_t, t) - \xx\bigr\|_2^2
&= \bigl\|(\xx_t - t\,\mmF_{\mtheta}) - \xx\bigr\|_2^2 \nonumber \\
&= \bigl\|(\xx_t - \xx) - t\,\mmF_{\mtheta}\bigr\|_2^2.
\label{eq:endpoint_error_expand}
\end{align}

\textbf{Step 3: Use the OT path identity.}
Under the OT path $\xx_t = t\zz + (1-t)\xx$ from Eq.~\eqref{eq:ot_path_app}, we compute the difference $\xx_t - \xx$ step by step:
\begin{align}
\xx_t - \xx 
&= \bigl[t\zz + (1-t)\xx\bigr] - \xx \quad \text{(substituting the OT path)} \nonumber \\
&= t\zz + (1-t)\xx - \xx \nonumber \\
&= t\zz + (1-t)\xx - 1 \cdot \xx \quad \text{(writing $\xx = 1 \cdot \xx$)} \nonumber \\
&= t\zz + \xx - t\xx - \xx \quad \text{(expanding $(1-t)\xx$)} \nonumber \\
&= t\zz - t\xx \quad \text{(canceling $\xx$)} \nonumber \\
&= t(\zz - \xx). \quad \text{(factoring out $t$)}
\label{eq:ot_diff}
\end{align}
Recall that under the OT path, the target velocity is defined as $\vv_{\text{data}}(\xx_t) := \zz - \xx$, which is the instantaneous rate of change from data $\xx$ to noise $\zz$. 
Therefore, we obtain the key identity:
\begin{equation}
\xx_t - \xx = t\,\vv_{\text{data}}(\xx_t).
\label{eq:xt_minus_x}
\end{equation}
This identity says that the displacement from the clean data $\xx$ to the noised sample $\xx_t$ is exactly $t$ times the target velocity, which makes intuitive sense since we've traveled for "time" $t$ along the trajectory.

\textbf{Step 4: Substitute and simplify.}
Substituting Eq.~\eqref{eq:xt_minus_x} into Eq.~\eqref{eq:endpoint_error_expand}:
\begin{align}
\bigl\|(\xx_t - \xx) - t\,\mmF_{\mtheta}\bigr\|_2^2
&= \bigl\|t\,\vv_{\text{data}}(\xx_t) - t\,\mmF_{\mtheta}\bigr\|_2^2 \quad \text{(using Eq.~\eqref{eq:xt_minus_x})} \nonumber \\
&= \bigl\|t\,(\vv_{\text{data}}(\xx_t) - \mmF_{\mtheta})\bigr\|_2^2 \quad \text{(factoring out $t$)} \nonumber \\
&= t^2\,\bigl\|\vv_{\text{data}}(\xx_t) - \mmF_{\mtheta}\bigr\|_2^2, \quad \text{(using $\|c\vv\|_2^2 = c^2\|\vv\|_2^2$)}
\label{eq:endpoint_final}
\end{align}
which proves Eq.~\eqref{eq:endpoint_velocity_equiv1}. The final step uses the homogeneity property of the squared $\ell_2$ norm.

\textbf{Geometric interpretation:} This result shows that predicting the clean endpoint $\xx$ is equivalent to predicting the velocity $\zz - \xx$, scaled by the time factor $t$. When $t$ is small (near clean data), the endpoint prediction is very sensitive to velocity errors. When $t$ is large (near pure noise), the endpoint prediction is less sensitive, which motivates using time dependent weighting $\omega(t)$ in the loss.
\end{proof}

\begin{proposition}[Endpoint--Velocity Equivalence for Fake Alignment]
\label{prop:endpoint_velocity_fake}
For the fake alignment term, let $\vv_{\text{fake}}(\xx_t, t, \cc_{\text{fake}}) := \sg\!\left(\mmF_{\mtheta}(\xx_t, t, \cc_{\text{fake}})\right)$ be the fake velocity field obtained by querying the same online network $\mmF_{\mtheta}$ under the shifted condition $\cc_{\text{fake}}$ with stop gradient applied. 
Then
\begin{equation}
\bigl\|\mf^{\xx}(\mmF_{\mtheta}, \xx_t, t) - \mf^{\xx}(\vv_{\text{fake}}, \xx_t, t)\bigr\|_2^2 
= t^2\,\bigl\|\mmF_{\mtheta} - \vv_{\text{fake}}(\xx_t,t,\cc_{\text{fake}})\bigr\|_2^2.
\label{eq:endpoint_velocity_equiv2}
\end{equation}
\end{proposition}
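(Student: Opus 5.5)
The plan is to follow the same template as the proof of Proposition~\ref{prop:endpoint_velocity}, except that the fixed target $\xx$ is replaced by the endpoint induced by the fake velocity. The key observation is that $\mf^{\xx}(\cdot,\xx_t,t)$ is an affine map of its velocity argument with the \emph{same} offset $\xx_t$ and the same scalar slope $-t$ for both inputs. This holds because both endpoint predictors are evaluated at the identical point $(\xx_t,t)$; only the velocity argument differs. Taking their difference therefore cancels $\xx_t$ exactly, and no OT-path identity such as Eq.~\eqref{eq:xt_minus_x} is needed.

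\textbf{Step 1.} Expand both predictors using Eq.~\eqref{eq:endpoint_pred_app}:
\[
\mf^{\xx}(\mmF_{\mtheta},\xx_t,t)=\xx_t-t\,\mmF_{\mtheta},\qquad \mf^{\xx}(\vv_{\text{fake}},\xx_t,t)=\xx_t-t\,\vv_{\text{fake}}(\xx_t,t,\cc_{\text{fake}}).
\]
Here the stop gradient on $\vv_{\text{fake}}$ plays no role: $\sg(\cdot)$ is the identity on values and only affects differentiation. So for the value-level identity we may treat $\vv_{\text{fake}}$ as an ordinary vector in $\R^d$.

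\textbf{Step 2.} Subtract the two expressions. The $\xx_t$ terms cancel, giving
\[
\mf^{\xx}(\mmF_{\mtheta},\xx_t,t)-\mf^{\xx}(\vv_{\text{fake}},\xx_t,t)=-t\bigl(\mmF_{\mtheta}-\vv_{\text{fake}}(\xx_t,t,\cc_{\text{fake}})\bigr).
\]

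\textbf{Step 3.} Take the squared $\ell_2$ norm and use the homogeneity $\|c\,\vv\|_2^2=c^2\|\vv\|_2^2$ with $c=-t$, which yields Eq.~\eqref{eq:endpoint_velocity_equiv2}.

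There is no real obstacle in this argument. The one point to state explicitly is that both predictors must share the same $(\xx_t,t)$; in particular, the fake velocity is queried at the real-trajectory point $\xx_t$, not at $\xx^{\text{fake}}_t$, exactly as in $\cL_{\text{cons}}$. If they were evaluated at different inputs, the offsets would not cancel and the identity would fail.

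It is also worth adding a short remark that the identity holds pointwise for every $(\xx_t,t,\zz)$. Multiplying by $1/\omega(t)$ and taking expectations therefore transfers the equivalence directly to $\cL_{\text{cons}}$ in Eq.~\eqref{eq:l_cons}.
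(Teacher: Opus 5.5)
Your proof is correct and follows the paper's argument step for step: expand both endpoint predictors at the shared $(\xx_t,t)$, cancel $\xx_t$ to obtain $t(\vv_{\text{fake}}-\mmF_{\mtheta})$, and apply homogeneity of the squared norm. Your remarks that the stop gradient does not change values, and that no OT-path identity is needed here (unlike the supervised case), are accurate points the paper leaves implicit.
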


\begin{proof}
\textbf{Step 1: Expand both endpoint predictors.}
By definition,
\begin{align}
\mf^{\xx}(\mmF_{\mtheta}, \xx_t, t) &= \xx_t - t\,\mmF_{\mtheta}, \\
\mf^{\xx}(\vv_{\text{fake}}, \xx_t, t) &= \xx_t - t\,\vv_{\text{fake}}(\xx_t, t, \cc_{\text{fake}}).
\end{align}

\textbf{Step 2: Compute the difference.}
\begin{align}
\mf^{\xx}(\mmF_{\mtheta}, \xx_t, t) - \mf^{\xx}(\vv_{\text{fake}}, \xx_t, t)
&= \bigl[\xx_t - t\,\mmF_{\mtheta}\bigr] - \bigl[\xx_t - t\,\vv_{\text{fake}}\bigr] \nonumber \\
&= \xx_t - t\,\mmF_{\mtheta} - \xx_t + t\,\vv_{\text{fake}} \nonumber \\
&= t\,\vv_{\text{fake}} - t\,\mmF_{\mtheta} \nonumber \\
&= t\,(\vv_{\text{fake}} - \mmF_{\mtheta}).
\label{eq:endpoint_diff_fake}
\end{align}

\textbf{Step 3: Square the norm.}
\begin{align}
\bigl\|\mf^{\xx}(\mmF_{\mtheta}, \xx_t, t) - \mf^{\xx}(\vv_{\text{fake}}, \xx_t, t)\bigr\|_2^2
&= \bigl\|t\,(\vv_{\text{fake}} - \mmF_{\mtheta})\bigr\|_2^2 \nonumber \\
&= t^2\,\bigl\|\vv_{\text{fake}} - \mmF_{\mtheta}\bigr\|_2^2 \nonumber \\
&= t^2\,\bigl\|\mmF_{\mtheta} - \vv_{\text{fake}}(\xx_t,t,\cc_{\text{fake}})\bigr\|_2^2,
\label{eq:endpoint_final_fake}
\end{align}
which proves Eq.~\eqref{eq:endpoint_velocity_equiv2}.
\end{proof}

\subsection{Gradient Equivalence of Alternative Loss}
\label{app:alt-loss-proof}

We now prove the key theoretical result: the gradient of the mixed consistency loss $\cL_{\text{mix}}$ is exactly equal to the gradient of the alternative loss $\cG_{\text{APEX}}$. 
This establishes that these two seemingly different objectives induce identical training dynamics in parameter space.

\begin{theorem}[Gradient Equivalence]
\label{thm:gradient_equivalence}
Let $\cL_{\text{mix}}(\mtheta)$ and $\cG_{\text{APEX}}(\mtheta)$ be defined as in Eq.~\ref{eq:l_mix} and Eq.~\ref{eq:g_apex}, respectively. 
Then for any parameter $\mtheta$,
\begin{equation}
\nabla_{\mtheta}\,\cL_{\text{mix}}(\mtheta) = \nabla_{\mtheta}\,\cG_{\text{APEX}}(\mtheta).
\label{eq:gradient_equiv_thm}
\end{equation}
\end{theorem}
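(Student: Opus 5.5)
The plan is to compare the two objectives pointwise, that is, for each fixed sample $(\xx,\zz,t)$, and then pass to expectations. Write $\mathbf{u} := \mf^{\xx}(\mmF_{\mtheta}, \xx_t, t)$ for the endpoint prediction under the real condition; it is the only $\mtheta$-dependent quantity in either loss. Write $\mathbf{a} := \xx$ and $\mathbf{b} := \mf^{\xx}(\vv_{\text{fake}}, \xx_t, t)$. The target $\mathbf{a}$ is independent of $\mtheta$. The target $\mathbf{b}$ is independent of $\mtheta$ for gradient purposes, because $\vv_{\text{fake}} = \sg(\mmF_{\mtheta}(\xx_t,t,\cc_{\text{fake}}))$ and $\xx_t$ is built from data and noise only. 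The integrands are then
\[
\tfrac{1}{\omega(t)}\bigl\|\mathbf{u}-\bigl((1-\lambda)\mathbf{a}+\lambda\mathbf{b}\bigr)\bigr\|_2^2
\quad\text{and}\quad
\tfrac{1}{\omega(t)}\Bigl[(1-\lambda)\|\mathbf{u}-\mathbf{a}\|_2^2+\lambda\|\mathbf{u}-\mathbf{b}\|_2^2\Bigr].
\]

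The key algebraic step is the bias–variance identity for convex combinations. For $\lambda\in[0,1]$,
\[
(1-\lambda)\|\mathbf{u}-\mathbf{a}\|_2^2+\lambda\|\mathbf{u}-\mathbf{b}\|_2^2
=\bigl\|\mathbf{u}-(1-\lambda)\mathbf{a}-\lambda\mathbf{b}\bigr\|_2^2+\lambda(1-\lambda)\|\mathbf{a}-\mathbf{b}\|_2^2 .
\]
I will verify this by expanding both sides in inner products. The $\|\mathbf{u}\|^2$ terms have coefficient $1$ on each side, the cross terms in $\mathbf{u}$ match, and the remaining terms reduce to $(1-\lambda)\|\mathbf{a}\|^2+\lambda\|\mathbf{b}\|^2-\|(1-\lambda)\mathbf{a}+\lambda\mathbf{b}\|^2=\lambda(1-\lambda)\|\mathbf{a}-\mathbf{b}\|^2$. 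Using $\mathbf{T}_{\text{mix}}=(1-\lambda)\mathbf{a}+\lambda\mathbf{b}$ from Eq.~\eqref{eq:tmix}, this says the integrand of $\cG_{\text{APEX}}$ equals the integrand of $\cL_{\text{mix}}$ plus $\tfrac{\lambda(1-\lambda)}{\omega(t)}\|\xx-\mf^{\xx}(\vv_{\text{fake}},\xx_t,t)\|_2^2$.

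The main point to argue carefully is that this residual term has zero $\mtheta$-gradient. It contains only $\xx$, $\xx_t$, $t$ and $\vv_{\text{fake}}$, and the stop-gradient convention declares the last of these constant. The endpoint form $\mf^{\xx}(\vv_{\text{fake}},\xx_t,t)=\xx_t-t\vv_{\text{fake}}$ introduces no other $\mtheta$ dependence.

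Alternatively, as a cross-check, I can differentiate directly. Both gradients equal $\tfrac{2}{\omega(t)}\bigl(\mathbf{u}-\mathbf{T}_{\text{mix}}\bigr)^\top\partial\mathbf{u}/\partial\mtheta$, since $(1-\lambda)(\mathbf{u}-\mathbf{a})+\lambda(\mathbf{u}-\mathbf{b})=\mathbf{u}-\mathbf{T}_{\text{mix}}$. Finally, I will interchange $\nabla_{\mtheta}$ with $\EEb{\xx_t,\zz,t}{\cdot}$, assuming the standard regularity needed for differentiation under the integral, which the paper takes for granted. Because $\omega(t)>0$ is a fixed weight, this gives Eq.~\eqref{eq:gradient_equiv_thm}.
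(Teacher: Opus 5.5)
Your proof is correct, but it takes a different route from the paper.

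\textbf{The paper's route.} It never compares loss values. It moves both objectives into velocity space using the OT identity $\xx_t-\xx=t\,\vv_{\text{data}}$ and the endpoint--velocity equivalence propositions. It then computes each gradient as $2t^2\langle\cdot,\nabla_{\mtheta}\mmF_{\mtheta}\rangle$. Finally it matches the two by splitting $\langle\mmF_{\mtheta},\nabla_{\mtheta}\mmF_{\mtheta}\rangle$ into its $(1-\lambda)$ and $\lambda$ parts.

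\textbf{Your route.} You stay in endpoint space and prove a loss-level identity. The integrand of $\cG_{\text{APEX}}$ equals that of $\cL_{\text{mix}}$ plus $\frac{\lambda(1-\lambda)}{\omega(t)}\|\xx-\mf^{\xx}(\vv_{\text{fake}},\xx_t,t)\|_2^2$, and this extra term has zero gradient under the stop-gradient convention. Your direct-differentiation cross-check is essentially the paper's regrouping step, done before the change of variables to velocity instead of after.

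\textbf{What each buys.} Your argument is more elementary and more general. It never uses the OT identity or the endpoint--velocity equivalence, so it holds for any interpolant, any endpoint parameterization, any targets that are constant under stop-gradient, and any real $\lambda$. It also gives a slightly stronger conclusion: the two objectives differ by an explicit stop-gradient constant. That term's value still changes between iterations, because $\vv_{\text{fake}}$ tracks the network, which is why the equivalence holds only at the level of gradients, as you correctly note.

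The paper's route, in exchange, directly yields the explicit velocity-space gradient
\[
2t^2\bigl[(1-\lambda)\langle\mmF_{\mtheta}-\vv_{\text{data}},\nabla_{\mtheta}\mmF_{\mtheta}\rangle+\lambda\langle\mmF_{\mtheta}-\vv_{\text{fake}},\nabla_{\mtheta}\mmF_{\mtheta}\rangle\bigr].
\]
It reuses this in the Fisher-divergence proposition to turn velocity residuals into score differences. You could recover it in one line by using the OT identity to write $\mathbf{u}-\mathbf{T}_{\text{mix}}=-t\bigl[\mmF_{\mtheta}-(1-\lambda)\vv_{\text{data}}-\lambda\vv_{\text{fake}}\bigr]$.
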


\begin{proof}
For notational simplicity, we focus on a single sample and omit the expectation $\EEb{\xx_t,\zz,t}{\cdot}$ and the weighting $\frac{1}{\omega(t)}$ (these are linear operations that commute with gradients). 
We use the shorthand $\mmF_{\mtheta} \equiv \mmF_{\mtheta}(\xx_t,t,\cc)$ and $\vv_{\text{fake}} \equiv \vv_{\text{fake}}(\xx_t,t,\cc_{\text{fake}})$.

\paragraph{Part A: Gradient of the mixed consistency loss.}

\textbf{Step A1: Write the mixed consistency loss.}
From Eq.~\ref{eq:l_mix}, the mixed consistency loss is
\begin{equation}
\cL_{\text{mix}}(\mtheta) = \bigl\|\mf^{\xx}(\mmF_{\mtheta}, \xx_t, t) - \mathbf{T}_{\text{mix}}(\xx_t,t)\bigr\|_2^2,
\label{eq:lmix_def}
\end{equation}
where the mixed target is defined in Eq.~\ref{eq:tmix} as
\begin{equation}
\mathbf{T}_{\text{mix}}(\xx_t,t) = (1-\lambda)\,\xx + \lambda\,\mf^{\xx}(\vv_{\text{fake}}, \xx_t, t).
\label{eq:tmix_def}
\end{equation}

\textbf{Step A2: Expand the endpoint predictors.}
Using the definition $\mf^{\xx}(\mmF, \xx_t, t) = \xx_t - t\,\mmF$ from Eq.~\eqref{eq:endpoint_pred_app}:
\begin{align}
\mf^{\xx}(\mmF_{\mtheta}, \xx_t, t) &= \xx_t - t\,\mmF_{\mtheta}, \label{eq:fx_student} \\
\mf^{\xx}(\vv_{\text{fake}}, \xx_t, t) &= \xx_t - t\,\vv_{\text{fake}}. \label{eq:fx_fake}
\end{align}

\textbf{Step A3: Substitute into the mixed target.}
Substituting Eq.~\eqref{eq:fx_fake} into Eq.~\eqref{eq:tmix_def}:
\begin{align}
\mathbf{T}_{\text{mix}}(\xx_t,t) 
&= (1-\lambda)\,\xx + \lambda\,(\xx_t - t\,\vv_{\text{fake}}) \nonumber \\
&= (1-\lambda)\,\xx + \lambda\,\xx_t - \lambda t\,\vv_{\text{fake}}.
\label{eq:tmix_expanded}
\end{align}

\textbf{Step A4: Compute the error term $\Delta$.}
Define the error as
\begin{equation}
\Delta := \mf^{\xx}(\mmF_{\mtheta}, \xx_t, t) - \mathbf{T}_{\text{mix}}(\xx_t,t).
\label{eq:delta_def}
\end{equation}
Substituting Eq.~\eqref{eq:fx_student} and Eq.~\eqref{eq:tmix_expanded}:
\begin{align}
\Delta 
&= (\xx_t - t\,\mmF_{\mtheta}) - \bigl[(1-\lambda)\,\xx + \lambda\,\xx_t - \lambda t\,\vv_{\text{fake}}\bigr] \nonumber \\
&= \xx_t - t\,\mmF_{\mtheta} - (1-\lambda)\,\xx - \lambda\,\xx_t + \lambda t\,\vv_{\text{fake}} \nonumber \\
&= \xx_t(1 - \lambda) - (1-\lambda)\,\xx - t\,\mmF_{\mtheta} + \lambda t\,\vv_{\text{fake}} \nonumber \\
&= (1-\lambda)(\xx_t - \xx) - t\,\mmF_{\mtheta} + \lambda t\,\vv_{\text{fake}}.
\label{eq:delta_step1}
\end{align}

\textbf{Step A5: Apply the OT path identity.}
From Eq.~\eqref{eq:xt_minus_x} (proven in Section~\ref{app:endpoint-proof}), we have
\begin{equation}
\xx_t - \xx = t\,\vv_{\text{data}}, \quad \text{where } \vv_{\text{data}} = \zz - \xx.
\label{eq:ot_identity_use}
\end{equation}
Substituting into Eq.~\eqref{eq:delta_step1}:
\begin{align}
\Delta 
&= (1-\lambda)\,t\,\vv_{\text{data}} - t\,\mmF_{\mtheta} + \lambda t\,\vv_{\text{fake}} \nonumber \\
&= t\bigl[(1-\lambda)\,\vv_{\text{data}} + \lambda\,\vv_{\text{fake}} - \mmF_{\mtheta}\bigr].
\label{eq:delta_final}
\end{align}

\textbf{Step A6: Compute the gradient using the chain rule.}
The gradient of the squared norm $\cL_{\text{mix}} = \|\Delta\|_2^2$ with respect to $\mtheta$ is
\begin{equation}
\nabla_{\mtheta}\,\cL_{\text{mix}}(\mtheta) = 2\,\langle \Delta, \nabla_{\mtheta}\Delta \rangle,
\label{eq:grad_lmix_chain}
\end{equation}
where $\langle \cdot, \cdot \rangle$ denotes the inner product. This follows from the chain rule for the squared norm:
\[
\nabla_{\mtheta} \|\Delta(\mtheta)\|_2^2 = \nabla_{\mtheta} \langle \Delta, \Delta \rangle = 2\langle \Delta, \nabla_{\mtheta}\Delta \rangle.
\]

Since $\Delta$ depends on $\mtheta$ only through $\mmF_{\mtheta}$ (note that $\vv_{\text{data}} = \zz - \xx$ does not depend on $\mtheta$, and $\vv_{\text{fake}} = \sg(\mmF_{\mtheta}(\xx_t, t, \cc_{\text{fake}}))$ has stop gradient applied, so gradients do not flow through $\vv_{\text{fake}}$), we have
\begin{equation}
\nabla_{\mtheta}\Delta = \nabla_{\mtheta}\bigl[t[(1-\lambda)\,\vv_{\text{data}} + \lambda\,\vv_{\text{fake}} - \mmF_{\mtheta}]\bigr] = -t\,\nabla_{\mtheta}\mmF_{\mtheta}.
\label{eq:grad_delta}
\end{equation}

\textbf{Step A7: Substitute and simplify.}
Substituting Eq.~\eqref{eq:delta_final} and Eq.~\eqref{eq:grad_delta} into Eq.~\eqref{eq:grad_lmix_chain}:
\begin{align}
\nabla_{\mtheta}\,\cL_{\text{mix}}(\mtheta)
&= 2\,\left\langle t\bigl[(1-\lambda)\,\vv_{\text{data}} + \lambda\,\vv_{\text{fake}} - \mmF_{\mtheta}\bigr], -t\,\nabla_{\mtheta}\mmF_{\mtheta} \right\rangle \nonumber \\
&= -2t^2\,\left\langle (1-\lambda)\,\vv_{\text{data}} + \lambda\,\vv_{\text{fake}} - \mmF_{\mtheta}, \nabla_{\mtheta}\mmF_{\mtheta} \right\rangle \nonumber \\
&= 2t^2\,\left\langle \mmF_{\mtheta} - (1-\lambda)\,\vv_{\text{data}} - \lambda\,\vv_{\text{fake}}, \nabla_{\mtheta}\mmF_{\mtheta} \right\rangle.
\label{eq:grad_lmix_step}
\end{align}

\textbf{Step A8: Distribute the inner product.}
Using the bilinearity of the inner product, we expand:
\begin{align}
\nabla_{\mtheta}\,\cL_{\text{mix}}(\mtheta)
&= 2t^2\,\Bigl[\langle \mmF_{\mtheta}, \nabla_{\mtheta}\mmF_{\mtheta} \rangle 
- (1-\lambda)\langle \vv_{\text{data}}, \nabla_{\mtheta}\mmF_{\mtheta} \rangle 
- \lambda\langle \vv_{\text{fake}}, \nabla_{\mtheta}\mmF_{\mtheta} \rangle\Bigr] \nonumber
\end{align}
Now we regroup the terms by factoring out $(1-\lambda)$ and $\lambda$. Note that:
\begin{align*}
\langle \mmF_{\mtheta}, \nabla_{\mtheta}\mmF_{\mtheta} \rangle 
&= (1-\lambda)\langle \mmF_{\mtheta}, \nabla_{\mtheta}\mmF_{\mtheta} \rangle + \lambda\langle \mmF_{\mtheta}, \nabla_{\mtheta}\mmF_{\mtheta} \rangle
\end{align*}
Substituting back:
\begin{align}
\nabla_{\mtheta}\,\cL_{\text{mix}}(\mtheta)
&= 2t^2\,\Bigl[(1-\lambda)\langle \mmF_{\mtheta}, \nabla_{\mtheta}\mmF_{\mtheta} \rangle - (1-\lambda)\langle \vv_{\text{data}}, \nabla_{\mtheta}\mmF_{\mtheta} \rangle \nonumber\\
&\qquad\qquad + \lambda\langle \mmF_{\mtheta}, \nabla_{\mtheta}\mmF_{\mtheta} \rangle - \lambda\langle \vv_{\text{fake}}, \nabla_{\mtheta}\mmF_{\mtheta} \rangle\Bigr] \nonumber \\
&= 2t^2\,\Bigl[(1-\lambda)\langle \mmF_{\mtheta} - \vv_{\text{data}}, \nabla_{\mtheta}\mmF_{\mtheta} \rangle 
+ \lambda\langle \mmF_{\mtheta} - \vv_{\text{fake}}, \nabla_{\mtheta}\mmF_{\mtheta} \rangle\Bigr].
\label{eq:grad_lmix_final}
\end{align}

\paragraph{Part B: Gradient of the alternative loss.}

\textbf{Step B1: Write the alternative loss.}
From Eq.~\ref{eq:g_apex}, the alternative loss is
\begin{equation}
\cG_{\text{APEX}}(\mtheta) = (1-\lambda)\,\cL_{\text{sup}}(\mtheta) + \lambda\,\cL_{\text{cons}}(\mtheta),
\label{eq:gapex_def}
\end{equation}
where $\cL_{\text{sup}}$ and $\cL_{\text{cons}}$ are defined in Eq.~\ref{eq:l_sup} and Eq.~\ref{eq:l_cons}.

\textbf{Step B2: Apply the endpoint-velocity equivalence.}
By Proposition~\ref{prop:endpoint_velocity}, we have
\begin{equation}
\cL_{\text{sup}}(\mtheta) = \bigl\|\mf^{\xx}(\mmF_{\mtheta}, \xx_t, t) - \xx\bigr\|_2^2 
= t^2\,\bigl\|\mmF_{\mtheta} - \vv_{\text{data}}\bigr\|_2^2.
\label{eq:lsup_velocity}
\end{equation}
By Proposition~\ref{prop:endpoint_velocity_fake}, we have
\begin{equation}
\cL_{\text{cons}}(\mtheta) = \bigl\|\mf^{\xx}(\mmF_{\mtheta}, \xx_t, t) - \mf^{\xx}(\vv_{\text{fake}}, \xx_t, t)\bigr\|_2^2 
= t^2\,\bigl\|\mmF_{\mtheta} - \vv_{\text{fake}}\bigr\|_2^2.
\label{eq:lcons_velocity}
\end{equation}

\textbf{Step B3: Compute the gradients of $\cL_{\text{sup}}$ and $\cL_{\text{cons}}$.}
Using the gradient of a squared norm (Lemma from UCGM appendix):
\begin{align}
\nabla_{\mtheta}\,\cL_{\text{sup}}(\mtheta) 
&= \nabla_{\mtheta}\left[t^2\,\bigl\|\mmF_{\mtheta} - \vv_{\text{data}}\bigr\|_2^2\right] \nonumber \\
&= t^2\,\nabla_{\mtheta}\bigl\|\mmF_{\mtheta} - \vv_{\text{data}}\bigr\|_2^2 \nonumber \\
&= t^2 \cdot 2\,\langle \mmF_{\mtheta} - \vv_{\text{data}}, \nabla_{\mtheta}\mmF_{\mtheta} \rangle \nonumber \\
&= 2t^2\,\langle \mmF_{\mtheta} - \vv_{\text{data}}, \nabla_{\mtheta}\mmF_{\mtheta} \rangle.
\label{eq:grad_lsup}
\end{align}
Similarly,
\begin{equation}
\nabla_{\mtheta}\,\cL_{\text{cons}}(\mtheta) 
= 2t^2\,\langle \mmF_{\mtheta} - \vv_{\text{fake}}, \nabla_{\mtheta}\mmF_{\mtheta} \rangle.
\label{eq:grad_lcons}
\end{equation}

\textbf{Step B4: Combine the gradients.}
Substituting Eq.~\eqref{eq:grad_lsup} and Eq.~\eqref{eq:grad_lcons} into the gradient of Eq.~\eqref{eq:gapex_def}:
\begin{align}
\nabla_{\mtheta}\,\cG_{\text{APEX}}(\mtheta)
&= (1-\lambda)\,\nabla_{\mtheta}\,\cL_{\text{sup}}(\mtheta) + \lambda\,\nabla_{\mtheta}\,\cL_{\text{cons}}(\mtheta) \nonumber \\
&= (1-\lambda) \cdot 2t^2\,\langle \mmF_{\mtheta} - \vv_{\text{data}}, \nabla_{\mtheta}\mmF_{\mtheta} \rangle 
+ \lambda \cdot 2t^2\,\langle \mmF_{\mtheta} - \vv_{\text{fake}}, \nabla_{\mtheta}\mmF_{\mtheta} \rangle \nonumber \\
&= 2t^2\,\Bigl[(1-\lambda)\,\langle \mmF_{\mtheta} - \vv_{\text{data}}, \nabla_{\mtheta}\mmF_{\mtheta} \rangle 
+ \lambda\,\langle \mmF_{\mtheta} - \vv_{\text{fake}}, \nabla_{\mtheta}\mmF_{\mtheta} \rangle\Bigr].
\label{eq:grad_gapex_final}
\end{align}

\paragraph{Part C: Conclusion.}

Comparing Eq.~\eqref{eq:grad_lmix_final} and Eq.~\eqref{eq:grad_gapex_final}, we see they are identical:
\begin{equation}
\nabla_{\mtheta}\,\cL_{\text{mix}}(\mtheta) = \nabla_{\mtheta}\,\cG_{\text{APEX}}(\mtheta).
\end{equation}
This completes the proof of Theorem~\ref{thm:gradient_equivalence}.
\end{proof}

\subsection{Fisher Divergence Perspective}
\label{app:fisher}

We provide an interpretation of APEX's alternative loss through the lens of Fisher divergence. 
This analysis reveals that APEX minimizes a score-space distance with uniform weighting, contrasting with GAN based objectives that use sample dependent weights.

\begin{proposition}[APEX as Fisher Divergence Minimization]
\label{prop:fisher}
The alternative loss $\cG_{\text{APEX}}(\mtheta)$ can be interpreted as minimizing a weighted Fisher divergence to a mixed distribution. 
Specifically, define the mixed score function
\begin{equation}
\ss_{\text{mix}}(\xx_t) := (1-\lambda)\,\ss_{\text{data}}(\xx_t) + \lambda\,\ss_{\text{fake}}(\xx_t),
\label{eq:score_mix_def}
\end{equation}
where $\ss_{\text{data}}(\xx_t) = \nabla_{\xx_t}\log p_{\text{data},t}(\xx_t)$ and $\ss_{\text{fake}}(\xx_t) = \nabla_{\xx_t}\log p_{\text{fake},t}(\xx_t)$ are the score functions corresponding to the data distribution and fake distribution at time $t$, respectively. 
Then, up to time dependent weighting $\omega(t)$,
\begin{equation}
\nabla_{\mtheta}\,\cG_{\text{APEX}}(\mtheta) 
\propto \EEb{\xx_t\sim p_{\mtheta,t}}{(\ss_{\mtheta}(\xx_t) - \ss_{\text{mix}}(\xx_t)) \cdot \frac{\partial\xx_t}{\partial\mtheta}},
\label{eq:apex_fisher_grad}
\end{equation}
which corresponds to minimizing the Fisher divergence
\begin{equation}
D_F(p_{\mtheta} \| p_{\text{mix}}) 
:= \int \bigl\|\ss_{\mtheta}(\xx_t) - \ss_{\text{mix}}(\xx_t)\bigr\|_2^2\,p_{\mtheta}(\xx_t)\,\dm\xx_t.
\label{eq:fisher_div_def}
\end{equation}
\end{proposition}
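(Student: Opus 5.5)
The plan is to start from the gradient of $\cG_{\text{APEX}}$ already computed in the proof of \thmref{thm:gradient_equivalence}, Eq.~\eqref{eq:grad_gapex_final}, and convert it into score space. Collecting the two terms there gives, per sample,
\[
\nabla_{\mtheta}\cG_{\text{APEX}} = 2t^2\bigl\langle \mmF_{\mtheta} - \bigl[(1-\lambda)\vv_{\text{data}} + \lambda\vv_{\text{fake}}\bigr],\ \nabla_{\mtheta}\mmF_{\mtheta}\bigr\rangle .
\]
The bracket is an affine combination of velocities whose weights sum to one. The duality map $\vv\mapsto -(\xx_t+(1-t)\vv)/t$ of Eq.~\eqref{eq:score_velocity_relation} is affine, so it commutes with such combinations. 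Hence the score induced by the mixed velocity $\vv_{\text{mix}}:=(1-\lambda)\vv_{\text{data}}+\lambda\vv_{\text{fake}}$ is exactly $\ss_{\text{mix}}$ of Eq.~\eqref{eq:score_mix_def}.

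Applying \corref{cor:velocity_score} with $\vv_1=\mmF_{\mtheta}$ and $\vv_2=\vv_{\text{mix}}$ then gives
\[
\mmF_{\mtheta}-\vv_{\text{mix}} = -\tfrac{t}{1-t}\,(\ss_{\mtheta}-\ss_{\text{mix}}),
\]
where $\ss_{\mtheta}$ is the induced score of Eq.~\eqref{eq:score_approx_def}.

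Next I identify $\nabla_{\mtheta}\mmF_{\mtheta}$ with the sample path derivative. The generated point is $\xx^{\text{fake}}=\xx_t-t\mmF_{\mtheta}$ (Eq.~\eqref{eq:one_step_fake}), so $\partial\xx^{\text{fake}}/\partial\mtheta=-t\,\partial\mmF_{\mtheta}/\partial\mtheta$. Re-noising along the OT path gives $\partial\xx^{\text{fake}}_t/\partial\mtheta=(1-t)(-t)\,\partial\mmF_{\mtheta}/\partial\mtheta$.

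Substituting both relations, the per-sample gradient becomes $c(t)\,(\ss_{\mtheta}-\ss_{\text{mix}})\cdot\partial\xx_t/\partial\mtheta$. Here $c(t)$ depends only on $t$, and in this bookkeeping it has magnitude $2t^3/(1-t)^2$. The paper quotes $2t^3/(1-t)$, which corresponds to counting only the $-t$ factor from $\xx^{\text{fake}}$ and not the $(1-t)$ from re-noising. The sign comes from the sign convention of $\partial\xx_t/\partial\mtheta$. Because $c(t)$ is uniform across samples at fixed $t$, it is absorbed into $\omega(t)$, giving the constant-weight form of Eq.~\eqref{eq:apex_fisher_grad}. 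Taking the expectation over $\xx_t\sim p_{\mtheta,t}$ completes this part.

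For the Fisher divergence claim, I will argue as in \lemref{lemma:kl_grad_reparam}. Differentiate $D_F$ in Eq.~\eqref{eq:fisher_div_def} with $\ss_{\text{mix}}$ frozen, since it is under stop-gradient. The resulting first-order stationarity condition is $\ss_{\mtheta}=\ss_{\text{mix}}$ $p_{\mtheta}$-a.e. This is exactly the fixed point of the update above, which is the precise sense of ``corresponds to''. I will state the correspondence at the level of shared stationary points and descent direction, not as an exact gradient identity.

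The main obstacle is the step linking $\vv_{\text{data}}=\zz-\xx$ to the score $\ss_{\text{data}}$. The former is a per-sample target, not the conditional mean, so the duality holds only after taking $\E[\cdot\mid\xx_t]$. I will handle this first by noting that in $\E\langle \mmF_{\mtheta}-\vv_{\text{data}},\nabla_{\mtheta}\mmF_{\mtheta}\rangle$ the factor $\nabla_{\mtheta}\mmF_{\mtheta}$ is $\xx_t$-measurable. By the tower property, $\vv_{\text{data}}$ can therefore be replaced by $\vv^*$ from Proposition~\ref{prop:score_velocity}, after which the duality applies.

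Two caveats remain, and I will state them as modeling assumptions, consistent with the ``interpretation'' phrasing. First, that $\vv_{\text{fake}}$ equals the population velocity of $p_{\text{fake}}$, justified by $\cL_{\text{fake}}$ being minimized. Second, that the expectation over the real trajectory $\xx_t$ may be interchanged with one over $p_{\mtheta,t}$.
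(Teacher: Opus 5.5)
Your proposal is sound and follows the paper's skeleton. You take the per-sample gradient of $\cG_{\text{APEX}}$ from the proof of \thmref{thm:gradient_equivalence}, turn the velocity combination into $-\frac{t}{1-t}(\ss_{\mtheta}-\ss_{\text{mix}})$, and absorb the $t$-dependent coefficient into $\omega(t)$. Combining first and using affinity of the duality map is the same computation as the paper's Steps 1--2, which apply \corref{cor:velocity_score} to each term and then combine.

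Where you depart from the paper, you are more careful than it is:
\begin{itemize}
\item \textbf{Per-sample target.} The paper applies the corollary directly to the per-sample target $\zz-\xx$, which is not a conditional-mean velocity. Your tower-property replacement by $\vv^*$ is the right fix; it is valid because $\zz-\xx$ enters linearly against the $\xx_t$-measurable factor $\nabla_{\mtheta}\mmF_{\mtheta}$.
\item \textbf{Sign of the constant.} The paper stops at $-\frac{2t^3}{1-t}\langle \ss_{\mtheta}-\ss_{\text{mix}},\nabla_{\mtheta}\mmF_{\mtheta}\rangle$ and silently reads $\nabla_{\mtheta}\mmF_{\mtheta}$ as $\partial\xx_t/\partial\mtheta$. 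That would make the proportionality constant negative. Your conversion through $\partial\xx^{\text{fake}}_t/\partial\mtheta=-t(1-t)\nabla_{\mtheta}\mmF_{\mtheta}$ makes it positive, which is what ``corresponds to minimizing'' requires.
\item \textbf{Fisher step.} The paper's Step~4 asserts $\nabla_{\mtheta}D_F\propto\mathbb{E}[(\ss_{\mtheta}-\ss_{\text{mix}})\cdot\partial\xx_t/\partial\mtheta]$. That expression is the reparameterized KL gradient of \lemref{lemma:kl_grad_reparam}, not the Fisher gradient, which picks up extra terms from differentiating the score itself. For example, for $p_{\mtheta}=\mathcal{N}(\mu,s^2)$ against $\mathcal{N}(0,1)$ the two gradients are $(2\mu,\,2(s-1/s)(1+1/s^2))$ and $(\mu,\,s-1/s)$, which are not proportional. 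So your weaker, fixed-point reading is the defensible one.
\end{itemize}
On that last point, tighten your wording. State the correspondence as ``$\ss_{\mtheta}=\ss_{\text{mix}}$ a.e.\ is a common zero of both (the global minimizer of $D_F$)'', not as ``the stationarity condition''. Also drop the ``descent direction'' claim, which you have not proved.

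Three smaller corrections:
\begin{itemize}
\item The coefficient is $\frac{2t^3}{1-t}\cdot\frac{1}{t(1-t)}=\frac{2t^2}{(1-t)^2}$, not $\frac{2t^3}{(1-t)^2}$.
\item The paper's $\frac{2t^3}{1-t}$ multiplies $\nabla_{\mtheta}\mmF_{\mtheta}$ itself, so it counts neither path factor, not ``only the $-t$''. Its $t^3$ is $t^2$ from the endpoint--velocity equivalence times $t$ from the duality. This is harmless once absorbed into $\omega(t)$.
\item Your second caveat should also cover a location mismatch. The scores are evaluated at the real point $\xx_t$, while the path derivative is that of $\xx^{\text{fake}}_t$. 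The paper makes this same identification implicitly.
\end{itemize}
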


\begin{proof}
\textbf{Step 1: Relate velocity differences to score differences.}
By Corollary~\ref{cor:velocity_score} (Eq.~\eqref{eq:velocity_score_diff}), the velocity-score relationship gives
\begin{align}
\mmF_{\mtheta} - \vv_{\text{data}} 
&= -\frac{t}{1-t}\,(\ss_{\mtheta}(\xx_t) - \ss_{\text{data}}(\xx_t)), \label{eq:vel_score_1} \\
\mmF_{\mtheta} - \vv_{\text{fake}} 
&= -\frac{t}{1-t}\,(\ss_{\mtheta}(\xx_t) - \ss_{\text{fake}}(\xx_t)). \label{eq:vel_score_2}
\end{align}
\emph{Derivation reminder:} These equations follow from applying the score-velocity duality
\[
\ss_t(\xx_t) = -\frac{\xx_t + (1-t)\vv(\xx_t,t)}{t}
\]
to each pair of velocity fields. For instance, for Eq.~\eqref{eq:vel_score_1}:
\begin{align*}
\ss_{\mtheta}(\xx_t) - \ss_{\text{data}}(\xx_t) 
&= -\frac{\xx_t + (1-t)\mmF_{\mtheta}}{t} + \frac{\xx_t + (1-t)\vv_{\text{data}}}{t} \\
&= \frac{(1-t)(\vv_{\text{data}} - \mmF_{\mtheta})}{t} \\
&= -\frac{1-t}{t}(\mmF_{\mtheta} - \vv_{\text{data}}).
\end{align*}
Rearranging gives Eq.~\eqref{eq:vel_score_1}.

\textbf{Step 2: Form the linear combination.}
From the proof of Theorem~\ref{thm:gradient_equivalence} (Eq.~\eqref{eq:grad_gapex_final}), the gradient of $\cG_{\text{APEX}}$ involves the weighted sum
\begin{equation}
(1-\lambda)\,(\mmF_{\mtheta} - \vv_{\text{data}}) + \lambda\,(\mmF_{\mtheta} - \vv_{\text{fake}}).
\label{eq:velocity_combo}
\end{equation}
Now we substitute the velocity-score relationships from Step 1. Substituting Eq.~\eqref{eq:vel_score_1} and Eq.~\eqref{eq:vel_score_2}:
\begin{align}
&(1-\lambda)\,(\mmF_{\mtheta} - \vv_{\text{data}}) + \lambda\,(\mmF_{\mtheta} - \vv_{\text{fake}}) \nonumber \\
&= (1-\lambda)\,\bigl[-\frac{t}{1-t}\,(\ss_{\mtheta} - \ss_{\text{data}})\bigr] 
+ \lambda\,\bigl[-\frac{t}{1-t}\,(\ss_{\mtheta} - \ss_{\text{fake}})\bigr] \nonumber \\
&= -\frac{t}{1-t}\,\bigl[(1-\lambda)\,(\ss_{\mtheta} - \ss_{\text{data}}) 
+ \lambda\,(\ss_{\mtheta} - \ss_{\text{fake}})\bigr] \quad \text{(factor out $-\frac{t}{1-t}$)} \nonumber \\
&= -\frac{t}{1-t}\,\bigl[(1-\lambda)\,\ss_{\mtheta} - (1-\lambda)\,\ss_{\text{data}} 
+ \lambda\,\ss_{\mtheta} - \lambda\,\ss_{\text{fake}}\bigr] \quad \text{(expand)} \nonumber \\
&= -\frac{t}{1-t}\,\bigl[[(1-\lambda) + \lambda]\,\ss_{\mtheta} - (1-\lambda)\,\ss_{\text{data}} - \lambda\,\ss_{\text{fake}}\bigr] \nonumber \\
&= -\frac{t}{1-t}\,\bigl[\ss_{\mtheta} - \bigl((1-\lambda)\,\ss_{\text{data}} + \lambda\,\ss_{\text{fake}}\bigr)\bigr] \quad \text{(since $(1-\lambda) + \lambda = 1$)} \nonumber \\
&= -\frac{t}{1-t}\,\bigl[\ss_{\mtheta}(\xx_t) - \ss_{\text{mix}}(\xx_t)\bigr],
\label{eq:velocity_to_score_mix}
\end{align}
where in the last line we used the definition of the mixed score function from Eq.~\eqref{eq:score_mix_def}:
\[
\ss_{\text{mix}}(\xx_t) := (1-\lambda)\,\ss_{\text{data}}(\xx_t) + \lambda\,\ss_{\text{fake}}(\xx_t).
\]

\textbf{Step 3: Write the gradient in score-space form.}
From Eq.~\eqref{eq:grad_gapex_final}, the gradient of $\cG_{\text{APEX}}$ is
\begin{align}
\nabla_{\mtheta}\,\cG_{\text{APEX}}(\mtheta)
&= 2t^2\,\EEb{\xx_t,\zz,t}{\left\langle (1-\lambda)\,(\mmF_{\mtheta} - \vv_{\text{data}}) 
+ \lambda\,(\mmF_{\mtheta} - \vv_{\text{fake}}), \nabla_{\mtheta}\mmF_{\mtheta} \right\rangle}.
\end{align}
Substituting Eq.~\eqref{eq:velocity_to_score_mix}:
\begin{align}
\nabla_{\mtheta}\,\cG_{\text{APEX}}(\mtheta)
&= 2t^2\,\EEb{\xx_t,\zz,t}{\left\langle -\frac{t}{1-t}\,(\ss_{\mtheta} - \ss_{\text{mix}}), \nabla_{\mtheta}\mmF_{\mtheta} \right\rangle} \nonumber \\
&= -\frac{2t^3}{1-t}\,\EEb{\xx_t,\zz,t}{\left\langle (\ss_{\mtheta}(\xx_t) - \ss_{\text{mix}}(\xx_t)), \nabla_{\mtheta}\mmF_{\mtheta} \right\rangle}.
\label{eq:grad_apex_score}
\end{align}

\textbf{Step 4: Relate to Fisher divergence.}
The Fisher divergence between the model distribution $p_{\mtheta}$ and a target distribution $p_{\text{mix}}$ is defined as
\begin{equation}
D_F(p_{\mtheta} \| p_{\text{mix}}) 
= \int \bigl\|\ss_{\mtheta}(\xx_t) - \ss_{\text{mix}}(\xx_t)\bigr\|_2^2\,p_{\mtheta}(\xx_t)\,\dm\xx_t.
\label{eq:fisher_integral}
\end{equation}
Taking the gradient with respect to $\mtheta$ using the score identity $\nabla_{\xx}\log p_{\mtheta} = \ss_{\mtheta}$ and the path-wise gradient estimator:
\begin{equation}
\nabla_{\mtheta}\,D_F 
\propto \EEb{\xx_t\sim p_{\mtheta}}{(\ss_{\mtheta}(\xx_t) - \ss_{\text{mix}}(\xx_t)) \cdot \frac{\partial\xx_t}{\partial\mtheta}}.
\label{eq:fisher_grad}
\end{equation}

\textbf{Step 5: Absorb time dependent factors.}
The coefficient $-\frac{2t^3}{1-t}$ in Eq.~\eqref{eq:grad_apex_score} depends only on time $t$, not on the spatial position $\xx_t$ or the sample. 
This factor can be absorbed into the time weighting $\omega(t)$ used in the expectation. 
Thus, up to a time dependent proportionality constant,
\begin{equation}
\nabla_{\mtheta}\,\cG_{\text{APEX}}(\mtheta) 
\propto \EEb{\xx_t\sim p_{\mtheta,t}}{(\ss_{\mtheta}(\xx_t) - \ss_{\text{mix}}(\xx_t)) \cdot \frac{\partial\xx_t}{\partial\mtheta}},
\end{equation}
which matches the form of the Fisher divergence gradient in Eq.~\eqref{eq:fisher_grad}.
\end{proof}

\paragraph{Contrast with GAN objectives.}
For reference, we note that classical GAN objectives involve sample dependent weights. 
The non saturating GAN gradient takes the form
\begin{equation}
\nabla_{\mtheta}\,\cL_{\text{NS-GAN}} 
\propto \EEb{\xx_t\sim p_{\mtheta}}{w_{\text{NS}}(\xx_t)\,(\ss_{\mtheta}(\xx_t) - \ss_{\text{data}}(\xx_t)) \cdot \frac{\partial\xx_t}{\partial\mtheta}},
\label{eq:gan_ns_grad}
\end{equation}
where the weight $w_{\text{NS}}(\xx_t) = 1 - D^*(\xx_t) = \frac{p_{\mtheta}(\xx_t)}{p_{\text{data}}(\xx_t) + p_{\mtheta}(\xx_t)}$ depends on the optimal discriminator $D^*(\xx_t)$. 
This sample dependent weight can become very small (when $D^* \approx 1$, i.e., generated samples are perfect) or very large (when $D^* \approx 0$, i.e., generated samples are easily distinguished), leading to gradient instability.
In contrast, APEX's gradient in Eq.~\eqref{eq:apex_fisher_grad} has a \emph{uniform} weight across samples (the time dependent factor $\omega(t)$ is constant for all $\xx_t$ at a given $t$). 
This structural property ensures stable training signals throughout the learning process, independent of the current quality of generated samples.

\newpage

\section{Visualizations Part I}
\label{app:visualizations}

This section provides additional qualitative results to complement the quantitative analysis in the main paper.

\begin{figure}[!h]
  \centering
  \begin{subfigure}{\textwidth}
      \centering
      \includegraphics[width=\linewidth]{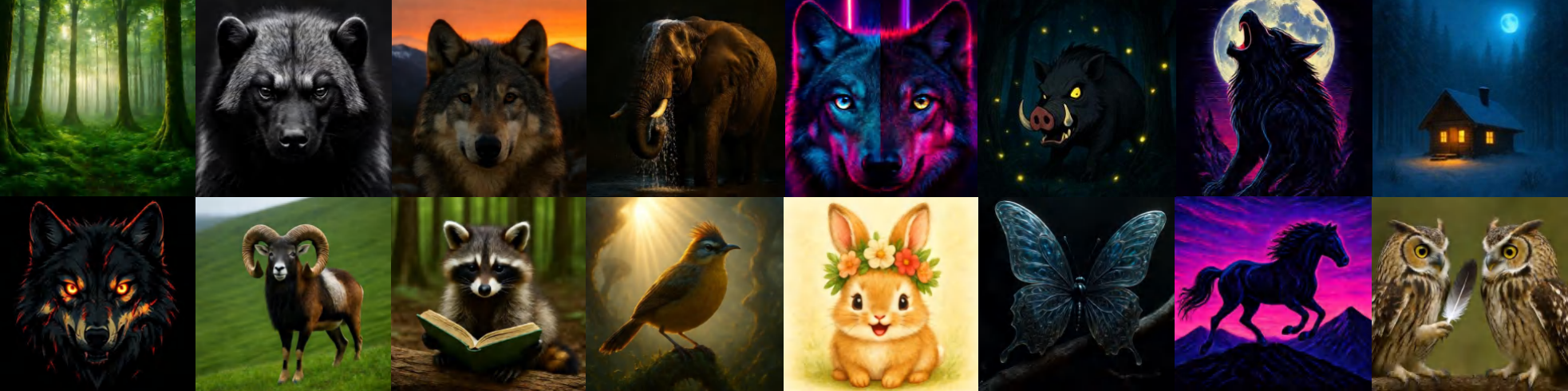}
  \end{subfigure}
  \caption{\small
      \textbf{Qualitative Comparison of 512x512 in APEX 20B LoRA for NFE=1.} 
  }
  \label{fig:nfe_comparison_appendix}
  \vspace{-10pt}
\end{figure}

\begin{figure}[!h]
    \centering
    \begin{subfigure}{\textwidth}
        \centering
        \includegraphics[width=\linewidth]{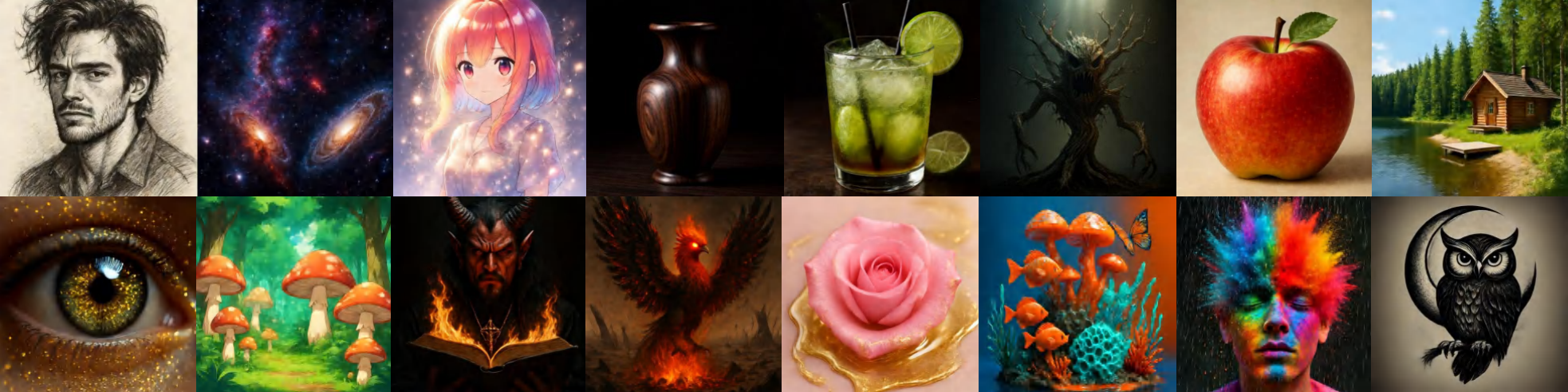}
    \end{subfigure}
    \caption{\small
        \textbf{Qualitative Comparison of 512x512 in APEX 20B LoRA for NFE=1.} 
    }
    \vspace{-10pt}
\end{figure}

\begin{figure}[!htbp]
    \centering
    \begin{subfigure}{\textwidth}
        \centering
        \includegraphics[width=\linewidth]{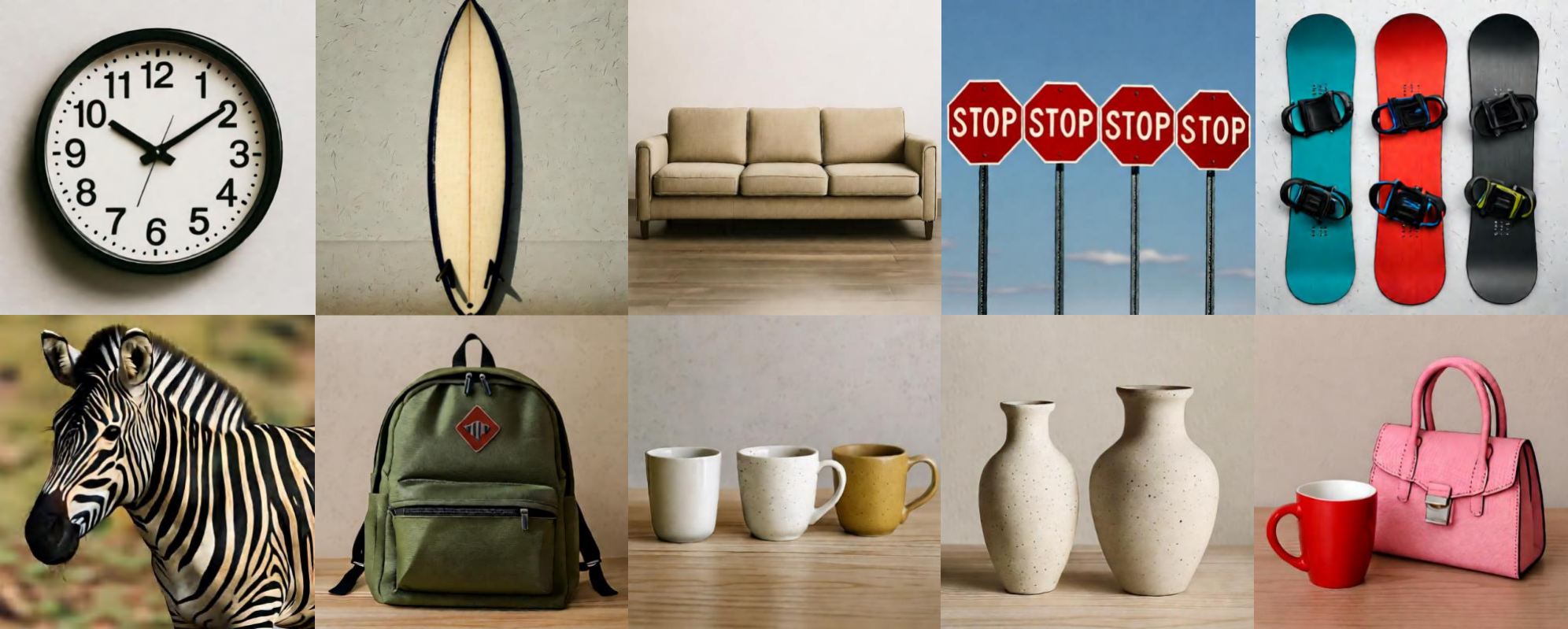}
    \end{subfigure}
    \caption{\small
        \textbf{Qualitative Comparison of 512x512 in APEX 20B LoRA for NFE=1.} 
    }
    \vspace{-10pt}
\end{figure}



\begin{figure}[!t]
    \centering
    \begin{subfigure}{\textwidth}
        \centering
        \includegraphics[width=\linewidth]{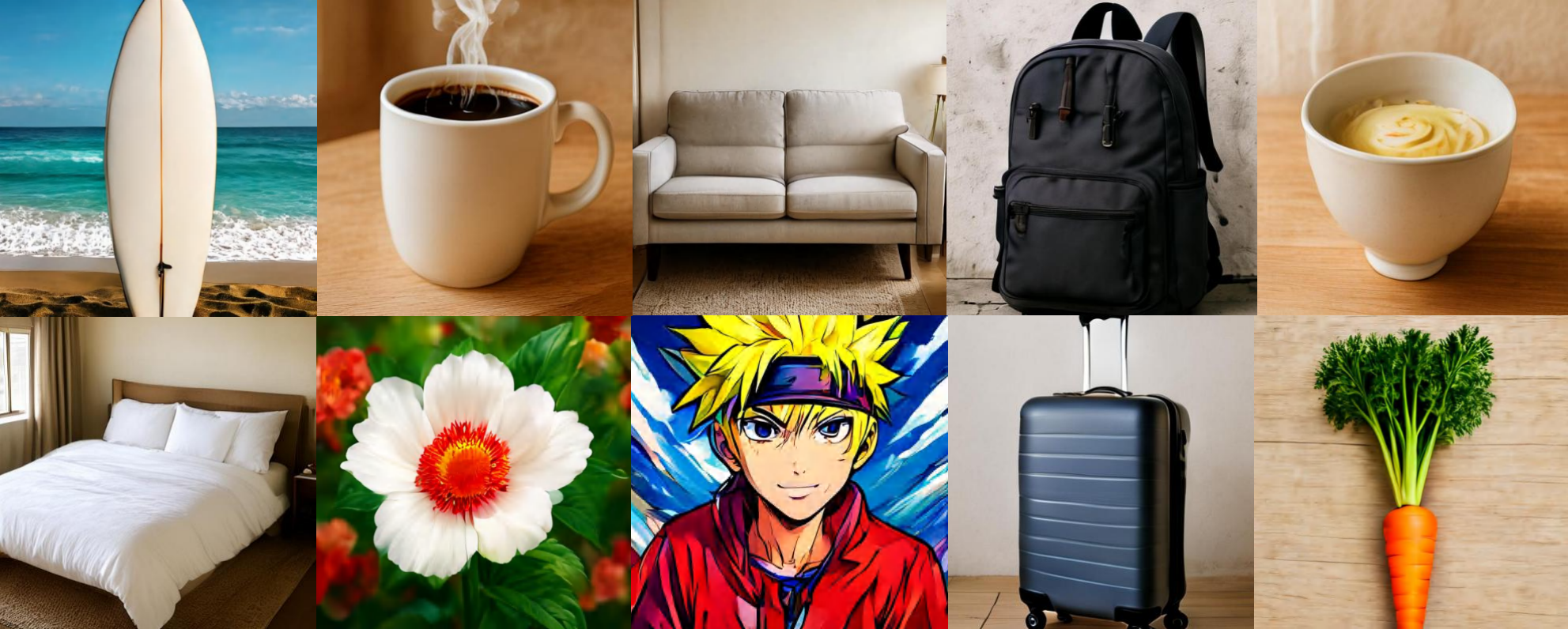}
    \end{subfigure}
    \caption{\small
        \textbf{Qualitative Comparison of 512x512 in APEX 0.6B LoRA for NFE=1.}
    }
    \vspace{-10pt}
\end{figure}

\newpage

\section{Visualizations Part II}
\label{app:visualizations_ii}

\begin{figure}[!h]
    \centering
    \begin{subfigure}{\textwidth}
        \centering
        \includegraphics[width=\linewidth]{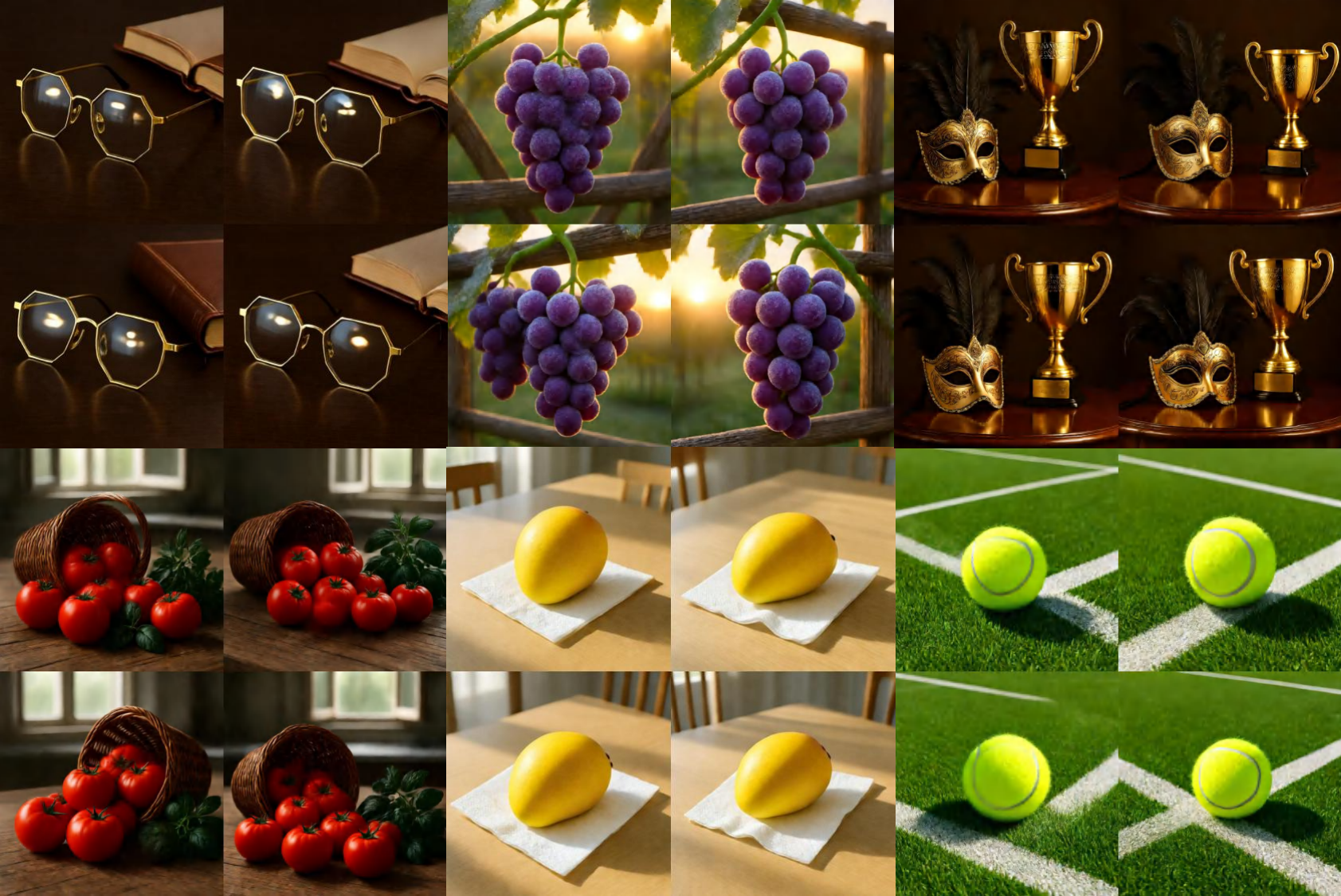}
    \end{subfigure}
    \caption{\small
        \textbf{Qualitative Comparison of 512x512 in APEX 20B LoRA for NFE=1.} 
    }
    \vspace{-10pt}
\end{figure}

\begin{figure}[!htbp]
    \centering
    \begin{subfigure}{\textwidth}
        \centering
        \includegraphics[width=\linewidth]{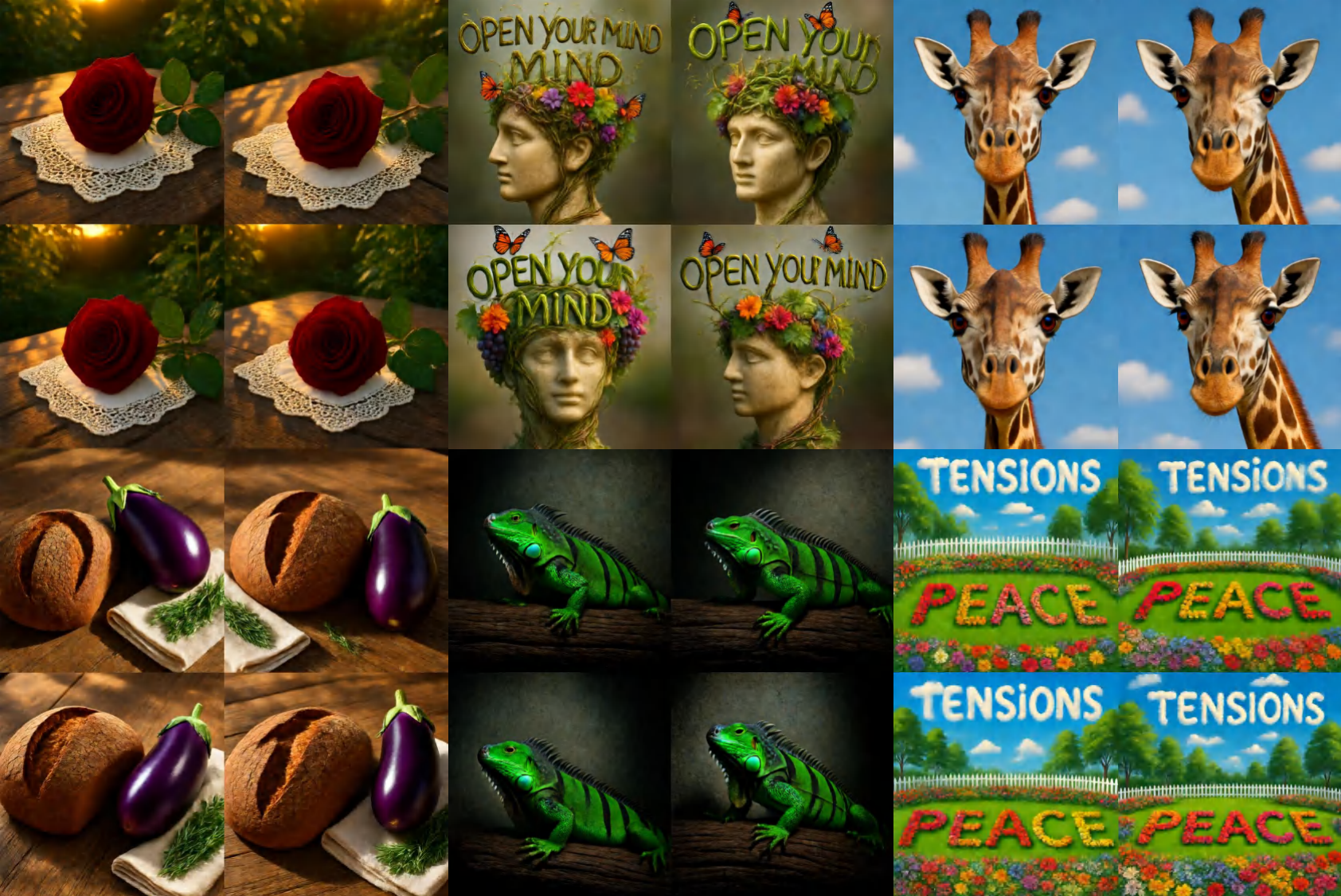}
    \end{subfigure}
    \caption{\small
        \textbf{Qualitative Comparison of 512x512 in APEX 20B LoRA for NFE=1.} 
    }
    \vspace{-10pt}
\end{figure}

\begin{figure}[!htbp]
    \centering
    \begin{subfigure}{\textwidth}
        \centering
        \includegraphics[width=\linewidth]{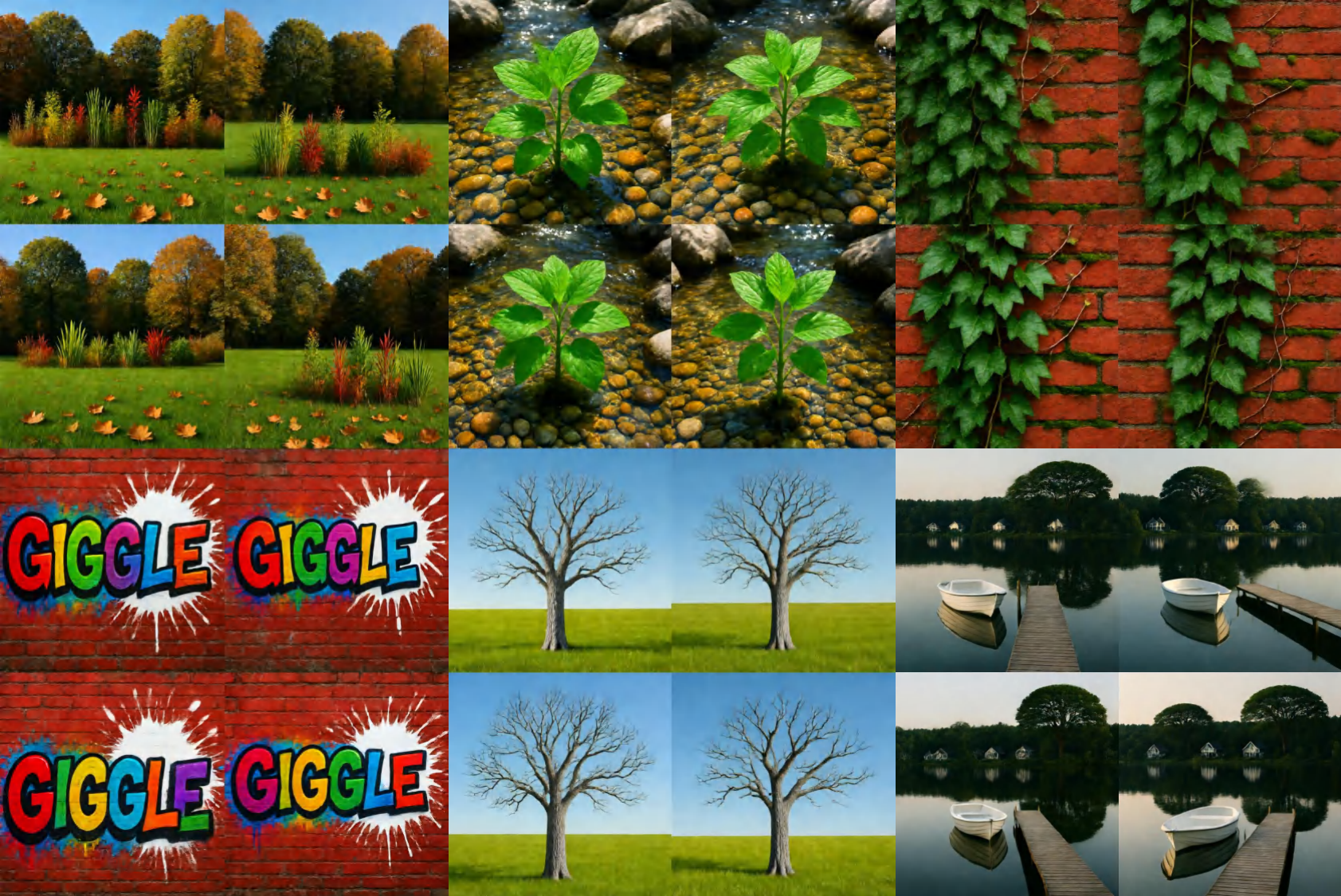}
    \end{subfigure}
    \caption{\small
        \textbf{Qualitative Comparison of 512x512 in APEX 20B LoRA for NFE=1.} 
    }
    \vspace{-10pt}
\end{figure}

\begin{figure}[!htbp]
    \centering
    \begin{subfigure}{\textwidth}
        \centering
        \includegraphics[width=\linewidth]{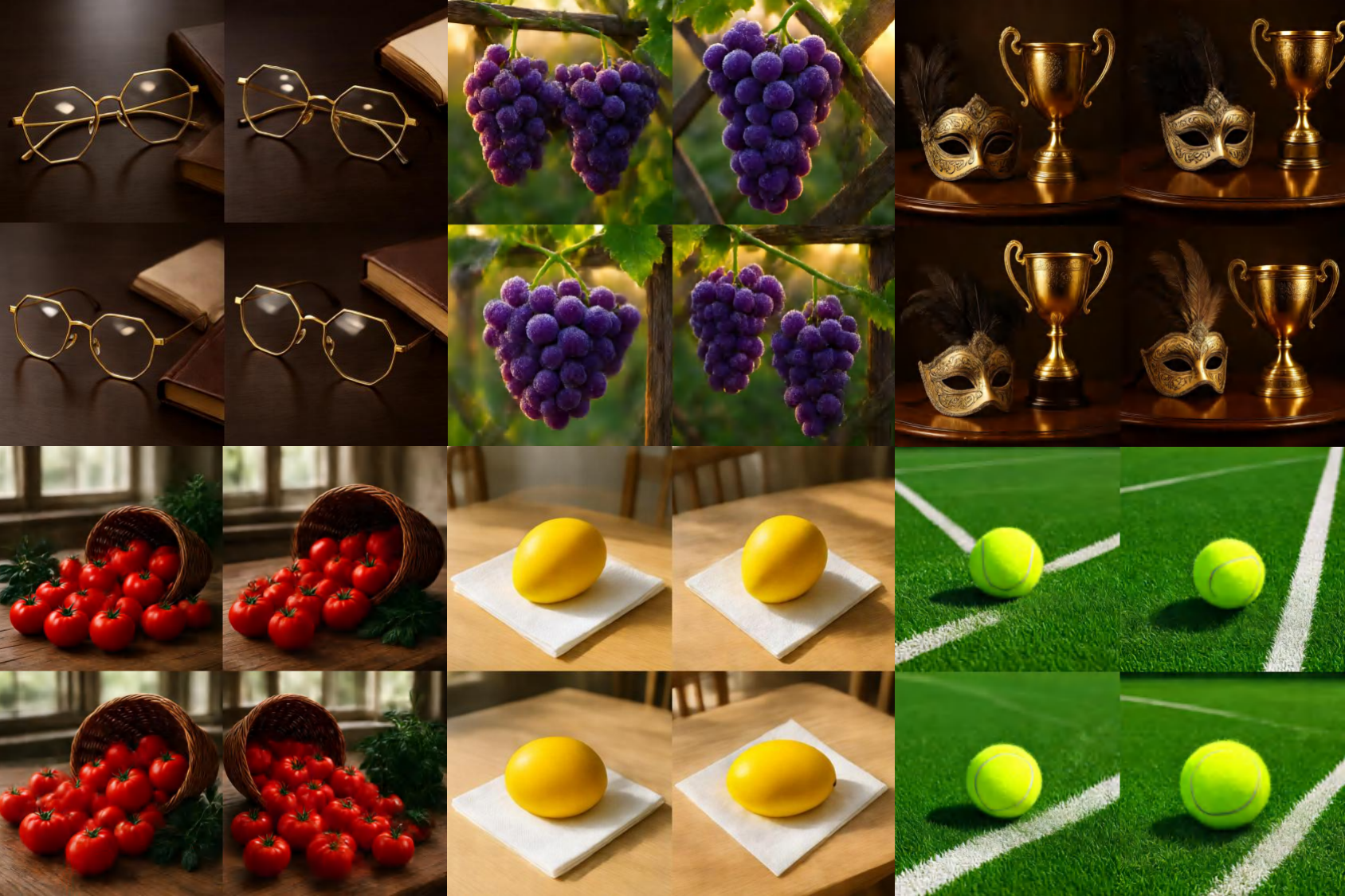}
    \end{subfigure}
    \caption{\small
        \textbf{Qualitative Comparison of 512x512 in APEX 20B Full Parameter Tuning for NFE=1.} 
    }
    \vspace{-10pt}
\end{figure}

\begin{figure}[!htbp]
    \centering
    \begin{subfigure}{\textwidth}
        \centering
        \includegraphics[width=\linewidth]{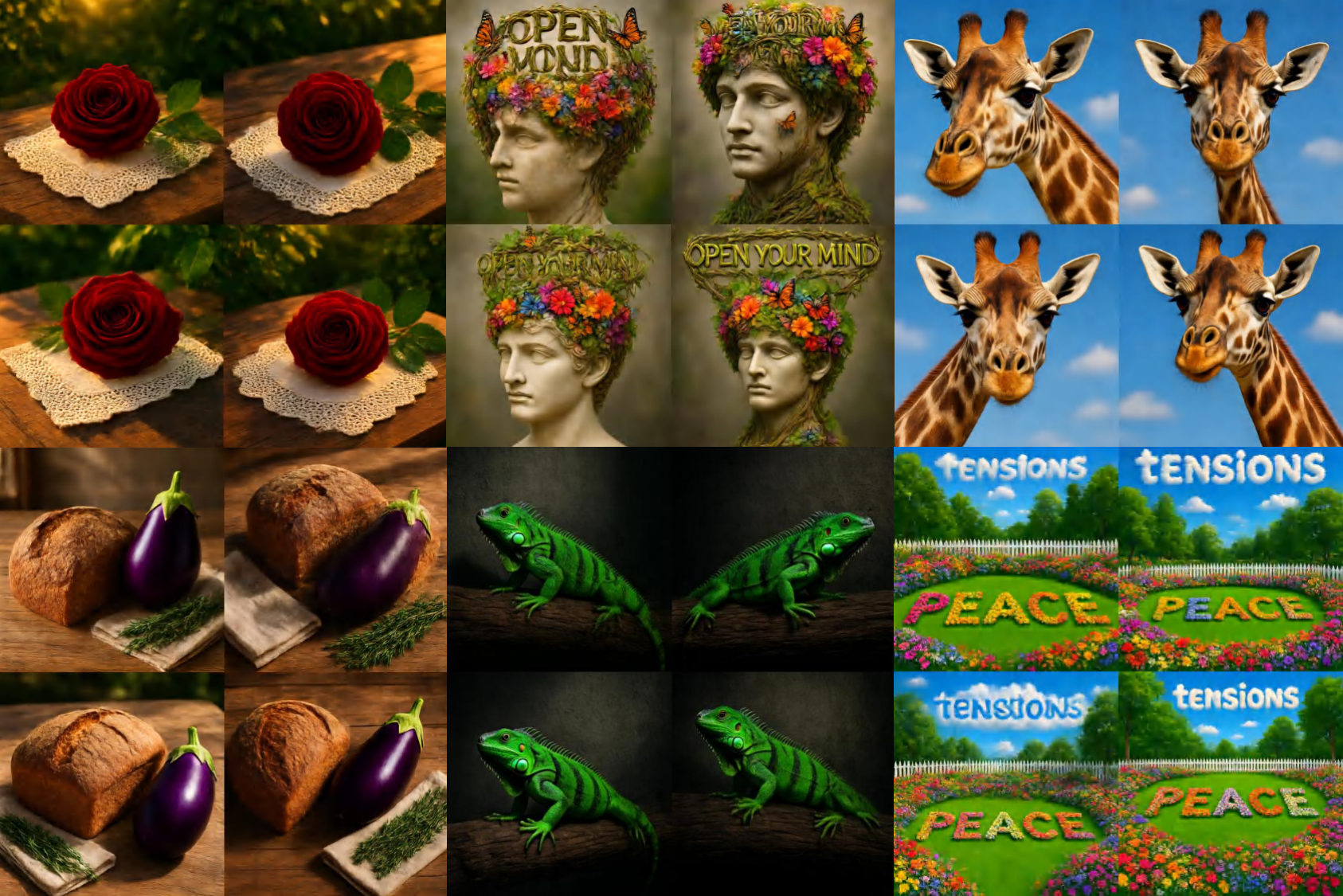}
    \end{subfigure}
    \caption{\small
        \textbf{Qualitative Comparison of 512x512 in APEX 20B Full Parameter Tuning for NFE=1.} 
    }
    \vspace{-10pt}
\end{figure}

\begin{figure}[!htbp]
    \centering
    \begin{subfigure}{\textwidth}
        \centering
        \includegraphics[width=\linewidth]{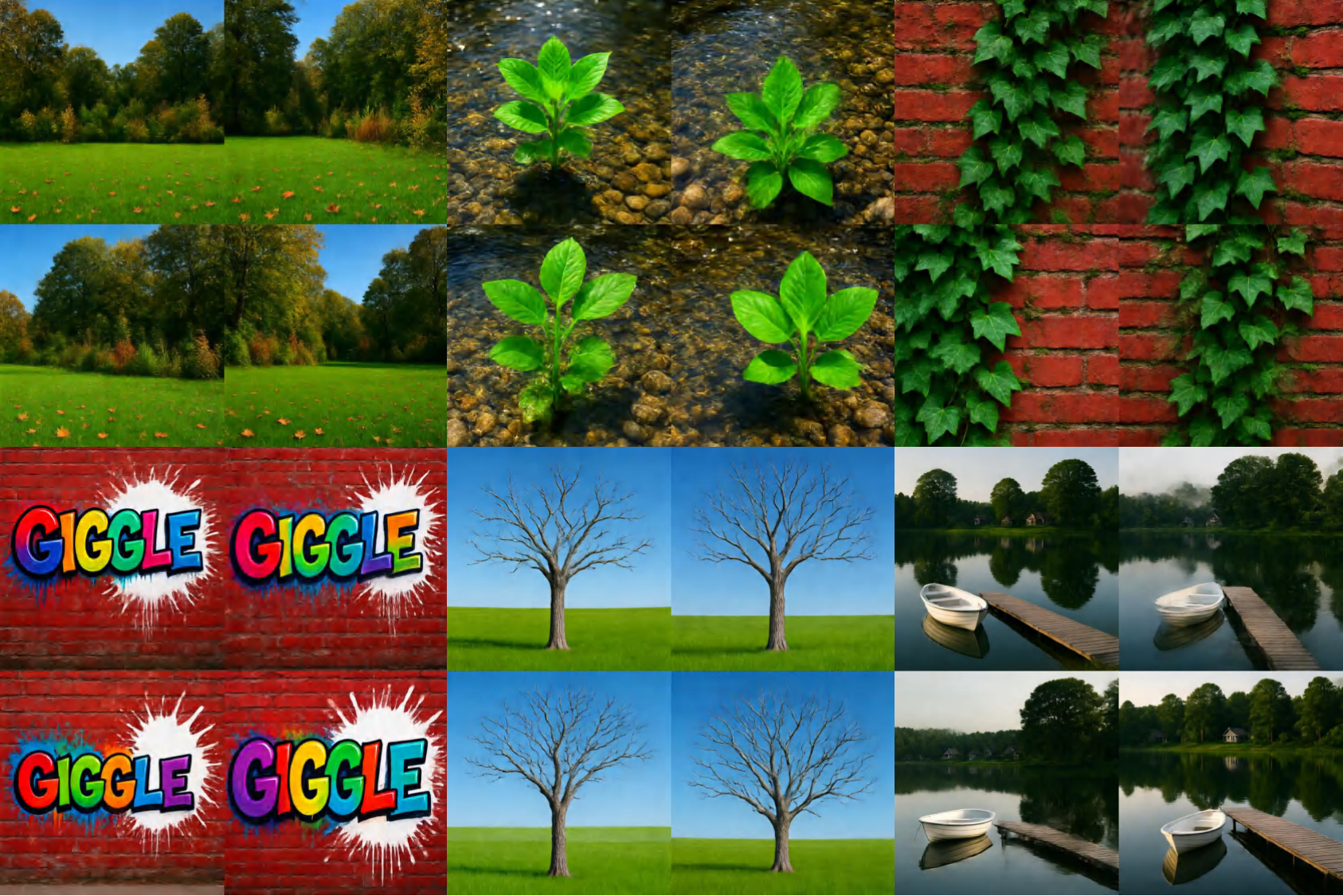}
    \end{subfigure}
    \caption{\small
        \textbf{Qualitative Comparison of 512x512 in APEX 20B Full Parameter Tuning for NFE=1.} 
    }
    \vspace{-10pt}
\end{figure}

\begin{figure}[!htbp]
    \centering
    \begin{subfigure}{\textwidth}
        \centering
        \includegraphics[width=\linewidth]{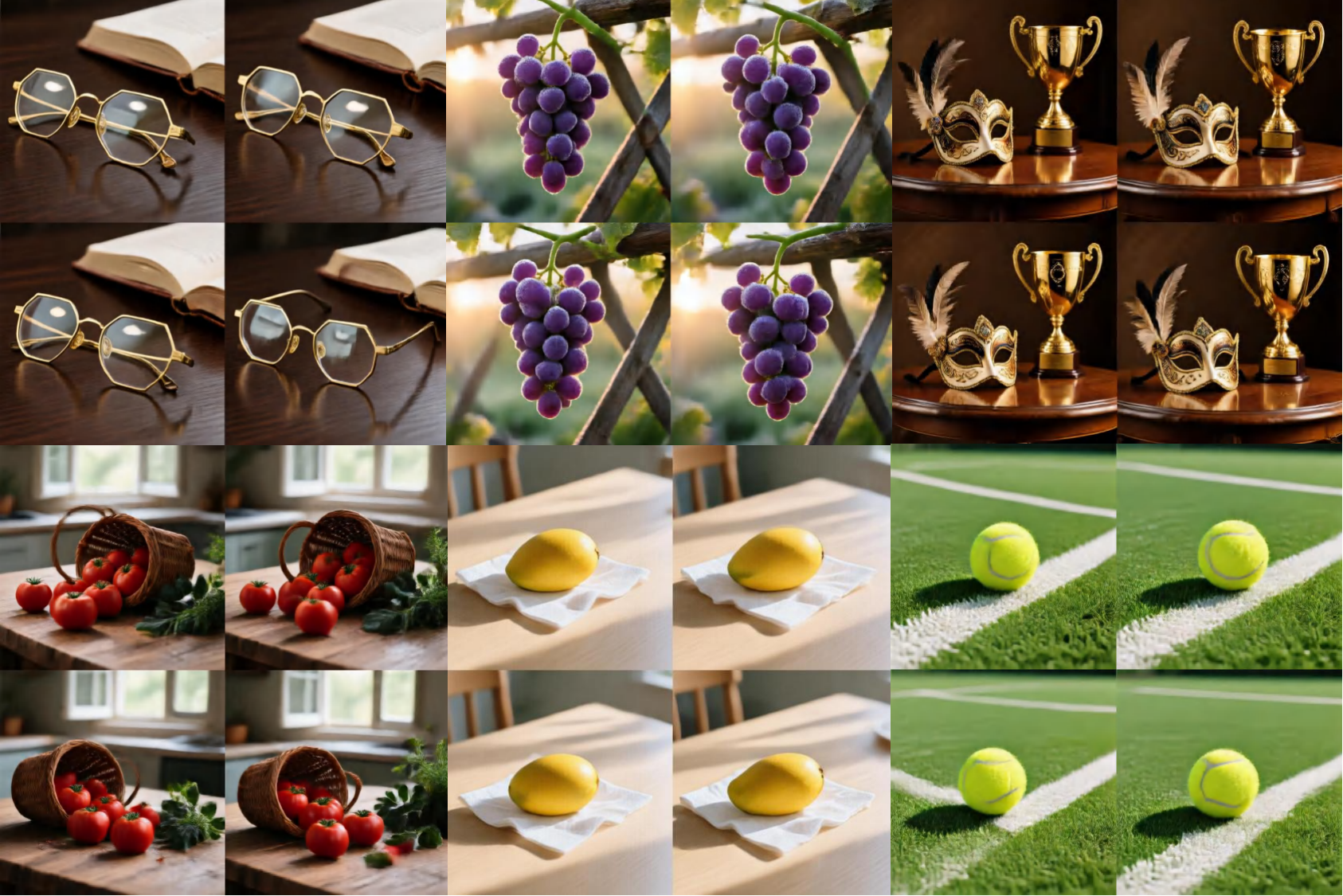}
    \end{subfigure}
    \caption{\small
        \textbf{Qualitative Comparison of 512x512 in Qwen-Image Lightning LoRA for NFE=1.} 
    }
    \vspace{-10pt}
\end{figure}

\begin{figure}[!htbp]
    \centering
    \begin{subfigure}{\textwidth}
        \centering
        \includegraphics[width=\linewidth]{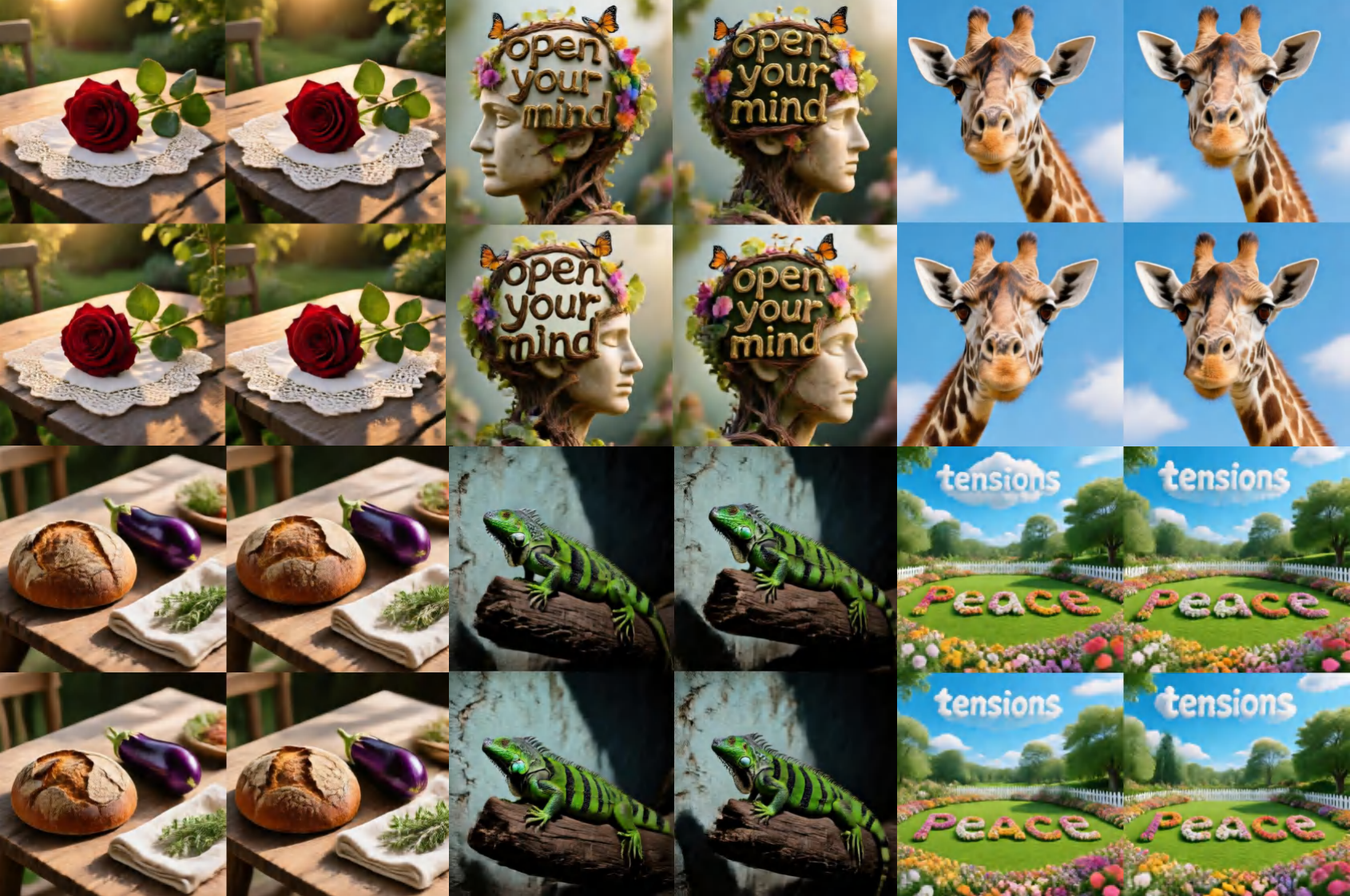}
    \end{subfigure}
    \caption{\small
        \textbf{Qualitative Comparison of 512x512 in Qwen-Image Lightning LoRA for NFE=1.} 
    }
    \vspace{-10pt}
\end{figure}

\begin{figure}[!htbp]
    \centering
    \begin{subfigure}{\textwidth}
        \centering
        \includegraphics[width=\linewidth]{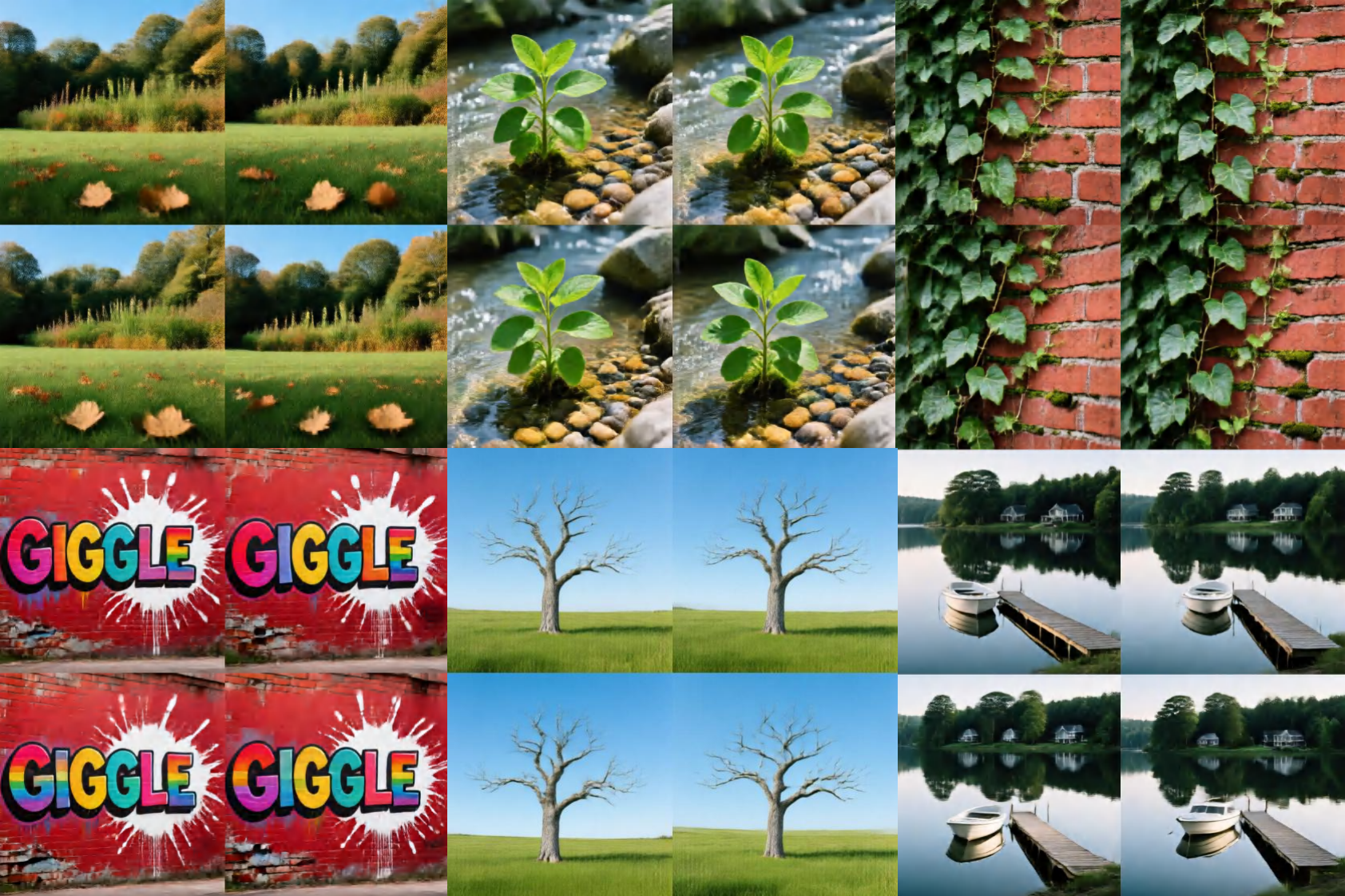}
    \end{subfigure}
    \caption{\small
        \textbf{Qualitative Comparison of 512x512 in Qwen-Image Lightning LoRA for NFE=1.} 
    }
    \vspace{-10pt}
\end{figure}

\section{Visualizations Part III}
\label{app:visualizations_iii}

\newpage

\begin{figure}[!htbp]
    \centering
    \begin{subfigure}{\textwidth}
        \centering
        \includegraphics[width=\linewidth]{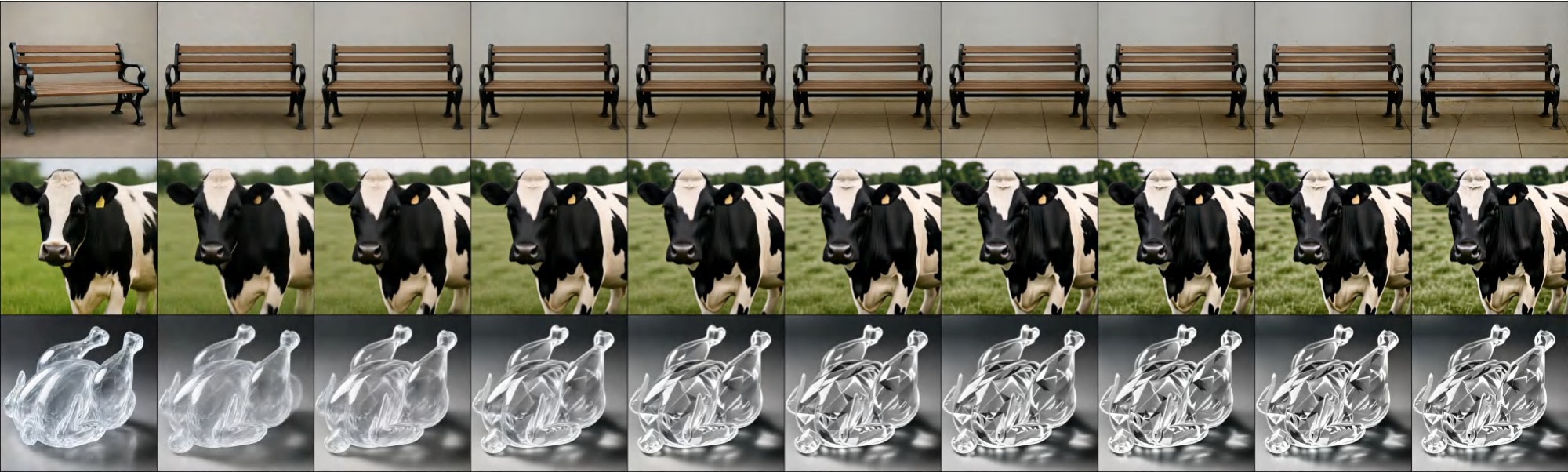}
    \end{subfigure}
    \caption{\small
        \textbf{Qualitative Comparison of 512x512 in 20B Full Parameter Tuning of APEX methods and Synthetic dataset from NFE=1 to NFE=20.}
    }
    \vspace{-10pt}
\end{figure}

\begin{figure}[!htbp]
    \centering
    \begin{subfigure}{\textwidth}
        \centering
        \includegraphics[width=\linewidth]{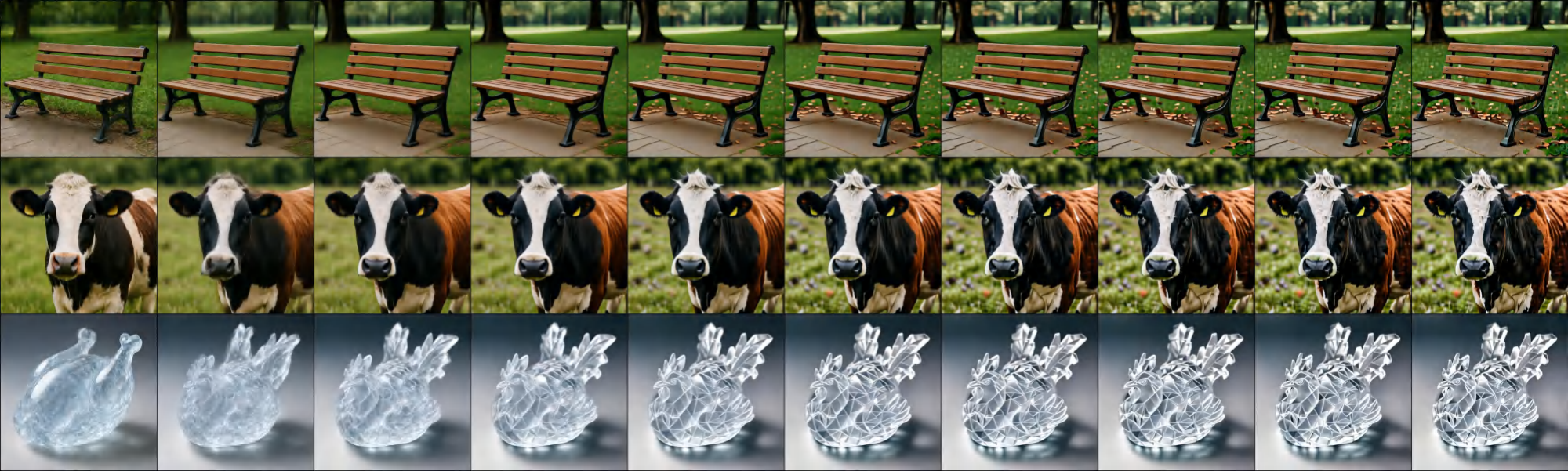}
    \end{subfigure}
    \caption{\small
        \textbf{Qualitative Comparison of 512x512 in 20B Full Parameter Tuning of APEX methods and BLIP-3o dataset from NFE=1 to NFE=20.} 
    }
    \vspace{-10pt}
\end{figure}

\begin{figure}[!htbp]
    \centering
    \begin{subfigure}{\textwidth}
        \centering
        \includegraphics[width=\linewidth]{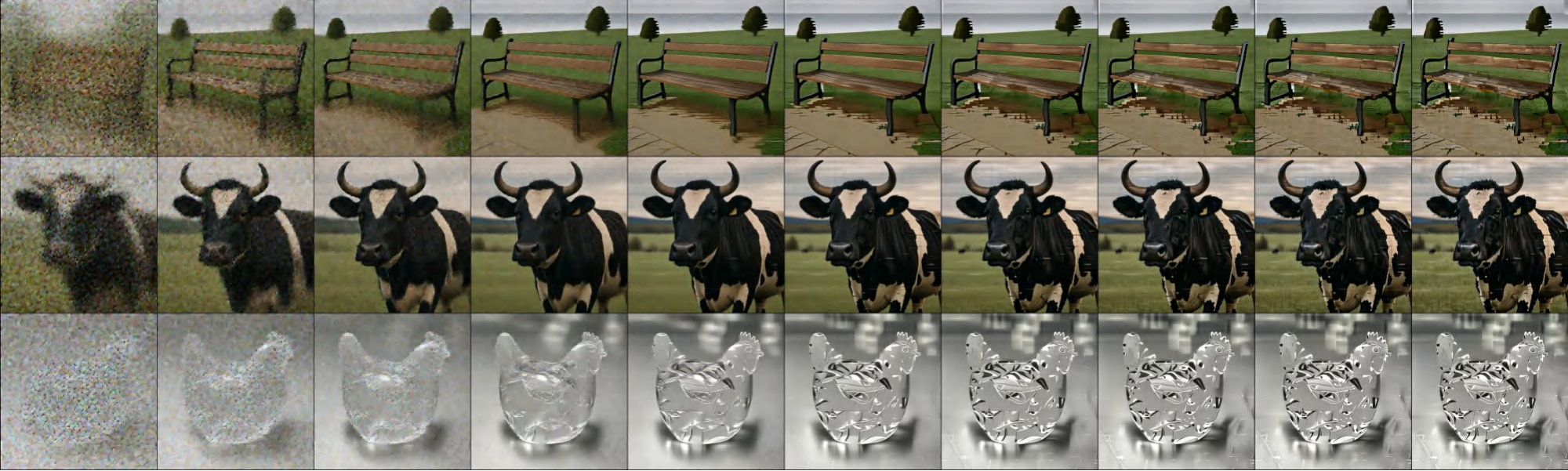}
    \end{subfigure}
    \caption{\small
        \textbf{Qualitative Comparison of 512x512 in 20B Full Parameter Tuning of sCM methods and BLIP-3o dataset from NFE=1 to NFE=20.}
    }
    \vspace{-10pt}
\end{figure}

\begin{figure}[!h]
    \centering
    \begin{subfigure}{\textwidth}
        \centering
        \includegraphics[width=\linewidth]{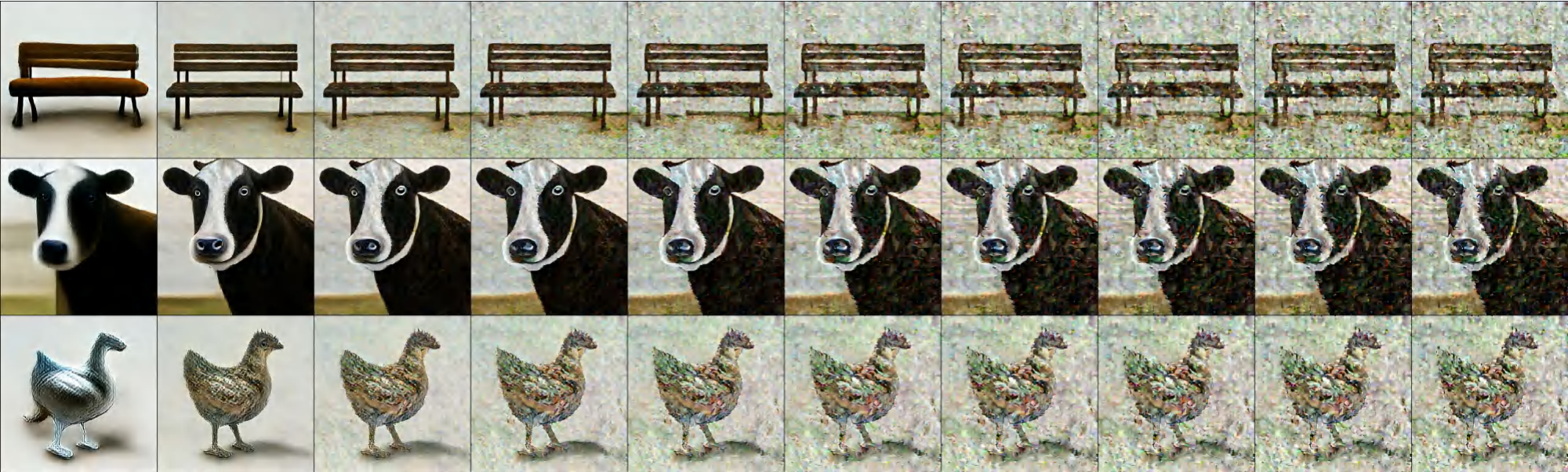}
    \end{subfigure}
    \caption{\small
        \textbf{Qualitative Comparison of 512x512 in 20B Full Parameter Tuning of CTM methods and BLIP-3o dataset from NFE=1 to NFE=20.}
    }
    \vspace{-10pt}
\end{figure}

\begin{figure}[!htbp]
    \centering
    \begin{subfigure}{\textwidth}
        \centering
        \includegraphics[width=\linewidth]{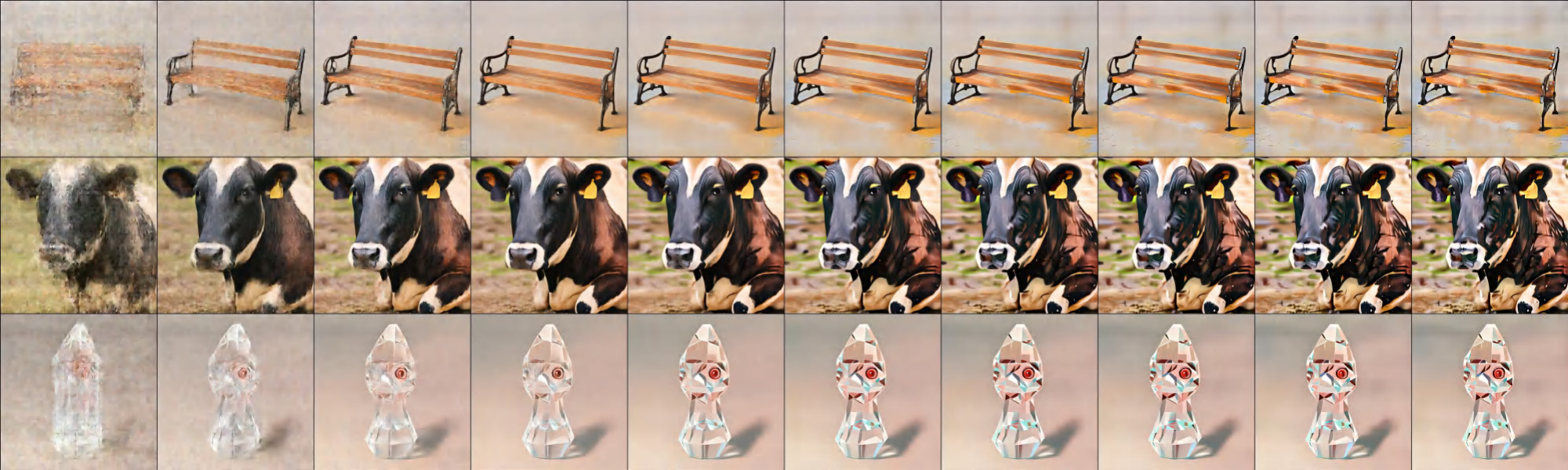}
    \end{subfigure}
    \caption{\small
        \textbf{Qualitative Comparison of 512x512 in 20B Full Parameter Tuning of MeanFlow methods and BLIP-3o dataset from NFE=1 to NFE=20.}
    }
    \vspace{-10pt}
\end{figure}

\end{document}